\documentclass{article}

\usepackage{microtype}
\usepackage{graphicx}
\usepackage{subcaption}
\usepackage{booktabs} % for professional tables

\usepackage[pagebackref=true]{hyperref}

\usepackage[accepted]{icml2026}

\usepackage{amsmath}
\usepackage{amssymb}
\usepackage{mathtools}
\usepackage{amsthm}
\usepackage{duckuments}
\usepackage{tcolorbox}
\usepackage{multirow}

\usepackage{array}
\usepackage{booktabs}
\usepackage{bm}

\usepackage{tabularx}
\usepackage{makecell}
\usepackage{graphicx}
\usepackage[table,xcdraw,dvipsnames]{xcolor}
\usepackage{soul}
\usepackage{multirow}
\usepackage{wrapfig}
\usepackage[framemethod=TikZ]{mdframed}
\usepackage[capitalise,  nameinlink]{cleveref}  
\usepackage{enumitem}

\usepackage{tikz}
\usepackage{array}
\usepackage{booktabs}
\usetikzlibrary{patterns, backgrounds}
\newcommand{\M}{\mathcal{M}}

\newcommand{\net}{\texttt{net}}
\newcommand\method{{\textsc{RMF}}{}}
\newcommand\Proj{\mathrm{Proj}}
\renewcommand{\d}{\mathrm{d}}

\tcbset{halfmybox/.style={colback=gray!3, colframe=gray!3, width=0.50\textwidth, boxrule=0pt,
    top=2pt, bottom=2pt, left=1pt, right=1pt, arc=0mm, before upper=\setlength{\parindent}{0em}\everypar{{\setbox0\lastbox}\everypar{}}}}
\newtcolorbox{halfmybox}[1][]{halfmybox,#1}

\tcbset{fullmybox/.style={colback=gray!5, colframe=gray!5, width=0.98\textwidth, boxrule=0pt,
    top=5pt, bottom=5pt, left=5pt, right=5pt, arc=0mm, before upper=\setlength{\parindent}{0em}\everypar{{\setbox0\lastbox}\everypar{}}}}
\newtcolorbox{fullmybox}[1][]{fullmybox,#1}

\newcommand{\sg}{\mathrm{sg}}

\definecolor{highlightblue}{HTML}{D3E2F0}
\definecolor{highlightcyan}{HTML}{DFF1FB}
\definecolor{highlightpink}{HTML}{E5A1B0}
\definecolor{highlightgreen}{HTML}{D6ECE4}
\definecolor{highlightpink}{HTML}{F5E4EE}
\definecolor{highlightyellow}{HTML}{FBEBD5}
\definecolor{highlightorange}{HTML}{F8DFD2}
\definecolor{highlightgrey}{HTML}{DDDDDD}
\newcommand{\hlyellow}[1]{\sethlcolor{highlightyellow}\hl{#1}}
\newcommand{\hlblue}[1]{\sethlcolor{highlightblue}\hl{#1}}
\newcommand{\hlpink}[1]{\sethlcolor{highlightpink}\hl{#1}}
\newcommand{\hlcyan}[1]{\sethlcolor{highlightcyan}\hl{#1}}
\newcommand{\hlgreen}[1]{\sethlcolor{highlightgreen}\hl{#1}}
\newcommand{\hlgrey}[1]{\sethlcolor{highlightgrey}\hl{#1}}

\tcbuselibrary{skins,breakable}
\newtcolorbox{examplebox}{
    breakable,
    enhanced,
    colback=gray!5!white,
    colframe=gray!5!white,
    boxrule=0pt,
    arc=0mm,
    left=6pt,
    right=6pt,
    top=6pt,
    bottom=6pt,
}

\crefname{equation}{Eq.}{Eqs.}
\crefname{section}{Sec.}{Secs.}
\crefname{corollary}{Cor.}{Cors.}
\crefname{proposition}{Prop.}{Props.}
\crefname{appendix}{App.}{Apps.}
\crefname{theorem}{Thm.}{Thms.}
\crefname{figure}{Fig.}{Figs.}
\crefname{tabular}{Tab.}{Tabs.}

\mdfdefinestyle{MyFrame2}{%
    linecolor=white,
    outerlinewidth=0pt,
    roundcorner=5pt,
    innertopmargin=8pt,
    innerbottommargin=8pt,
    innerrightmargin=8pt,
    innerleftmargin=8pt,
    backgroundcolor=black!3!white}

\definecolor{citeblue}{HTML}{00671A} 
\definecolor{citelightblue}{HTML}{0072B2} 

\hypersetup{
	colorlinks=true,       % false: boxed links; true: colored links
	linkcolor=citeblue,        % color of internal links
	citecolor=citeblue,        % color of links to bibliography
	filecolor=citeblue,     % color of file links
	urlcolor=citeblue         
}
\theoremstyle{plain}
\newtheorem{theorem}{Theorem}[section]
\newtheorem{proposition}{Proposition}[section]
\crefname{proposition}{Prop.}{Props.}

\theoremstyle{definition}
\newtheorem{definition}{Definition}[section]

\theoremstyle{remark}

\newtheorem{example}[theorem]{Example}

\usepackage[textsize=tiny]{todonotes}

\icmltitlerunning{Riemannian MeanFlow}

\newcommand{\icmlEqualAdvising}{\textsuperscript{*}Equal advising}

\begin{document}

\twocolumn[
  \icmltitle{Riemannian MeanFlow}

  % It is OKAY to include author information, even for blind submissions: the
  % style file will automatically remove it for you unless you've provided
  % the [accepted] option to the icml2026 package.

  % List of affiliations: The first argument should be a (short) identifier you
  % will use later to specify author affiliations Academic affiliations
  % should list Department, University, City, Region, Country Industry
  % affiliations should list Company, City, Region, Country

  % You can specify symbols, otherwise they are numbered in order. Ideally, you
  % should not use this facility. Affiliations will be numbered in order of
  % appearance and this is the preferred way.
  \icmlsetsymbol{equal}{*}
  \icmlsetsymbol{equal_advising}{*}

  \begin{icmlauthorlist}
    \icmlauthor{Dongyeop Woo}{kaist}
    \icmlauthor{Marta Skreta}{mila,montreal}
    \icmlauthor{Seonghyun Park}{kaist}
    \icmlauthor{Kirill Neklyudov}{mila,montreal,ic,equal_advising}
    \icmlauthor{Sungsoo Ahn}{kaist,equal_advising}
    % \icmlauthor{Firstname6 Lastname6}{sch,yyy,comp}
    % \icmlauthor{Firstname7 Lastname7}{comp}
    %\icmlauthor{}{sch}
    % \icmlauthor{Firstname8 Lastname8}{sch}
    % \icmlauthor{Firstname8 Lastname8}{yyy,comp}
    %\icmlauthor{}{sch}
    %\icmlauthor{}{sch}
  \end{icmlauthorlist}

  \icmlaffiliation{kaist}{Korea Advanced Institute of Science and Technology (KAIST)}
  \icmlaffiliation{mila}{Mila - Quebec AI Institute}
  \icmlaffiliation{montreal}{Université de Montréal}
  \icmlaffiliation{ic}{Institut Courtois}

  \icmlcorrespondingauthor{Dongyeop Woo}{dongyeop.woo@kaist.ac.kr}
  % \icmlcorrespondingauthor{Firstname2 Lastname2}{first2.last2@www.uk}

  % You may provide any keywords that you find helpful for describing your
  % paper; these are used to populate the "keywords" metadata in the PDF but
  % will not be shown in the document
  \icmlkeywords{Machine Learning, ICML}

  \vskip 0.3in
]

% this must go after the closing bracket ] following \twocolumn[ ... 

% This command actually creates the footnote in the first column listing the
% affiliations and the copyright notice. The command takes one argument, which
% is text to display at the start of the footnote. The \icmlEqualContribution
% command is standard text for equal contribution. Remove it (just {}) if you
% do not need this facility.

% Use ONE of the following lines. DO NOT remove the command.
% If you have no special notice, KEEP empty braces:
\printAffiliationsAndNotice{\icmlEqualAdvising}  % no special notice (required even if empty)
% Or, if applicable, use the standard equal contribution text:
% \printAffiliationsAndNotice{\icmlEqualContribution}

\begin{abstract}
Diffusion and flow models have become the dominant paradigm for generative modeling on Riemannian manifolds, with successful applications in protein backbone generation and DNA sequence design. However, these methods require tens to hundreds of neural network evaluations at inference time, which can become a computational bottleneck in large-scale scientific sampling workflows. We introduce Riemannian MeanFlow~(RMF), a framework for learning flow maps directly on manifolds, enabling high-quality generations with as few as one forward pass. We derive three equivalent characterizations of the manifold average velocity (Eulerian, Lagrangian, and semigroup identities), and analyze parameterizations and stabilization techniques to improve training on high-dimensional manifolds. In promoter DNA design and protein backbone generation settings, RMF achieves comparable sample quality to prior methods while requiring up to 10$\times$ fewer function evaluations. Finally, we show that few-step flow maps enable efficient reward-guided design through reward look-ahead, where terminal states can be predicted from intermediate steps at minimal additional cost.
\end{abstract}

\section{Introduction}
Many scientific data types possess intrinsic geometric structure that is not faithfully captured by Euclidean representations. For example, protein backbones are naturally described as sequences of rigid-body transformations, where each residue frame encodes both position and orientation~\citep{jumper2021highly, watson2022broadly, yim2023fast, pmlr-v202-yim23a, bose2023se}. DNA and RNA sequences are distributions over nucleotides constrained to the probability simplex~\citep{stark2024dirichlet, davis2024fisher, cheng2024categorical}, while molecular conformations are parameterized by torsion angles lying on circles~\citep{jing2022torsional}. Naively embedding such data in $\mathbb{R}^d$ ignores these constraints, leading to invalid samples, e.g., unnormalized token probabilities or discontinuous angles, and inefficient learning.
\begin{figure}
    \centering
    \includegraphics[width=.99\linewidth]{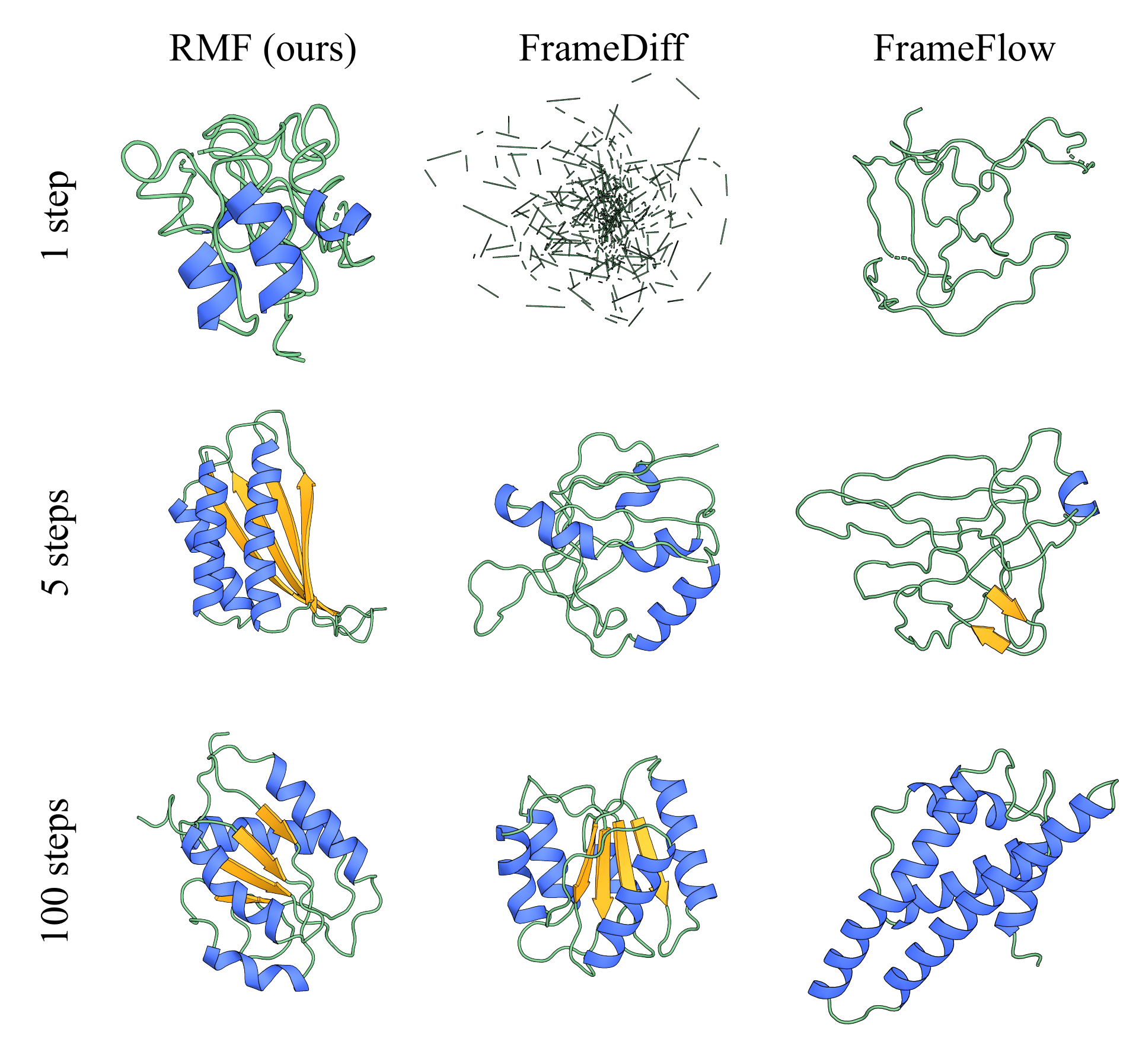}
    \caption{Protein backbone samples from RMF (ours), FrameDiff, and FrameFlow for different inference budgets. RMF produces well-formed structures in one step, while baselines require more.
    } 
    \vspace{-10pt}
    \label{fig:overview}
\end{figure}

Riemannian geometry provides a natural mathematical framework for modeling such data. By defining generative models directly on the appropriate manifold, such as $\mathrm{SE}(3)^N$ for protein backbones or the simplex $\Delta^{d-1}$ for sequences, geometric constraints are satisfied by construction. Building on the success of diffusion models and flow matching in Euclidean settings \citep{song2020score, lipman2022flow, albergo2023stochastic}, recent work has extended these continuous-time generative models to Riemannian manifolds, learning vector fields that transport noise to data along geodesic paths \citep{huang2022riemannian, de2022riemannian, chen2023flow}. These geometric generative models have achieved notable success in protein backbone generation \citep{yim2023fast,pmlr-v202-yim23a,bose2023se} and DNA sequence design \citep{davis2024fisher, cheng2024categorical}.

Despite this progress, the inference cost of manifold generative models remains a critical bottleneck. Sampling requires numerically integrating an ODE or SDE along the manifold, often demanding tens to hundreds of neural network evaluations \citep{song2020score, lipman2022flow}. This computational burden is particularly problematic in scientific design pipelines, where generative models serve not as one-off samplers, but as a proposal mechanism within iterative loops, e.g., for property-guided optimization~\citep{yang2020improving, pacesa2024bindcraft, han2025invdesflow}. When each proposal requires hundreds of forward passes, the scope of exploration can become limited.

In Euclidean settings, this challenge has driven extensive research into \emph{few-step} generation. Consistency models \citep{pmlr-v202-song23a, song2023improved} enforce self-consistency along trajectory paths, while flow map methods \citep{boffi2024flow, geng2025mean, zhou2025terminal, guo2025splitmeanflow, boffi2025build} learn to transport between arbitrary time points via average-velocity regression. These approaches achieve high-quality generation with far fewer function evaluations, narrowing the gap with multi-step methods. However, these methods remain underexplored in the Riemannian manifold setting, limiting their applicability in scientific domains. 

\paragraph{Contributions.} To address this gap, we present Riemannian MeanFlow (\method{}), a framework for few-step generation on Riemannian manifolds. Our contributions are:
\begin{enumerate}[leftmargin=*]
    \item \textbf{Riemannian MeanFlow identities and objectives:} 
    % We derive Riemannian generalizations of the Euclidean flow map identities~\citep{boffi2025build}, obtaining equivalent characterizations (Eulerian, Lagrangian, and semigroup); we note that \citet{davis2025generalised} independently derived analogous characterizations. Crucially, our formulation allows for a different placement of the stop-gradient operator, which completely eliminates the need to evaluate higher-order derivatives in the objectives. This distinction has significant implications for scalability: it reduces memory usage by nearly 2× and enables more stable training in high-dimensional settings (\cref{appx:comparison_to_gfm}).
    We derive Riemannian generalizations of the Euclidean flow map identities~\citep{boffi2025build}, obtaining theoretically equivalent characterizations (Eulerian, Lagrangian, and semigroup); we note that \citet{davis2025generalised} independently derived analogous characterizations. Crucially, our formulation allows for a different placement of the stop-gradient operator, which completely eliminates the need to evaluate higher-order derivatives in the training objectives. This distinction has significant implications for scalability: it reduces memory usage by nearly 2× and enables more stable training in high-dimensional settings (\cref{appx:comparison_to_gfm}).
    \item \textbf{Scalable parameterization:}
    We propose $x_1$-prediction for manifold flow maps, where the network predicts a manifold-valued endpoint rather than a tangent vector. In our experiments, $x_1$-prediction performs comparably or better than $v$-prediction and scales well to high dimensions (up to $D=2048$), making it compatible with existing scientific architectures that output manifold-valued points. We identify that the best combination for stable training on high-dimensional manifolds is the semigroup objective with $x_1$-prediction.
    \item \textbf{Applications in scientific design:} 
    Recent works have generalized consistency models \citep{cheng2025riemannian} and flow-map learning \citep{davis2025generalised} to manifolds, but validation remains limited to low-dimensional benchmarks ($\mathbb{S}^2$, $\mathrm{SO}(3)$, torsion angles). We demonstrate that \method{} can scale to real scientific tasks, showing that few-step generation is viable in high dimensions. On DNA promoter design (simplex in $\mathbb{R}^{1024 \times 4}$) and protein backbone generation ($\mathrm{SE}(3)^N$ with $N$ up to 128), \method{} matches the performance of the state-of-the-art multi-step generative models~\citep{yim2023fast, davis2024fisher} with up to $10\times$ fewer function evaluations. We further demonstrate reward-guided generation via reward look-ahead, which has not been previously shown on manifolds.
\end{enumerate}
\section{Riemannian MeanFlow Identities}
\subsection{Background on Riemannian Geometry}

\label{sec:background}

We consider a smooth, connected Riemannian manifold $\mathcal{M}$ and its Riemannian metric $g$. At each point $x \in \mathcal{M}$, the tangent space $T_x\mathcal{M}$ is a vector space of velocities with inner product $\langle \cdot, \cdot \rangle_g$ and norm $\|\cdot\|_g$. The disjoint union of all tangent spaces forms a tangent bundle $T\mathcal{M} := \bigsqcup_{x \in \mathcal{M}} T_x\mathcal{M}$. For manifolds embedded in ambient $d$-dimensional Euclidean space $\M \subset \mathbb{R}^d$, we let $\Proj_x:\mathbb{R}^d \rightarrow T_x\mathcal{M}$ denote tangential projection of vectors from the ambient space onto the tangent space at $x$. We provide a self-contained tutorial on Riemannian geometry in \Cref{appx:riemannian_tutorial} and summarize key concepts needed for our framework below.

\textbf{Exponential and logarithmic maps.}
The exponential map $\exp_x: T_x\mathcal{M} \to \mathcal{M}$ takes a tangent vector $v$ and returns the endpoint of the unit-time geodesic (locally shortest path) starting at $x$ with initial velocity $v$. Its local inverse, the logarithmic map $\log_x: \mathcal{M} \to T_x\mathcal{M}$, returns the initial velocity needed to reach a target point. 

\textbf{Covariant derivatives.}
To realize differentiation of vector fields on the manifold, we use the Levi-Civita connection~$\nabla$. For a vector field $V(t)$ along a curve $\gamma(t)$, the covariant derivative $D_t V \coloneqq \nabla_{\dot{\gamma}} V$ measures how the vector field $V$ changes along the curve $\gamma$ while accounting for the manifold's curvature. For a manifold embedded in Euclidean space, this corresponds to projecting the standard Euclidean derivative onto the tangent space: $D_t V = \Proj_{\gamma(t)}(\frac{d}{dt}V)$.

\textbf{Derivatives of the logarithmic map.}
Since $\log: \mathcal{M} \times \mathcal{M} \to T\mathcal{M}$ is a function of two arguments, we distinguish its partial derivatives: $\nabla^1_v \log_x(y)$ denotes the covariant derivative with respect to the first argument $x$ in direction $v$, while $d(\log_x)_y[w]$ denotes the differential with respect to the second argument $y$ in direction $w$. These quantities appear in our training objectives and have closed-form expressions for manifolds of interest (spheres, simplices, $\mathrm{SE}(3)$); see \Cref{appx:riemannian_tutorial} for details.

\subsection{Flow Maps and Average Velocity on Manifolds}
\label{sec:flow_maps}
We now define the central objects of our framework. Our goal is to learn a generative model that transports a prior distribution $p_0$ to a data distribution $p_1$. This transport is governed by the ODE $\frac{d x_t}{d t} = v_t(x_t)$, where $v_t:\mathcal{M}\to T\mathcal{M}$ is a time-dependent vector field. We assume a chosen interpolant determines a family of intermediate marginals $\{p_t\}_{t\in[0,1]}$, where $p_t$ denotes the distribution at time $t$. In this setting, rather than learning $v_t$ and integrating it at inference time (as in flow matching), we directly learn the \emph{flow map} that transports from one time point to another.

\smallskip
\begin{definition}[Integral curve]
\label{def:integral_curve}
Given a time-dependent vector field $v: [0,1] \times \mathcal{M} \to T\mathcal{M}$, an integral curve is a smooth path $x: [0,1] \to \mathcal{M}$ satisfying $\frac{d}{dt} x_t = v_t(x_t)$. We use the notation $x_s, x_t, x_r$ to denote points on the same integral curve at times $s, t, r$ respectively.
\end{definition}

\smallskip
\begin{definition}[Flow map]
\label{def:flow_map}
The flow map $\Phi_{s,t}: \mathcal{M} \to \mathcal{M}$ of a vector field $v_t$ is the mapping that transports points along integral curves: $\Phi_{s,t}(x_s) = x_t$ for any integral curve $(x_t)_{t \in [0,1]}$. The flow map satisfies the semigroup property $\Phi_{r,t} \circ \Phi_{s,r} = \Phi_{s,t}$, which states that flowing from $s$ to $r$ and then from $r$ to $t$ agrees with the direct flow from $s$ to $t$. 
\end{definition}
In Euclidean space, the flow map can be parameterized through the \emph{average velocity}: the constant velocity that, if maintained from time $s$ to $t$, would transport $x_s$ to the same final point $x_t$. On manifolds, constant-velocity motion corresponds to traveling along geodesics. The Euclidean difference $x_t - x_s$ generalizes to the logarithmic map $\log_{x_s} x_t$, leading to the following definition:

\smallskip
\begin{definition}[Average velocity]
\label{def:avg_velocity}
The average velocity $u_{s,t}: \mathcal{M} \to T\mathcal{M}$ for a vector field $v_t$ is defined as
\begin{equation}
\label{eq:avg_velocity}
u_{s,t}(x_s) = 
\begin{cases}
\dfrac{1}{t-s} \log_{x_s} x_t, & t \neq s, \\[6pt]
v_s(x_s), & t = s,
\end{cases}
\end{equation}
for any integral curve $(x_t)_{t \in [0,1]}$ and times $s, t \in [0,1]$.
\end{definition}

Geometrically, $u_{s,t}(x_s)$ is the constant velocity that would transport $x_s$ to $x_t$ over time of $t-s$ along a geodesic. The flow map can be recovered via the exponential map:
\begin{equation}
\label{eq:flow_from_avg}
\Phi_{s,t}(x) = \exp_{x}\bigl((t-s) \, u_{s,t}(x)\bigr), \quad \forall~s,t\in[0,1].
\end{equation}

\subsection{Riemannian MeanFlow Identities}
\label{sec:identities}
We present three equivalent characterizations of the average velocity. Each identity provides a necessary and sufficient condition: any vector field satisfying the identity must be the true average velocity. These identities form the basis of our training objectives in \Cref{subsec:objectives}. The first two identities are obtained from differentiating the defining relation:
\begin{equation}
\label{eq:characterization_of_avg_vel}
    (t-s)\,u_{s,t}(x_s) = \log_{x_s} x_t
\end{equation}
with respect to either $s$ or $t$. Following  conventions in fluid mechanics, we call differentiation with respect to the source time $s$ the \emph{Eulerian} perspective, and differentiation with respect to the target time $t$ the \emph{Lagrangian} perspective.

\begin{mdframed}[style=MyFrame2]
\begin{proposition}[Eulerian RMF]
\label{prop:eulerian}
A vector field $u_{s,t}: \M \to T\M$ is the average velocity associated with $v_t$ \textbf{if and only if} it satisfies
\begin{equation}
\label{eq:avg_vel_eulerian}
    u_{s,t}(x_s) = (t-s)\, D_s u_{s,t}(x_s) - \nabla^1_{v_s} \log_{x_s} x_t,
\end{equation}
for any integral curve $(x_t)_{t \in [0,1]}$ and any $s,t \in [0,1]$.
\end{proposition}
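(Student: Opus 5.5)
The plan is to prove both directions by differentiating the defining relation \eqref{eq:characterization_of_avg_vel} in the source time $s$, with the target $t$ and the integral curve fixed. Throughout, $D_s u_{s,t}(x_s)$ means the covariant derivative along $s\mapsto x_s$ of the vector field $s\mapsto u_{s,t}(x_s)\in T_{x_s}\M$. This is a total derivative: it includes both the explicit $s$-dependence and the motion of the base point with velocity $v_s(x_s)$. I will assume that $x_s$ and $x_t$ stay within the injectivity radius of each other, so that $\log$ is smooth where it is used.

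\emph{Necessity.} Suppose $u$ is the average velocity. Then for every $s$ we have $(t-s)\,u_{s,t}(x_s)=\log_{x_s}x_t$ along the curve.
\begin{itemize}
\item Apply $D_s$ to both sides.
\item On the left, the Leibniz rule for the Levi-Civita connection gives $-u_{s,t}(x_s)+(t-s)\,D_s u_{s,t}(x_s)$.
\item On the right, $x_t$ does not depend on $s$, so only the first argument of $\log$ moves. Its velocity is $\dot x_s=v_s(x_s)$, so the derivative is $\nabla^1_{v_s}\log_{x_s}x_t$. This is exactly the chain rule for the partial covariant derivative in the first slot.
\end{itemize}
Rearranging gives \eqref{eq:avg_vel_eulerian}. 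At $s=t$, $\log_{x_t}x_t=0$ and $\nabla^1_v\log_x x|_{y=x}=-v$. The identity then reads $u_{t,t}=v_t$, which agrees with the $t=s$ case of \cref{def:avg_velocity}. The case $s=t$ can also be obtained by continuity.

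\emph{Sufficiency.} Suppose $u$ satisfies \eqref{eq:avg_vel_eulerian} for all $s,t$ and all integral curves. Fix $t$ and a curve, and define the residual
\[
R(s)\coloneqq (t-s)\,u_{s,t}(x_s)-\log_{x_s}x_t\in T_{x_s}\M .
\]
By the same Leibniz and chain-rule computation,
\[
D_sR(s)=-u_{s,t}(x_s)+(t-s)\,D_su_{s,t}(x_s)-\nabla^1_{v_s}\log_{x_s}x_t ,
\]
and the hypothesis makes this vanish. So $R$ is parallel along the curve $s\mapsto x_s$. At $s=t$ we get $R(t)=0\cdot u_{t,t}(x_t)-\log_{x_t}x_t=0$. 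A parallel field along a curve is determined by its value at one point, because parallel transport is a linear isomorphism. Hence $R\equiv 0$, so $(t-s)\,u_{s,t}(x_s)=\log_{x_s}x_t$ for all $s$. Dividing by $t-s$ recovers \cref{def:avg_velocity} for $s\neq t$.

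The diagonal $s=t$ needs a separate argument. Evaluating the hypothesis at $s=t$ gives $u_{t,t}=0-\nabla^1_{v_t}\log_{x}x|_{x=x_t}=v_t(x_t)$, which is the $t=s$ case of the definition.

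\emph{Main obstacle.} The delicate step is the derivative of $\log$ in its first argument. Two facts must be justified:
\begin{itemize}
\item the chain rule $D_s[\log_{x_s}y]=\nabla^1_{\dot x_s}\log_{x_s}y$ for a fixed $y$;
\item the diagonal value $\nabla^1_v\log_x y|_{y=x}=-v$.
\end{itemize}
For the first, I would write $\log$ as a section of the pullback bundle over $\M\times\M$ and use the definition of the partial connection in the first slot. In the embedded picture this is the statement that $\Proj_{x_s}\frac{d}{ds}\log_{x_s}y$ equals that partial derivative. For the second, I would use $\log_x y=-\,\mathrm{grad}_x\tfrac12 d(x,y)^2$ and the fact that the Hessian of $\tfrac12 d^2(\cdot,y)$ equals the identity at $x=y$. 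The remaining concern is regularity, namely staying away from the cut locus. I would handle it by stating the result on the domain where $\log$ is smooth.
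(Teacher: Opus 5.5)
Your proposal is correct and follows the paper's proof essentially step for step. For necessity, you apply $D_s$ to $(t-s)\,u_{s,t}(x_s)=\log_{x_s}x_t$ and identify the right-hand side with $\nabla^1_{v_s}\log_{x_s}x_t$. For sufficiency, you show the residual $(t-s)\,u_{s,t}(x_s)-\log_{x_s}x_t$ is parallel along $s\mapsto x_s$ and vanishes at $s=t$, hence vanishes identically. Your explicit treatment of the diagonal case, using $\nabla^1_v\log_x y=-v$ at $y=x$, is a small addition; the paper instead absorbs that case into its standing assumption that $u_{s,t}$ is smooth.
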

\end{mdframed}

\begin{proof}[Proof sketch]
\vspace{-10pt}
Differentiating \cref{eq:characterization_of_avg_vel} with respect to $s$ gives
\begin{equation}
    -u_{s,t}(x_s) + (t-s)\,D_s u_{s,t}(x_s) = D_s(\log_{x_s} x_t),
\end{equation}
where covariant derivatives appear due to differentiating vector fields along the integral curve at $x_{s}$.
The right-hand side equals $\nabla^1_{v_s}\log_{x_s} x_t$ by the covariant chain rule, where $v_s = \frac{d}{ds}x_s$ is the velocity along the curve. Rearranging yields \cref{eq:avg_vel_eulerian}. See \cref{appx:proof_identities} for the complete proof.
\end{proof}

\begin{mdframed}[style=MyFrame2]
\begin{proposition}[Lagrangian RMF]
\label{prop:lagrangian}
A vector field $u_{s,t}: \M \to T\M$ is the average velocity associated with $v_t$ \textbf{if and only if it} satisfies
\begin{equation}
\label{eq:avg_vel_lagrangian}
    u_{s,t}(x_s) = d(\log_{x_s})_{x_t}[v_t] - (t-s)\, \partial_t u_{s,t}(x_s),
\end{equation}
for any integral curve $(x_t)_{t \in [0,1]}$ and any $s,t \in [0,1]$.
\end{proposition}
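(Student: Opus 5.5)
The argument mirrors the Eulerian case (\cref{prop:eulerian}), but we differentiate the defining relation \cref{eq:characterization_of_avg_vel} in the target time $t$, with $x_s$ held fixed.

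\textbf{Necessity.} Suppose $u$ is the true average velocity. Fix $s$ and $x_s$, and let $(x_t)$ be the integral curve through $x_s$. The left-hand side $(t-s)\,u_{s,t}(x_s)$ is, for every $t$, a vector in the single vector space $T_{x_s}\M$, so $t$-differentiation is an ordinary derivative and no connection is needed. It equals $u_{s,t}(x_s) + (t-s)\,\partial_t u_{s,t}(x_s)$. The right-hand side $\log_{x_s} x_t$ is the composition of the fixed map $\log_{x_s}:\M\to T_{x_s}\M$ with the curve $t\mapsto x_t$. By the ordinary chain rule its derivative is $d(\log_{x_s})_{x_t}[\dot x_t] = d(\log_{x_s})_{x_t}[v_t(x_t)]$, using \cref{def:integral_curve}. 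Equating and rearranging gives \cref{eq:avg_vel_lagrangian}.

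At $t=s$ the identity reduces to $u_{s,s}(x_s) = d(\log_{x_s})_{x_s}[v_s]$. This holds because $d(\log_x)_x=\mathrm{id}$ and $u_{s,s}=v_s$ by \cref{def:avg_velocity}. Throughout, we assume $x_t$ stays within the domain where $\log_{x_s}$ is smooth, i.e.\ the standing assumption implicit in \cref{def:avg_velocity}.

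\textbf{Sufficiency.} Let $\tilde u$ satisfy \cref{eq:avg_vel_lagrangian}, and set $w(t) := (t-s)\,\tilde u_{s,t}(x_s) \in T_{x_s}\M$. The identity then says exactly
\begin{equation*}
\frac{d}{dt} w(t) = \frac{d}{dt}\log_{x_s} x_t .
\end{equation*}
Both sides are curves in the fixed space $T_{x_s}\M$, and both vanish at $t=s$: $w(s)=0$ since $\tilde u_{s,s}(x_s)$ is finite, and $\log_{x_s}x_s=0$. Hence $w(t)=\log_{x_s}x_t$ for all $t$, which gives $\tilde u_{s,t}=u_{s,t}$ for $t\neq s$.

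For $t=s$, set $t=s$ directly in \cref{eq:avg_vel_lagrangian}. The $\partial_t$ term carries the factor $(t-s)=0$, so $\tilde u_{s,s}(x_s)=d(\log_{x_s})_{x_s}[v_s]=v_s(x_s)$. Since every point $x_s$ lies on some integral curve, $\tilde u_{s,t}$ agrees with $u_{s,t}$ everywhere.

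\textbf{Main obstacle.} The only delicate points are regularity and well-definedness. First, $\partial_t \tilde u_{s,t}$ must exist; this is implicitly assumed, since it appears in the identity itself. Second, $\log_{x_s}$ must be differentiable along the curve, which holds away from the cut locus. The key point to state clearly is that, unlike the Eulerian case, no covariant derivative is required: the base point $x_s$ is frozen, so all objects live in one tangent space.
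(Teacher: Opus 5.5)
Your proof is correct and follows the paper's argument essentially step for step. For necessity you differentiate $(t-s)\,u_{s,t}(x_s)=\log_{x_s}x_t$ in $t$ inside the fixed space $T_{x_s}\mathcal{M}$ and apply the chain rule; for sufficiency you show that $(t-s)u_{s,t}(x_s)-\log_{x_s}x_t$ has zero $t$-derivative and vanishes at $t=s$. Your explicit handling of the diagonal $t=s$ via $d(\log_{x})_{x}=\mathrm{id}$ is a small but worthwhile addition, since the paper covers that case only through a blanket smoothness assumption on $u_{s,t}$.
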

\end{mdframed}

\begin{proof}[Proof sketch]
\vspace{-10pt}
Differentiating \cref{eq:characterization_of_avg_vel} with respect to $t$ gives
\begin{equation}
    u_{s,t}(x_s) + (t-s)\,\partial_t u_{s,t}(x_s) = d(\log_{x_s})_{x_t}[v_t],
\end{equation}
where the right-hand side is the differential of $\log_{x_s}$ at $x_t$ applied to  $v_t = \frac{d}{dt}x_t$. Rearranging yields \Cref{eq:avg_vel_lagrangian}. For the complete proof, refer to \Cref{appx:proof_identities}.
\end{proof}

\vspace{-5pt}
The third identity is algebraic rather than differential, following directly from the semigroup property $\Phi_{r,t} \circ \Phi_{s,r} = \Phi_{s,t}$. We provide the proof in \cref{appx:proof_identities}.
\begin{mdframed}[style=MyFrame2]
\begin{proposition}[Semigroup RMF]
\label{prop:semigroup}
A vector field $u_{s,t}: \M \to T\M$ is the average velocity associated with $v_t$ \textbf{if and only if} the following two conditions hold:

\begin{enumerate}[leftmargin=*, itemsep=0pt, topsep=2pt]
\item[(i)] Boundary condition: $u_{s,s}(x) = v_s(x),\,\,\forall\, x \in \M$;
\item[(ii)] Semigroup consistency: for any $s \neq t$ and $r \in [s,t]$,
    \begin{equation}
    \label{eq:avg_vel_semigroup}
        u_{s,t}(x_s) = \frac{1}{t-s} \log_{x_s} \Phi_{r,t}\bigl(\Phi_{s,r}(x_s)\bigr),
    \end{equation}
\end{enumerate}
where $\Phi_{s,t}(x) \coloneqq \exp_{x}\!\bigl((t-s)\,u_{s,t}(x)\bigr)$ is the flow map induced by $u_{s,t}$.
\end{proposition}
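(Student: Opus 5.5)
The plan is to prove the two directions separately. Throughout, I assume $\exp$ and $\log$ are mutually inverse along the relevant geodesics, i.e.\ points are within the injectivity radius. This is the same standing assumption under which \cref{def:avg_velocity} and \cref{eq:flow_from_avg} make sense.

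\emph{Necessity.} Let $u_{s,t}$ be the true average velocity of $v_t$. The boundary condition (i) holds by the $t=s$ case of \cref{def:avg_velocity}. For (ii), combine \cref{eq:flow_from_avg} with the definition: the induced map $\Phi_{s,t}(x)=\exp_x((t-s)u_{s,t}(x))=\exp_x(\log_x x_t)=x_t$ is exactly the true flow map of $v_t$. The true flow map satisfies the semigroup property of \cref{def:flow_map}, so $\Phi_{r,t}(\Phi_{s,r}(x_s))=\Phi_{s,t}(x_s)=x_t$. Substituting into the right-hand side of \cref{eq:avg_vel_semigroup} gives $\frac{1}{t-s}\log_{x_s}x_t=u_{s,t}(x_s)$. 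The degenerate choices $r=s$ or $r=t$ are covered because $\Phi_{s,s}=\mathrm{id}$.

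\emph{Sufficiency.} Suppose $u$ satisfies (i) and (ii), and let $\Phi$ be its induced map. I will show that $\Phi$ equals the true flow map $\Psi$ of $v$; then $u$ is the true average velocity, since $\log_{x_s}\Phi_{s,t}(x_s)=(t-s)u_{s,t}(x_s)$.

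First, (ii) together with $\exp_x\log_x=\mathrm{id}$ gives $\Phi_{s,t}=\Phi_{r,t}\circ\Phi_{s,r}$ for all $r\in[s,t]$. Fix $s$ and $x$, and set $y(t)=\Phi_{s,t}(x)$. The semigroup identity gives $y(t+h)=\Phi_{t,t+h}(y(t))=\exp_{y(t)}(h\,u_{t,t+h}(y(t)))$. Differentiating at $h=0^+$, and assuming $u$ is continuous (as in the smoothness setting of the paper), yields $\dot y(t)=u_{t,t}(y(t))=v_t(y(t))$ by (i). Also $y(s)=x$. Hence $y$ is an integral curve of $v$ through $x$ at time $s$, and uniqueness of ODE solutions (for $v$ locally Lipschitz) gives $y(t)=\Psi_{s,t}(x)$ for $t\ge s$.

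For $t<s$, the statement ``$r\in[s,t]$'' should be read as $r$ between $s$ and $t$, and the same argument applies with a left derivative. Alternatively, one uses backward uniqueness of the ODE.

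\emph{Main obstacle.} The delicate step is the derivative computation $\frac{d}{dh}\big|_{h=0}\exp_{y}(h\,u_{t,t+h}(y))=u_{t,t}(y)$, because $u$ depends on $h$ both through the scaling factor and through its own time index. I would handle it by writing $\exp_y(w(h))$ with $w(h)=h\,u_{t,t+h}(y)$. Then $w(0)=0$ and $w(h)/h\to u_{t,t}(y)$ by continuity, and since $d(\exp_y)_0=\mathrm{id}$, the difference quotient converges to $u_{t,t}(y)$. This needs only continuity of $u$ in its time arguments, not differentiability.

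Beyond this step, the proof only requires the regularity assumptions (continuity of $u$, local Lipschitz $v$, injectivity radius) to be stated explicitly.
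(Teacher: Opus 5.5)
Your proof is correct and follows the same route as the paper. Necessity comes from the semigroup property of the true flow map. For sufficiency, you show that the induced map satisfies $\Phi_{s,s}=\mathrm{Id}$ and $\tfrac{d}{dt}\Phi_{s,t}(x)=v_t(\Phi_{s,t}(x))$, differentiating $\Phi_{s,t}=\Phi_{r,t}\circ\Phi_{s,r}$ at $r=t$ with the boundary condition and $d(\exp_x)_0=\mathrm{Id}$, and then apply ODE uniqueness.

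Your version is somewhat more careful than the paper's: you need only continuity of $u$, and you flag the $t<s$ reading of ``$r\in[s,t]$''. To close the one-sidedness of your $h\to 0^+$ derivative (the paper has the same looseness), note that a continuous right derivative $t\mapsto v_t(y(t))$ of a continuous curve implies genuine differentiability. Alternatively, compute the left difference quotient from $y(t)=\Phi_{t-h,t}(y(t-h))$ using joint continuity of $u$.
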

\end{mdframed}

% The third identity is algebraic rather than differential, following directly from the semigroup property $\Phi_{r,t} \circ \Phi_{s,r} = \Phi_{s,t}$; see \cref{appx:proof_identities} for the complete proof.

% \begin{mdframed}[style=MyFrame2]
% \begin{proposition}[Semigroup RMF]
% \label{prop:semigroup} 
% A vector field $u_{s,t}: \M \to T\M$ is the average velocity associated with $v_t$ \textbf{if and only if} the following two conditions hold:

% \begin{enumerate}[leftmargin=*, itemsep=0pt, topsep=2pt]
% \item[(i)] Boundary condition: $u_{s,s}(x) = v_s(x),\,\,\forall\, x \in \M$;
% \item[(ii)] Semigroup consistency: for any $s \neq t$ and $r \in [s,t]$,
%     \begin{equation}
%     \label{eq:avg_vel_semigroup}
%         u_{s,t}(x_s) = \frac{1}{t-s}
%         \log_{x_s} \Phi_{r,t}\bigl(\Phi_{s,r}(x_s)\bigr),
%     \end{equation}
% \end{enumerate}
% where $\Phi_{s,t}(x) \coloneqq \exp_x\!\bigl((t-s)u_{s,t}(x)\bigr)$ is the flow map induced by $u_{s,t}$.
% \end{proposition}
% \end{mdframed}
\section{Flow Map Learning with Riemannian MF}\label{sec:learning}
\subsection{Training Objectives}
\label{subsec:objectives}
To learn a flow map transporting a prior $p_0$ to data distribution $p_1$, we parameterize the average velocity via a neural network $u_{s,t}^\theta: \mathcal{M} \to T\mathcal{M}$. The induced flow map is then:
\begin{equation}
\label{eq:induced_flow_map}
\Phi_{s,t}^\theta(x) \coloneqq \exp_x \bigl((t-s)u_{s,t}^\theta(x)\bigr).
\end{equation}
% We derive training objectives for learning the average velocity. Our goal is to learn a flow map that transports a prior $p_0$ to a data distribution $p_1$. We parameterize the flow map through a neural network $u_{s,t}^\theta: \mathcal{M} \to T\mathcal{M}$ representing the average velocity, with the induced flow map given by
% \begin{equation}
% \label{eq:induced_flow_map}
%     \Phi_{s,t}^\theta(x) \coloneqq \exp_x \bigl((t-s)\,u_{s,t}^\theta(x)\bigr).
% \end{equation}

\vspace{-5pt}
\textbf{From identities to objectives.} Each identity from \cref{sec:identities} yields a training objective by converting the consistency condition into a regression target.

\begin{mdframed}[style=MyFrame2,userdefinedwidth=0.50\textwidth]
\begin{proposition}[Riemannian MeanFlow objectives]
\label{prop:meanflow_objectives}
Let $u_{s,t}^\theta: \M \to T\M$ be a parameterized average velocity with induced flow map $\Phi_{s,t}^\theta$ as in \cref{eq:induced_flow_map}. The following objectives are valid for learning the average velocity:
\begin{enumerate}[leftmargin=*]
    \item \textbf{Eulerian RMF:} Sample $x_s \sim p_s$ and $s, t \sim p(s,t)$. The objective is
    \begin{equation}
    \label{eq:obj_eulerian}
        \mathcal{L}_{\mathrm{EMF}}(\theta) = \mathbb{E}_{x_s, s, t}\Bigl[\bigl\| u_{s,t}^\theta(x_s) - \sg(\hat{u}_{\mathrm{tgt}})\bigr\|_g^2 \Bigr],
    \end{equation}
    where $\hat{u}_{\mathrm{tgt}} = (t-s)\, D_s u_{s,t}^\theta(x_s)-\nabla^1_{v_s} \log_{x_s} \Phi_{s,t}^\theta(x_s)$.

    \item \textbf{Lagrangian RMF:} Sample $x_t \sim p_t$ and $s, t \sim p(s,t)$. Let $\hat{x}_s = \Phi_{t,s}^\theta(x_t)$. The objective is
    \begin{equation}
    \label{eq:obj_lagrangian}
        \mathcal{L}_{\mathrm{LMF}}(\theta) = \mathbb{E}_{x_t, s, t}\Bigl[\bigl\| u_{s,t}^\theta(\hat{x}_s) - \sg(\hat{u}_{\mathrm{tgt}})\bigr\|_g^2 \Bigr]
        + \mathcal{L}_{\mathrm{cyc}}(\theta),
    \end{equation}
    where $\hat{u}_{\mathrm{tgt}} = \d(\log_{\hat{x}_s})_{x_t}[v_t] - (t-s)\, \partial_t u_{s,t}^\theta(\hat{x}_s)$, and the cycle-consistency regularizer is defined as:
    \begin{equation}
    \label{eq:obj_cycle}
        \mathcal{L}_{\mathrm{cyc}}(\theta) =\mathbb{E}_{x_t,s,t}\bigl[d_g(\Phi_{s,t}^\theta(\Phi_{t,s}^\theta(x_t)),x_t)^2\bigr],
    \end{equation}
    where $d_g$ denotes the geodesic distance.
    \item \textbf{Semigroup RMF:} Sample $x_s \sim p_s$ and times $s, r, t \sim p(s,r,t)$. The objective is
    \begin{equation}
    \label{eq:obj_semigroup}
        \mathcal{L}_{\mathrm{SMF}}(\theta) = \mathbb{E}_{x_s, s, r, t}\Bigl[\bigl\| u_{s,t}^\theta(x_s) - \sg(\hat{u}_{\mathrm{tgt}})\bigr\|_g^2 \Bigr],
    \end{equation}
    where $\hat{u}_{\mathrm{tgt}} = \frac{1}{t-s} \log_{x_s} \Phi_{r,t}^\theta\bigl(\Phi_{s,r}^\theta(x_s)\bigr)$ for $t \neq s$, and $\hat{u}_{\mathrm{tgt}} = v_s$ for $t = s$.
\end{enumerate}
Here, $\sg(\cdot)$ denotes the stop-gradient operator.
\end{proposition}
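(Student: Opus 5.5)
The three claims share one structure, so I will treat them together. ``Valid'' will mean: the true average velocity $u$ from \cref{def:avg_velocity} is a fixed point of each stop-gradient regression (the loss is zero and stationary there), and conversely any $\theta$ with zero loss satisfies the corresponding identity. The key observation is that $\sg(\cdot)$ does not change the value of the loss, only its gradient. The gradient of each objective with respect to $\theta$ is
\[
\mathbb{E}\bigl[\langle u^\theta_{s,t}(\cdot)-\hat u_{\mathrm{tgt}},\,\partial_\theta u^\theta_{s,t}(\cdot)\rangle_g\bigr],
\]
plus the gradient of $\mathcal{L}_{\mathrm{cyc}}$ in the Lagrangian case. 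This gradient vanishes whenever the residual $u^\theta_{s,t}-\hat u_{\mathrm{tgt}}$ is zero almost surely. The proof therefore reduces to two statements: the residual vanishes identically if and only if $u^\theta$ is the true average velocity, and zero loss forces the residual to vanish on the support of the sampling distributions.

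\textbf{Eulerian and Semigroup.} Plug $u^\theta=u$ into $\hat u_{\mathrm{tgt}}$. Then $\Phi^\theta_{s,t}(x_s)=x_t$ by \cref{eq:flow_from_avg}, so the target becomes exactly the right-hand side of \cref{eq:avg_vel_eulerian} (respectively \cref{eq:avg_vel_semigroup}). By \cref{prop:eulerian} (respectively \cref{prop:semigroup}) the residual is zero, and hence the loss and its gradient are zero.

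For the converse, a zero loss gives $u^\theta=\hat u_{\mathrm{tgt}}$ for $p_s$-almost every $x_s$ and almost every $(s,t)$, or $(s,r,t)$. By continuity of the network and full support of $p_s$ and of the time distribution, the identity then holds everywhere. The ``if'' direction of the corresponding proposition concludes the argument. In the semigroup case the boundary condition (i) is enforced directly by the $t=s$ target $v_s$. In the Eulerian case, $v_s$ denotes the velocity of the integral curve of the target field through $x_s$. Replacing $x_t$ with $\Phi^\theta_{s,t}(x_s)$ is legitimate because at the fixed point these coincide. This substitution is what makes the objective computable.

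\textbf{Lagrangian.} This case is the main obstacle, since the residual is evaluated at $\hat x_s=\Phi^\theta_{t,s}(x_t)$ rather than at a sample from $p_s$. I would argue in two steps.

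\begin{enumerate}
\item The cycle loss $\mathcal{L}_{\mathrm{cyc}}=0$ forces $\Phi^\theta_{s,t}\circ\Phi^\theta_{t,s}=\mathrm{id}$ on the support of $p_t$. This means that $\hat x_s$ and $x_t$ are joined by the learned map, so $(t-s)u^\theta_{s,t}(\hat x_s)=\log_{\hat x_s}x_t$. Here I assume the points stay within the injectivity radius, so that $\log$ inverts $\exp$.
\item Vanishing of the first term then says that the Lagrangian identity \cref{eq:avg_vel_lagrangian} holds along the curve $t\mapsto x_t$ with $\hat x_s$ fixed. Reading \cref{prop:lagrangian} as an ODE in $t$ with initial condition $\log_{\hat x_s}\hat x_s=0$ at $t=s$, uniqueness gives that $(t-s)u^\theta_{s,t}(\hat x_s)$ equals $\log_{\hat x_s}$ of the true $v$-flow from $\hat x_s$. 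This identifies $u^\theta$ with $u$.
\end{enumerate}

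The delicate point is ensuring that the pushforward of $p_t$ by $\Phi^\theta_{t,s}$ covers $p_s$ so that the identity holds for all starting points. I would handle this by noting that at the fixed point $\Phi^\theta_{t,s}$ is the true flow, which pushes $p_t$ to $p_s$. I would state the result in the fixed-point sense rather than claim global convergence.
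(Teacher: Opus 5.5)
Your forward (fixed-point) direction is fine. So is the semigroup case, which matches the paper: \cref{prop:semigroup} is already phrased through the induced map $\Phi$, so zero loss yields it verbatim.

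The gap is the Eulerian converse. Zero loss gives
\[
u^\theta_{s,t}(x_s) = (t-s)\,D_s u^\theta_{s,t}(x_s) - \nabla^1_{v_s}\log_{x_s} y_s, \qquad y_s := \Phi^\theta_{s,t}(x_s).
\]
This is not \cref{eq:avg_vel_eulerian}. There, the second argument of $\log$ is the fixed endpoint $x_t$, and the converse in \cref{prop:eulerian} uses $D_s(\log_{x_s}x_t) = \nabla^1_{v_s}\log_{x_s}x_t$. That fails when the endpoint $y_s$ moves with $s$. So the ``if'' direction of \cref{prop:eulerian} does not apply. Your remark that $x_t$ and $\Phi^\theta_{s,t}(x_s)$ coincide at the fixed point presupposes exactly what must be proved.

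The missing step, which is the core of the paper's proof, runs as follows. By the product rule and $\log_{x_s}y_s=(t-s)u^\theta_{s,t}(x_s)$, the rearranged identity reads $D_s(\log_{x_s}y_s)=\nabla^1_{v_s}\log_{x_s}y_s$. Compare this with the two-argument chain rule
\[
D_s(\log_{x_s} y_s) = \nabla^1_{v_s}\log_{x_s} y_s + \mathrm{d}(\log_{x_s})_{y_s}\bigl[\tfrac{\mathrm{d}}{\mathrm{d}s} y_s\bigr].
\]
The $\nabla^1$ terms cancel, leaving $\mathrm{d}(\log_{x_s})_{y_s}\bigl[\tfrac{\mathrm{d}}{\mathrm{d}s}y_s\bigr]=0$. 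Invertibility of $\mathrm{d}\log$ then gives $\tfrac{\mathrm{d}}{\mathrm{d}s}y_s=0$, so $y_s=\Phi^\theta_{t,t}(x_t)=x_t$ for all $s$. Only at this point is the defining relation recovered.

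In the Lagrangian case, your tangent-space ODE-uniqueness argument is a legitimate variant of the paper's route. The paper instead inverts $\mathrm{d}\log$ to get $\partial_t\Phi^\theta_{s,t}(x)=v_t(\Phi^\theta_{s,t}(x))$ and uses uniqueness on $\mathcal{M}$. Your version is valid provided you close the ODE correctly. With fixed base point $x$, the curve must be the \emph{learned} one, $x_t=\Phi^\theta_{s,t}(x)=\exp_x((t-s)u^\theta_{s,t}(x))$, not the true integral curve. This requires two things: $\Phi^\theta_{t,s}(\Phi^\theta_{s,t}(x))=x$, which is cycle consistency with $s,t$ swapped (available since both orders are sampled), and $\Phi^\theta_{s,t}(x)\in\mathrm{supp}\,p_t$ for all $t$.

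Your resolution of this coverage issue is circular for the converse. You argue that at the fixed point $\Phi^\theta_{t,s}$ is the true flow pushing $p_t$ to $p_s$, which assumes the conclusion. Retreating to a ``fixed-point sense'' proves only that the true $u$ zeroes the loss. That is weaker than the claim that every global minimizer recovers $u$.

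Use instead the full-support assumption you already invoked for the Eulerian case. With $\mathrm{supp}\,p_t=\mathcal{M}$ for all $t$, cycle consistency makes $\Phi^\theta_{t,s}$ a bijection of $\mathcal{M}$. Every $x$ then arises as $\hat x_s$, and the argument covers all starting points, as in the paper.
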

\end{mdframed}
% \vspace{-10pt}
% \looseness=-1
Full proofs are in \cref{appx:proof_objectives}. The key step in deriving our objectives lies in converting the average velocity characterizations from \cref{sec:identities} into self-consistent regression targets. Specifically, the Eulerian characterization requires evaluation at $x_t$, necessitating integration over the unknown $v_t$ starting from $x_s$. Instead, we replace $x_t$ with the current model prediction $\Phi_{s, t}^{\theta}(x_s)$. The Lagrangian case is analogous, with $(x_s, x_t, \Phi_{s,t}^{\theta}(x_s))$ replaced by $(x_t, x_s, \Phi_{t,s}^{\theta}(x_t))$. For Lagrangian RMF, we add a cycle-consistency loss to encourage weak invertibility, as the regression input is model-predicted. Conversely, the semigroup identity is inherently self-consistent and directly yields a regression target.

\looseness=-1
\textbf{Stop-gradient and bias.}
The stop-gradient operator $\sg(\cdot)$ treats the target as a constant during backpropagation, preventing gradients from flowing through $\hat{u}_{\mathrm{tgt}}$. This avoids expensive higher-order derivatives (e.g., gradients through Jacobian--vector products (JVPs)) and stabilizes optimization. Importantly, this is unbiased at convergence: when $u_{s,t}^\theta$ matches the true average velocity, the underlying identity is satisfied, and the gradient of the loss vanishes, regardless of whether gradients are propagated through the target.
% \textbf{Stop-gradient and bias.}
% The stop-gradient operator $\sg(\cdot)$ treats the target as a constant during backpropagation, preventing gradients from flowing through $\hat{u}_{\mathrm{tgt}}$. This avoids computing expensive higher-order derivatives (e.g., gradients through JVPs) and stabilizes optimization. Importantly, this does not introduce bias at convergence: when $u_{s,t}^\theta$ matches the true average velocity, the underlying identity is satisfied exactly, and the gradient of the loss vanishes, regardless of whether gradients are propagated through the target.

\looseness=-1
\textbf{Approximating the marginal velocity.}
In practice, the marginal velocity $v_s$ or $v_t$ is intractable. As in flow matching, we replace it with a tractable conditional velocity. Importantly, in all of our objectives, the velocity appears only through linear differential operators. As a result, taking the expectation over the conditioning variable commutes with these operators, so this replacement does not affect the objective in expectation. We show this proof in \cref{appx:proof_objectives}.

\looseness=-1
\textbf{Computation of differential terms.}
The covariant derivative $D_s u_{s,t}^\theta$ and partial derivative $\partial_t u_{s,t}^\theta$ in the differential objectives can be computed efficiently using forward-mode automatic differentiation via JVPs, which adds less than 20\% overhead compared with a standard forward pass~\citep{geng2025mean}. For embedded manifolds, the covariant derivative can be obtained by computing the JVP in the ambient space and projecting the result onto the tangent space: $D_s u_{s,t}^\theta(x_s) = \Proj_{x_s}\bigl(\frac{d}{ds} u_{s,t}^\theta(x_s)\bigr)$. The differential quantities $\nabla^1 \log$ and $\d(\log_{x_s})$ admit closed-form expressions for manifolds of interest. In practice, we implement these operations using automatic differentiation, enabling efficient and flexible evaluation across different manifold choices.

% \looseness=-1
\textbf{Objective-level distinction from GFM.}
Generalised Flow Map (GFM)~\citep{davis2025generalised} independently derives
flow-map identities on Riemannian manifolds and proposes objectives structurally
similar to ours.
We compare the Eulerian objectives as a representative example.
GFM minimizes an Eulerian consistency residual written in terms of the learned
flow map $\Phi^\theta_{s,t}$:
\begin{equation}
    \mathbb{E}_{x_s,s,t}
    \left[
    \left\|
    \partial_s \Phi^\theta_{s,t}(x_s)
    +
    \sg\!\left(d\Phi^\theta_{s,t}[v_s^\theta(x_s)]\right)
    \right\|_g^2
    \right],
    \label{eq:gfm_obj}
\end{equation}
whereas RMF learns the average velocity $u^\theta_{s,t}$ via stopped-target
regression:
\begin{equation}
    \mathbb{E}_{x_s, s, t}\Bigl[
    \bigl\|
    u_{s,t}^\theta(x_s)
    -
    \sg(\hat{u}_{\mathrm{tgt}})
    \bigr\|_g^2
    \Bigr].
    \label{eq:our_obj}
\end{equation}
This difference in objective form also leads to a different stop-gradient
placement. In GFM, $\partial_s \Phi^\theta_{s,t}(x_s)$ remains outside the
stop-gradient, so backpropagation differentiates through this derivative term
and requires higher-order derivatives. In RMF, derivative-dependent quantities
are inside the stopped target $\sg(\hat{u}_{\mathrm{tgt}})$; JVPs are computed
to form the target but not differentiated through. This avoids higher-order
derivatives by construction, reducing training memory by nearly $2\times$ and
improving stability in our empirical comparison; see \cref{appx:comparison_to_gfm}.

% \textbf{Comparison to GFM objectives.}
% Generalised Flow Map (GFM)~\citep{davis2025generalised} independently derives flow-map identities on
% Riemannian manifolds, proposing objectives structurally similar to ours.
% We compare the Eulerian objectives as a representative example.
% The GFM objective takes the form
% \begin{equation}
%     \mathbb{E}_{x_s,s,t}
%     \left[
%     \left\|
%     \colorbox{red!20}{$\partial_s \Phi^\theta_{s,t}(x_s)$} 
%     +
%     \sg(d\Phi^\theta_{s,t}[v_s^\theta(x_s)])
%     \right\|_g^2
%     \right],
%     \label{eq:gfm_obj}
% \end{equation}
% whereas ours is
% \begin{equation}
%     \mathbb{E}_{x_s, s, t}\Bigl[\bigl\| 
%     \colorbox{green!20}{$u_{s,t}^\theta(x_s)$} - \sg(\hat{u}_{\mathrm{tgt}})\bigr\|_g^2 
%     \Bigr].
%     \label{eq:our_obj}
% \end{equation}
% GFM only partially applies stop-gradient to JVP-related terms~\eqref{eq:gfm_obj},
% so backpropagating through the GFM objective requires higher-order derivative evaluations,
% increasing memory usage and potentially compromising training stability.
% Our objective~\eqref{eq:our_obj} eliminates these by construction, reducing training memory by $2\times$
% and improving stability (see \cref{appx:comparison_to_gfm} for empirical comparisons).

\subsection{Parameterization of the Flow Map}
\label{subsec:parameterization}
\looseness=-1
We consider three parameterizations for the average velocity $u_{s,t}^\theta$: prediction of $v$, $x_t$, or $x_1$, and identify $x_1$-prediction as the best-suited for manifold settings. 
% Unlike $v$-prediction which may involve off-manifold targets, $x_1$-prediction encourages the model to directly recover the data manifold across all noise levels, ensuring better geometric consistency.

\textbf{$v$-prediction.}
The most direct approach parametrizes the average velocity:
\begin{equation}
\label{eq:v_pred}
    u_{s,t}^\theta(x_s) = \Proj_{x_s}\bigl(\net^\theta(x_s, s, t)\bigr) \in T_{x_s}\M,
\end{equation}
where $\Proj_{x_s}$ projects the network output onto the tangent space. The instantaneous velocity is recovered as $v_s^\theta(x_s) = u_{s,s}^\theta(x_s)$, and the flow map follows from \cref{eq:induced_flow_map}. This parameterization is conceptually simple and commonly adopted in prior work on Euclidean flow maps.

\textbf{$x_t$-prediction.}
An alternative is to directly model the flow map: $\Phi_{s,t}^\theta(x_s) = \net^\theta(x_s, s, t)$. The average velocity is then recovered via the logarithmic map. However, training requires enforcing the boundary condition $\Phi_{s,s}^\theta(x_s) = x_s$ and matching the instantaneous velocity $\frac{d}{dt}\Phi_{s,t}^\theta(x_s)\big|_{t=s} = v_s(x_s)$, which requires differentiating through the network at every step. This introduces significant computational overhead and instability.

\textbf{$x_1$-prediction.}
We propose an $x_1$-prediction scheme that inherits the benefits of endpoint prediction from flow matching while accommodating the two-time structure of flow maps. The network predicts a point on the manifold, which we interpret as an estimate of the trajectory endpoint:
\begin{align}
    \hat{x}_1^\theta(x_s, s, t) &= \net^\theta(x_s, s, t), \\
    \label{eq:x1_pred}
    u_{s,t}^\theta(x_s) &= \frac{1}{1-s} \log_{x_s} \hat{x}_1^\theta(x_s, s, t).
\end{align}
Unlike standard $x_1$-prediction in flow matching, the predicted endpoint depends on both times $s$ and $t$, since $u_{s,t}$ is a function of both time variables. 
%This additional flexibility is necessary because the average velocity $u_{s,t}$ is a function of both time variables.
A practical advantage of this parameterization is compatibility with existing architectures that output manifold-valued points. For protein backbones, models such as FrameDiff~\citep{pmlr-v202-yim23a}, FrameFlow~\citep{yim2023fast} and FoldFlow~\citep{bose2023se} naturally predict $\mathrm{SE}(3)$ frames. Thus, $x_1$-prediction can reuse such architectures without modification, which is convenient for repurposing pre-trained flow matching models as flow maps, whereas $v$-prediction would require modifying the output to produce tangent vectors. 

%This is especially beneficial when repurposing a pretrained flow-matching model as a flow map, reducing the additional training cost.

\textbf{Numerical stability near $s = 1$.} The factor $(1-s)^{-1}$ in \cref{eq:x1_pred} diverges as $s \to 1$, which can destabilize training. To mitigate this issue, we reweight the per-sample error inside the norm using a factor $w(s)$:
\begin{equation}
    w(s)=\frac{1-s}{\max(1-s, \epsilon)},
\end{equation}
where we set $\epsilon \in \{0.05, 0.1\}$ in our experiments, following common practices in flow matching with $x_1$-prediction~\citep{yim2023fast, bose2023se, li2025back}. In practice, this weighting stabilizes $x_1$-prediction in all tested settings.

%This weighting counteracts the divergence of $(1-s)^{-1}$ near $s=1$ and follows common practices in flow matching with $x_1$-prediction~\citep{yim2023fast, bose2023se, li2025back}. In practice, we find that $x_1$-prediction with this weighting remains stable in all tested settings.

\subsection{Stabilizing Riemannian MF Training}
\label{subsec:stabilization_techniques}
\looseness=-1
We identify several sources of optimization difficulty and present practical stabilization techniques below. \cref{appx:empirical_evidence_on_stabilization_techniques} provides empirical support for these choices.

\looseness=-1
\textbf{Time sampling distribution.} 
We find that stable optimization requires different time-sampling schemes across the three objectives.
For Eulerian MF, while the objective is agnostic to time ordering, we sample ordered time pairs with $s \le t$, covering half the unit square $(s,t)\in[0,1]^2$.
For Lagrangian MF, we sample time pairs in both directions, including both $s \leq t$ and $t \leq s$.
Finally, semigroup MF also samples an intermediate time $r$ such that $s \leq r \leq t$.

\looseness=-1
\textbf{Adaptive loss weighting.} Differential objectives, particularly Eulerian MF, can suffer from high variance in derivative-dependent regression targets. To stabilize training, we adopt adaptive loss weighting:
\begin{equation}
    \mathcal{L} = \sg(w)\,\lVert \Delta \rVert_g^2, \quad
    w = (\lVert \Delta \rVert_g^2 + c)^{-p},
\end{equation}
with $p=0.5$.
This substantially improves sample quality while maintaining stability, consistent with prior observations~\citep{song2023improved, geng2025mean}.

\textbf{Time-derivative control.} 
We find that bounding the time derivative of the network output is crucial for differential objectives. Using low-frequency time embeddings (e.g., $\omega=0.02$) significantly stabilizes training compared to high-frequency embeddings (e.g., $\omega=30$), which is in-line with consistency-model literature~\citep{lu2024simplifying}.

\subsection{Reward-guided Inference with Flow Maps}

\looseness=-1
Finally, we study reward-guided inference on manifolds to steer generations toward downstream objectives without model retraining~\citep{skreta2025feynmankac, hasan2026discrete}. A common strategy is to use gradients of a differentiable reward to bias the generative dynamics during inference. 
Concretely, we directly incorporate this signal into each inference step by perturbing the learned flow map with a guidance vector $\zeta_t \in T_{x_t}\mathcal{M}$ and guidance scale $\lambda$:
\begin{equation}\label{eq:reward_guidance}
    x_{t+\Delta t} = \exp_{x_t}\left(\Delta t \left(u^\theta_{t,t + \Delta t}(x_t) + \lambda\cdot\zeta_t\right)\right)\,.
\end{equation}

In practice, the performance of the guided generation significantly depends on the choice of $\zeta_t$. The naive approach is to define the guidance vector as the Riemannian gradient of the reward $\nabla_{x_t} r(x_t)$ evaluated at the current state $x_t$. However, evaluating the reward on $x_t$ is suboptimal, especially at small values of $t$, since the state is only partially denoised.

%the current state $x_t$ is only partially denoised. %, producing noisy rewards.
%introducing noise into the reward measurement.
\looseness=-1
% \citet{sabour2025test}, using their flow map model, show that evaluating the reward on an $x_1$ look-ahead sample yields guidance that targets the tilted density $p_{1}(x)\exp(r(x))$. 
\citet{sabour2025test}, using their flow map model, proved that evaluating the reward via an $x_1$ look-ahead generates samples from the product density $\propto p_{1}(x)\exp(r(x))$. Similarly, we demonstrate that leveraging our flow map model in the manifold setting for reward evaluation is beneficial for guidance. In our setting, we use the guidance vector $\zeta_t = \nabla_{x_t} r(\Phi^\theta_{t,1}(x_t))$ as a simple heuristic.

%In particular, as a simple heuristic, we consider using the guidance vector $\zeta_t = \Proj_{x_t}(\nabla_{x_t} r(\Phi^\theta_{t,1}(x_t)))$.

% For a manifold $\mathcal{M}$ embedded in $\mathbb{R}^D$, this Riemannian gradient is computed as the tangential projection of the Euclidean gradient, i.e., $\nabla_x f(x) = \Proj_{x}(\nabla_{\mathrm{E}} f(x))$.
\section{Experiments}
\label{sec:experiments}
\looseness=-1
We first conduct an empirical analysis of key design choices to establish practical guidelines for flow map learning. Using these findings, we then evaluate RMF on two biological tasks: promoter DNA design and protein backbone generation. Finally, we show that we can do flow map-based reward-guided inference for both applications. 

\subsection{Ablation of Design Choices}
\label{subsec:design_choices}

\looseness=-1
We empirically study key design choices in flow map learning, focusing on parameterization and training objectives. Ablations for stabilization techniques are in \cref{appx:empirical_evidence_on_stabilization_techniques}.

\textbf{Task description.} To emulate the manifold hypothesis, we use a synthetic 2D spherical helix embedded in a high-dimensional space. Points in the spherical helix $x \in \mathbb{S}^2 \subset \mathbb{R}^3$ are mapped to $y = Ux \in \mathbb{S}^{D-1} \subset \mathbb{R}^D$ via an unknown fixed column-orthogonal matrix $U \in \mathbb{R}^{D \times 3}$. For visualization, samples are projected back via $\tilde{x} = U^\top y \in \mathbb{R}^3$ and normalized as $x = \tilde{x}/\|\tilde{x}\|_2 \in \mathbb{S}^2$. Performance is evaluated across $D \in \{512, 2048\}$ using a 256-wide, 5-layer MLP.

\begin{figure}[t]
    \centering
    \includegraphics[width=0.97\linewidth]{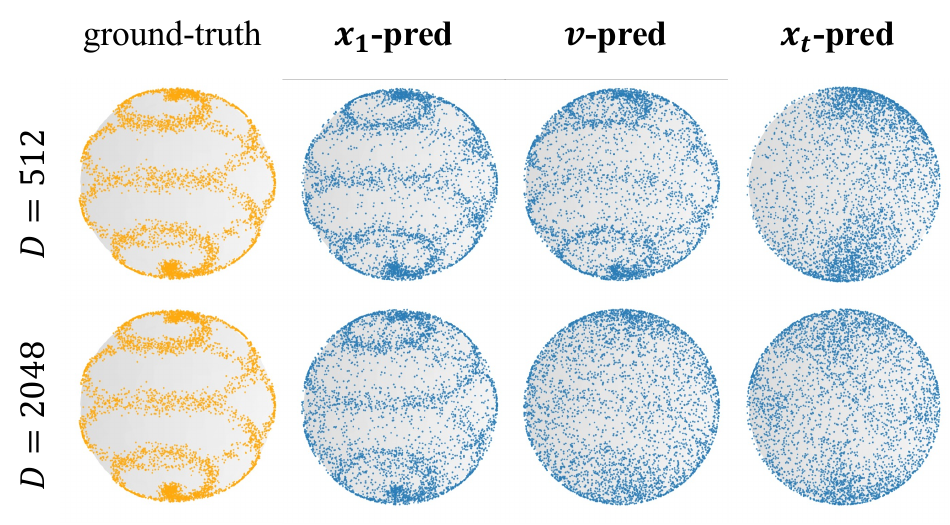}
    \caption{
        Parameterization choices: One-step generation results from models trained with different parameterizations ($x_1$-, $v$-, and $x_t$-pred) across ambient dimensions $D\in\{512,2048\}$.
    }
    \label{fig:parameterization-choices}
    \vspace{-.2in}
\end{figure}
\looseness=-1
\textbf{Parameterization choices.}
We compare $x_1$-, $x_t$-, and $v$-prediction under the semigroup \method{} objective.
As shown in \cref{fig:parameterization-choices}, $x_1$-prediction remains stable across both dimensions, while $x_t$-prediction completely fails. $v$-prediction degrades as $D$ increases; $x_1$-prediction performs well even at $D=2048$, despite under-parameterization (i.e., network width $\ll D$).

\begin{figure}[t]
    \centering
    \includegraphics[width=0.97\linewidth]{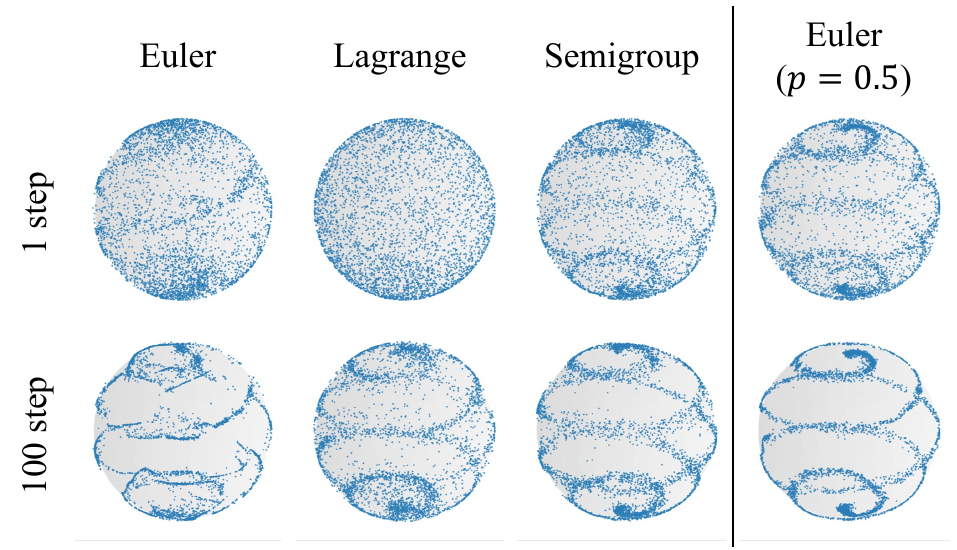}
    \caption{
        Objective choices: (Left) Samples generated by models trained with different objectives using 1-step (top) or 100-step (bottom) sampling on $D=512$. (Right) Adaptive loss weighting for Eulerian RMF substantially improves sample quality.
    }
    \label{fig:objective-choices}
    \vspace{-.1in}
\end{figure}
\looseness=-1
\textbf{Objective choices.}
We compare Eulerian, Lagrangian, and semigroup \method{} objectives at ambient dimension $D=512$ under the $v$-prediction. The semigroup objective yields the most consistent training behavior and the highest sample quality in our experiments (\cref{fig:objective-choices}).
While Eulerian alone is unstable and produces poor one-step samples, applying adaptive loss weighting substantially mitigates target variance and enables it to better capture the data distribution.

\subsection{Promoter DNA Design}

We evaluate \method{} for generating human promoter sequences of length 1,024 conditioned on target transcription signal profiles. We train on 88,470 sequences from FANTOM5~\citep{hon2017atlas} and compare against Dirichlet FM~\citep{stark2024dirichlet} and Fisher FM~\citep{davis2024fisher}.
We report (i) the mean squared error (MSE) between the signal profiles predicted by a pre-trained Sei model~\citep{chen2022sequence} for the generated and target human promoter sequences, and (ii) $k$-mer correlation ($k=6$) between generated and real sequence distributions at different numbers of function evaluations (NFEs). A full task description and evaluation details are provided in \cref{appx:exp-details,appx:eval-details}.

\textbf{Results.}
\Cref{tab:promoter_dna} shows that all \method{} variants using one-step generation match the 100-step performance of Fisher FM while consistently outperforming Dirichlet FM. Performance remains robust across NFEs (\cref{fig:DNA_step_vs_performance}). Furthermore, $x_1$-prediction matches $v$-prediction across objectives; we provide a detailed analysis of its advantages in \cref{appx:dna_parametrization_comparison_with_entropy}.

\begin{table}[t]
    \centering
    \caption{
    Promoter DNA sequence generation results on the test set averaged over three runs. Fisher FM is reproduced in our setup; the remaining baseline results follow \citet{davis2024fisher}.
    }
    \label{tab:promoter_dna}
    \resizebox{\linewidth}{!}{
    \begin{tabular}{lcccc}
        \toprule
        Method 
        & & NFE
        & MSE ($\downarrow$)
        & $k$-mer corr. ($\uparrow$)
        \\ 
        \midrule
        Dirichlet FM && 100 & $0.034_{\pm0.004}$ & N/A \\
        Fisher FM && 100 & $0.030_{\pm0.001}$ & $0.96_{\pm0.01}$ \\
        \midrule
        \multirow{2}{*}{Eulerian RMF} & $x_1$-pred & 1 & $0.030_{\pm0.000}$ & $0.96_{\pm0.01}$ \\
        & $v$-pred & 1 & $0.031_{\pm0.001}$ & $0.96_{\pm0.00}$ \\
        \midrule
        \multirow{2}{*}{Lagrangian RMF} & $x_1$-pred & 1 & $0.027_{\pm0.001}$ & $0.88_{\pm0.00}$ \\
        & $v$-pred & 1 & $0.027_{\pm0.001}$ & $0.85_{\pm0.01}$ \\
        \midrule
        \multirow{2}{*}{Semigroup RMF} & $x_1$-pred & 1 & $0.030_{\pm0.001}$ & $0.84_{\pm0.03}$ \\
        & $v$-pred & 1 & $0.030_{\pm0.001}$ & $0.93_{\pm0.02}$ \\
        \bottomrule
    \end{tabular}
    }
    \vspace{-.1in}
\end{table}

\begin{figure}[t]
    \centering
    \includegraphics[width=\linewidth]{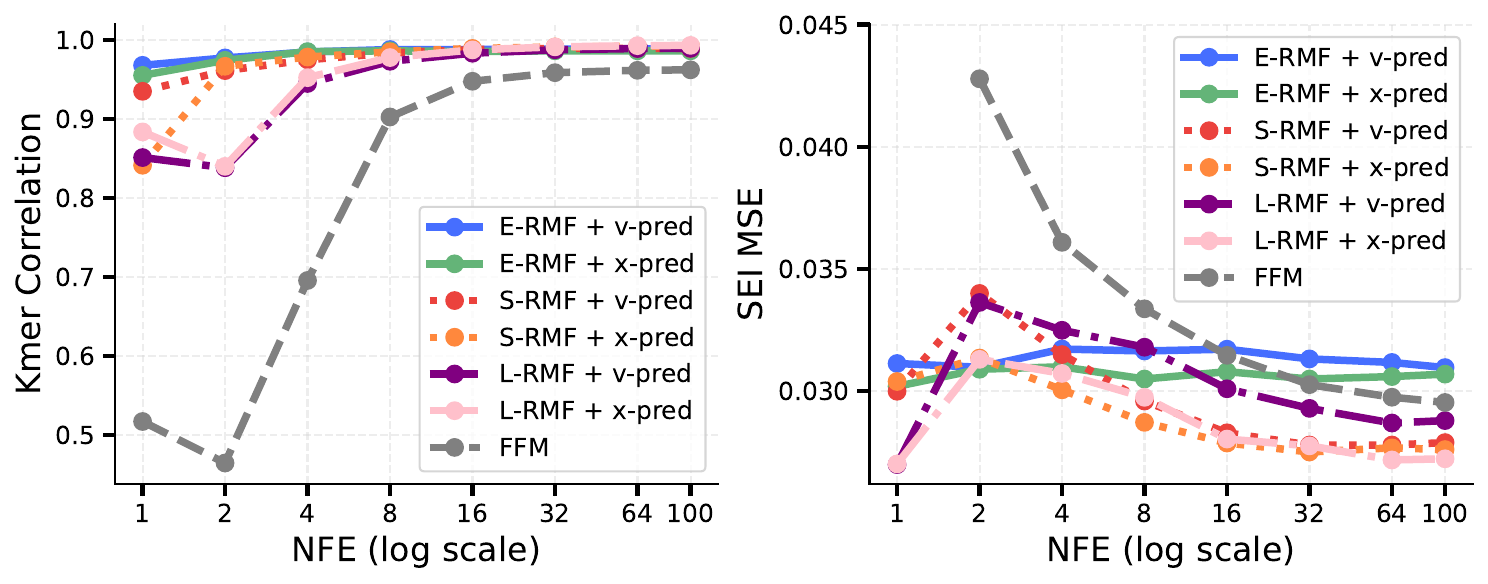}
    \caption{
        Performance vs. NFE. RMF variants outperform Fisher FM (FFM) in $k$-mer correlation ($k=6$) and MSE. RMF achieves high accuracy at $1$ NFE, whereas FFM requires $\ge 32$ steps for comparable performance.
    }
    \label{fig:DNA_step_vs_performance}
    \vspace{-.1in}
\end{figure}

% \iffalse
% \begin{table}[t]
%     \centering
%     \caption{Reward-guided inference improves alignment between Sei signal profiles of generated and reference promoter sequences, measured by mean squared error (MSE) $\pm$ standard deviation across 60 batches of 128 samples.
%     \looseness=-1
%     \small 
%     }
%     \resizebox{0.6\columnwidth}{!}{
%     \begin{tabular}{lcc}
%         \toprule 
%         Guidance vector $(\zeta_t)$
%         & NFE 
%         & MSE $(\downarrow)$ \\
%                 \midrule

%          ---           &  $1$
%             & $0.033_{\pm 0.015}$ \\
            
%          $\Proj_{x_t}(\nabla r(x_t))$  & $1$
%             & $0.033_{\pm 0.015}$
%              \\
%          $\Proj_{x_t}(\nabla r(\Phi_{t,1}^\theta(x_t)))$   & $1$
%             & $0.025_{\pm 0.011}$
%              \\
%         \midrule

%          ---           &  $5$
%             & $0.031_{\pm 0.014}$ \\
            
%          $\Proj_{x_t}(\nabla r(x_t))$  & $5$
%             & $0.017_{\pm 0.009}$
%              \\
%          $\Proj_{x_t}(\nabla r(\Phi_{t,1}^\theta(x_t)))$   & $5$
%             & $0.013_{\pm 0.005}$
%              \\
            
%             \cmidrule(lr){1-3}

%             --- &  $10$
%             & $0.031_{\pm 0.013}$
%              \\
            
%                      $\Proj_{x_t}(\nabla r(x_t))$   & $10$
%             & $0.008_{\pm 0.002}$
%              \\
%          $\Proj_{x_t}(\nabla r(\Phi_{t,1}^\theta(x_t)))$   & $10$
%             & $0.008_{\pm 0.003}$
%              \\
%         \bottomrule
%     \end{tabular}
%     }
%     \label{table:dna_reward_guidance}
% \end{table}
% \fi
\begin{table}[t]
    \centering
    \caption{Reward-guided inference improves alignment between Sei signal profiles of generated and reference promoter sequences, measured by mean squared error $\pm$ standard deviation across 60 batches of 128 samples. Lower MSE is better. ``---" is no guidance.
    \looseness=-1
    \small
    }
    \footnotesize
    \small
    % \resizebox{.95\columnwidth}{!}{
    \begin{tabular}{lccc}
        \toprule 
        NFE 
        & --- 
        & $\nabla_{x_t} r(x_t)$ 
        & $\nabla_{x_t} r(\Phi_{t,1}^\theta(x_t))$ \\
        \midrule
        $1$  & $0.033_{\pm 0.015}$ & $0.033_{\pm 0.015}$ & $0.025_{\pm 0.011}$ \\
        $5$  & $0.031_{\pm 0.014}$ & $0.017_{\pm 0.009}$ & $0.013_{\pm 0.005}$ \\
        $10$ & $0.031_{\pm 0.013}$ & $0.008_{\pm 0.002}$ & $0.008_{\pm 0.003}$ \\
        \bottomrule
    \end{tabular}
    % }
    \label{table:dna_reward_guidance}
    \vspace{-.1in}
\end{table}

\textbf{Reward guidance results.} We extend the DNA design setting to evaluate our reward guidance approach on the following task: given a target regulatory behavior, can we refine samples to better match it? At each inference step, we compute the MSE between the Sei profiles of the $x_1$ look-ahead and the target sequence, and use its gradient to steer the samples according to \cref{eq:reward_guidance}. In \cref{table:dna_reward_guidance}, we report the MSE of the final sequence profiles to the test targets in ~\citet{hon2017atlas}. Reward-guided sampling consistently reduces MSE  compared to unguided generation, even for one-step sampling. Furthermore, naive guidance on the current state $x_t$ does worse than using the $x_1$ look-ahead. We show an ablation over guidance scales and reward details in \cref{appx:dna_reward_guidance}.

\subsection{Protein Backbone Design} 
\begin{table}[!t]
    \centering
    \caption{
        Protein backbone generation results. Rows are grouped by inference regime (NFE). We highlight in \textbf{bold} the best designability ($<$2\AA) within each regime, our primary metric. We mark not applicable (N/A) when no designable samples are generated or when metrics are not reported in prior work.
    }
    \label{tab:protein_backbone}
    \resizebox{.99\linewidth}{!}{
        \begin{tabular}{lcccccc}
        \toprule
        \multirow{3}{*}{Model} & \multirow{3}{*}{NFE} &
        \multicolumn{2}{c}{Designability} &
        \multicolumn{2}{c}{Diversity} & Novelty \\
        \cmidrule(lr){3-4}\cmidrule(lr){5-6}\cmidrule(lr){7-7}
        & & $<$2\AA & $\mathtt{scRMSD}$ & Max. & Pairwise & Max. \\
        & & ($\uparrow$) & ($\downarrow$) & Cluster ($\uparrow$) & $\mathtt{scTM}$ ($\downarrow$) & $\mathtt{scTM}$ ($\downarrow$) \\
        \midrule

        % =========================
        % Many-step
        % =========================
        \rowcolor{gray!12}
        \multicolumn{7}{l}{Many-step regime ($\text{NFE}\ge 100$)} \\
        \midrule

        % GENIE (grouped; NFE high -> low)
        \multirow{3}{*}{GENIE}
        & 1000 & 0.22 & N/A & 0.76 & N/A & 0.54 \\
        & 750  & 0.11 & N/A & 0.79 & N/A & 0.51 \\
        & 500  & 0.00 & N/A & N/A & N/A & N/A \\

        % FrameDiff (grouped; NFE high -> low)
        \multirow{2}{*}{FrameDiff}
        & 500  & 0.80 & 1.63 & 0.36 & 0.34 & 0.68 \\
        & 100  & 0.74 & 1.78 & 1.74 & 0.34 & 0.51 \\

        % FrameFlow (single row)
        FrameFlow
        & 100  & 0.93 & 1.16 & 0.41 & 0.30 & 0.77 \\

        % RMF (single row; best in regime is bolded)
        \textbf{RMF (Ours)}
        & 100  & \textbf{0.94} & \textbf{1.01} & 0.55 & 0.27 & 0.89 \\

        \midrule

        % =========================
        % Moderate
        % =========================
        \rowcolor{gray!12}
        \multicolumn{7}{l}{Moderate regime ($10 \le \text{NFE} < 100$)} \\
        \midrule
        FrameDiff
        & 10 & 0.47 & 3.32 & 0.42 & 0.28 & 0.52 \\
        FrameFlow
        & 10 & 0.61 & 2.34 & 0.54 & 0.26 & 0.67 \\
        \textbf{RMF (Ours)}
        & 10 & \textbf{0.87} & \textbf{1.25} & 0.55 & 0.27 & 0.87 \\

        \midrule

        % =========================
        % Few-step
        % =========================
        \rowcolor{gray!12}
        \multicolumn{7}{l}{Few-step regime ($\text{NFE}<10$)} \\
        \midrule

        % FrameFlow (grouped; NFE high -> low)
        FrameFlow & 5 & 0.04 & 6.53 & 0.68 & 0.22 & 0.74 \\
    
        % FrameDiff (single row)
        FrameDiff
        & 5 & 0.09 & 6.19 & 0.54 & 0.24 & 0.96 \\

        % RMF (grouped; NFE high -> low; best in regime is bolded)
        \textbf{RMF (Ours)}
        & 5 & \textbf{0.82} & \textbf{1.54} & 0.54 & 0.27 & 0.85 \\
        \midrule
        FrameFlow & 2 & 0.00 & N/A & N/A & N/A & N/A \\
        \textbf{RMF (Ours)} & 1 & \textbf{0.35} & {3.33} & {0.60} & 0.24 & 0.76 \\
        \bottomrule
        \end{tabular}
    }
    \vspace{-.1in}
\end{table}

Finally, we evaluate our method on unconditional \textit{de novo} protein backbone generation. We train on the SCOPe dataset~\citep{chandonia2022scope} and benchmark three state-of-the-art baselines: GENIE~\citep{lin2023generating}, FrameDiff~\citep{pmlr-v202-yim23a}, and FrameFlow~\citep{yim2023fast}. We report standard metrics: \emph{designability}, \emph{novelty}, and \emph{diversity}. For additional details, see \cref{appx:exp-details,appx:eval-details}. 

We find that using the semigroup RMF objective with $x_1$-prediction leads to the most stable training dynamics of the protein backbone generative model. We attribute this to the fact that the semigroup objective minimizes the number of differential operations required for the optimization. Furthermore, the $x_1$-prediction scalability agrees with the results in \cref{subsec:design_choices} and allows for out of the box usage of the standard protein backbone architectures used in prior work~\citep{yim2023fast,pmlr-v202-yim23a,bose2023se}.

\textbf{Main results.} 
In \cref{tab:protein_backbone}, we report results across different numbers of function evaluations (NFE). 
Our method maintains high designability in the few-step regime: at 5 steps, it achieves $82\%$ designability (where designability is defined as the percentage of structures that have an RMSD $<$2\AA~when refolded using ESMFold~\citep{lin2023evolutionary}), while baselines drop sharply (e.g., $9\%$ for FrameDiff and $4\%$ for FrameFlow). Even in the extreme one-step setting, our method still generates $35\%$ designable samples, whereas baseline methods completely collapse. Since we optimize for designability, we observe a mild reduction in novelty, but maintain competitive diversity relative to baselines.

\textbf{Faster inference.} In \cref{subfig:protein-method}, we compare $\mathtt{scRMSD}$ against
wall-clock time across NFEs. Notably, a RMF with small-sized model consistently
outperforms prior methods across all step counts, achieving better designability than baselines at comparable cost.
The RMF with larger model achieves the best designability in the low-step regime, although it has a larger per-step cost. 
More generally, increasing the model size significantly improves designability when using 1--10 sampling steps, while for 100 function evaluations, the gap diminishes and all models perform on-par.

\textbf{Effect of inference techniques.} We apply a low-noise inference scheme at sampling time, a common technique in protein backbone generation~\citep{yim2023fast, bose2023se, xie2025distilled}. We control the amount of injected noise using $\eta$, where $\eta=1$ corresponds to full noise and $\eta=0$ to no noise. In \cref{subfig:protein-inference}, setting $\eta$ to intermediate values ($\eta \approx 0.25$--$0.45$) gave the best outcomes of low $\mathtt{scRMSD}$ and high diversity. We therefore used $\eta=0.45$ in all following protein experiments. Additional details of the inference scheme are provided in \cref{appx:exp-details-protein}.

\begin{table}[t]
    \centering
    \caption{Reward guidance increases the percentage of amino acids assigned to a target secondary structure composition. In each setting, 100 sequences of length 128 were generated using the RMF/S model. $\zeta_t$ set to ``---" corresponds to no guidance.
    \looseness=-1
    \small 
    }
    \resizebox{\columnwidth}{!}{
    \begin{tabular}{lclcccc}
        \toprule 
        Reward
        & NFE 
        & \makecell{\centering$\zeta_t$}
        & Mean $(\uparrow)$
        & Top-10 mean $(\uparrow)$ 
        & Max $(\uparrow)$

        & \makecell{Frac.\\improved $(\uparrow)$} \\
        \midrule

    \multirow{6}{*}{\centering\texttt{\hlyellow{$\beta$-sheet}}}
                    &  \multirow{3}{*}{$5$}
            & ---
            & $0.18_{\pm 0.12}$ & $0.41_{\pm 0.06}$ & $0.51$ & --- \\

                                &  
            & $\nabla r(x_t)$ 
            & $0.18_{\pm 0.12}$ & $0.41_{\pm 0.04}$ & $0.48$ & $0.28$ \\
            
            & 
            & $\nabla r(\Phi_{t,1}^\theta(x_t))$
            & $\mathbf{0.24_{\pm 0.14}}$ & $\mathbf{0.49_{\pm 0.03}}$ & $\mathbf{0.55}$ & $\mathbf{0.63}$ \\

            \cmidrule(lr){2-7}

            &  \multirow{3}{*}{$10$}
            & ---
            & $0.20_{\pm 0.13}$ & $0.45_{\pm 0.04}$ & $0.52$ & --- \\  

                                             &  
            & $\nabla r(x_t)$ 
            & $0.20_{\pm 0.13}$ & $0.45_{\pm 0.04}$ & $0.52$ & $0.37$ \\ 
            
            &   & $\nabla r(\Phi_{t,1}^\theta(x_t))$ & $\mathbf{0.26_{\pm 0.14}}$ & $\mathbf{0.52_{\pm 0.05}}$ & $\mathbf{0.61}$ & $\mathbf{0.61}$ \\

        \midrule

    \multirow{6}{*}{\centering\texttt{\hlblue{$\alpha$-helix}}}

&  \multirow{3}{*}{$5$} & --- &  $0.29_{\pm 0.20}$ & $0.68_{\pm 0.08}$ & $0.80$ & --- \\
&   & $\nabla r(x_t)$  & $0.29_{\pm 0.20}$ & $0.68_{\pm 0.08}$ & $0.80$ & $0.17$ \\

&   & $\nabla r(\Phi_{t,1}^\theta(x_t))$ & $\mathbf{0.39_{\pm 0.22}}$ & $\mathbf{0.76_{\pm 0.04}}$ & $\mathbf{0.83}$ & $\mathbf{0.75}$ \\

            \cmidrule(lr){2-7}

&  \multirow{3}{*}{$10$}  & --- &  $0.30_{\pm 0.20}$ & $0.70_{\pm 0.07}$ & $0.80$ & --- \\

&   & $\nabla r(x_t)$ & $0.29_{\pm 0.20}$ & $0.70_{\pm 0.07}$ & $0.80$ & $0.19$ \\

&   & $\nabla r(\Phi_{t,1}^\theta(x_t))$  & $\mathbf{0.45_{\pm 0.23}}$ & $\mathbf{0.81_{\pm 0.04}}$ & $\mathbf{0.86}$ & $\mathbf{0.75}$ \\

        \bottomrule
    \end{tabular}
    }
    \label{table:prot_reward_guidance}
    \vspace{-.1in}
\end{table}

\begin{figure}[t]
    \centering
    \begin{subfigure}{0.49\linewidth}
        \centering
        \includegraphics[width=\linewidth]{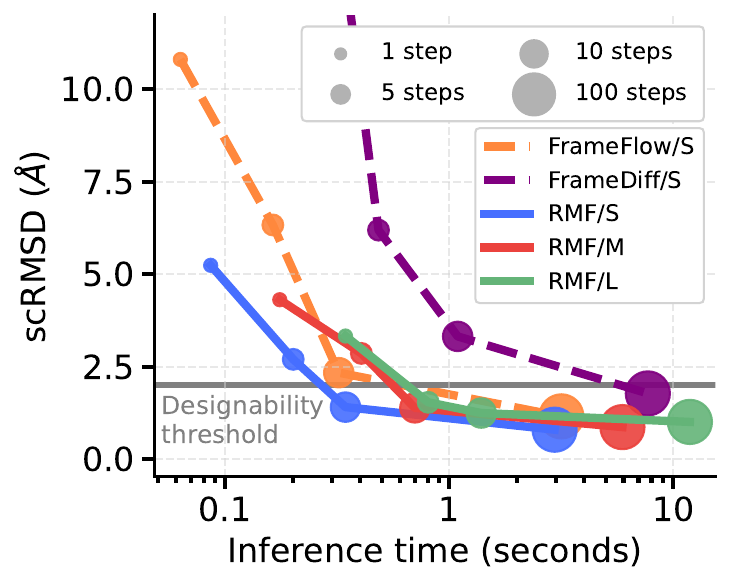}
        \caption{Inference time}
        \label{subfig:protein-method}
    \end{subfigure}
    \hfill
    \begin{subfigure}{0.49\linewidth}
        \centering
        \includegraphics[width=\linewidth]{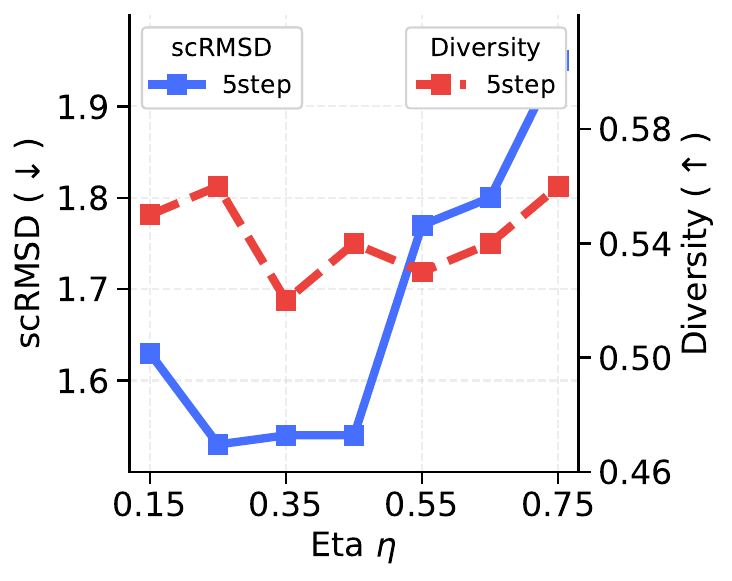}
        \caption{Inference technique}
        \label{subfig:protein-inference}
    \end{subfigure}
    \hfill
    \caption{
        \textbf{(\subref{subfig:protein-method})} RMF consistently outperforms baselines in designability across inference steps.
        \textbf{(\subref{subfig:protein-inference})} Intermediate noise levels ($\eta \approx 0.25$--$0.45$) yield the best performance.
    }
    \vspace{-.2in}
\end{figure}

% \begin{figure}[t]
%     \centering
%     \begin{subfigure}{0.48\linewidth}
%         \centering
%         \includegraphics[width=\linewidth]{resource/image/protein_scrmsd_vs_inferencetime.pdf}
%         \caption{Inference steps}
%         \label{subfig:protein-method}
%     \end{subfigure}
%     \hfill
%     \begin{subfigure}{0.48\linewidth}
%         \centering
%         \includegraphics[width=\linewidth]{resource/image/protein_inference_technique.pdf}
%         \caption{Inference technique}
%         \label{subfig:protein-inference}
%     \end{subfigure}
%     \hfill
%     % \begin{subfigure}{0.30\linewidth}
%     %     \centering
%     %     \includegraphics[width=\linewidth]{resource/image/protein_scrmsd_vs_inferencestep.pdf}
%     %     \caption{Model size}
%     %     \label{subfig:protein-modelsize}
%     % \end{subfigure}
%     % \hfill
%     \caption{
%         Quantitative results of protein backbone design.
%         % \textbf{(\subref{subfig:protein-modelsize})} We report the designability performance against the model size with diverse inference steps, where the performance scales proportionally with the model size.
%         \textbf{(\subref{subfig:protein-method})} When compared to baselines, RMF shows better generation performance in terms of designability for identical inference steps.
%         \textbf{(\subref{subfig:protein-inference})} For various eta values on the inference technique, a value between 0.25 and 0.45 shows the best performance.
%     }
%     \vspace{-.1in}
% \end{figure}

\begin{table}[t]
    \centering
    \caption{Reward guidance can preserve target motifs in final generations of unconditional protein models. For each setting, 100 sequences of length 90 were generated using the RMF/S model with $\mathrm{NFE}=50$ and $\lambda=1000$. $\zeta_t$ set to ``---" corresponds to no guidance. Success rate is defined as the percentage of generations having motif RMSD $<1\text{\AA}$ and backbone RMSD $<2\text{\AA}$, following~\citep{zheng2024scaffoldlab}.
    \looseness=-1
    \small 
    }
    \resizebox{\columnwidth}{!}{
    \begin{tabular}{llccc}
        \toprule 
        Reward
        & \makecell{\centering$\zeta_t$}
        & Motif \texttt{scRMSD} $(\downarrow)$
        & Full \texttt{scRMSD} $(\downarrow)$
        & Success Rate $(\uparrow)$ \\
\midrule

    \multirow{2}{*}{\centering\texttt{\hlgreen{2KL8}}}
                   
            & --- 
            & $3.07$ & $12.88$ & $0.00\%$  \\
            & $\nabla r(\Phi_{t,1}^\theta(x_t))$
            & $\mathbf{1.52}$ & $\mathbf{7.26}$ & $\mathbf{3.00\%}$  \\

        \bottomrule
    \end{tabular}
    }
    \label{table:motif_reward_guidance}
    \vspace{-.2in}
\end{table}
\begin{figure}[t]
    \centering
    \begin{subfigure}{0.48\linewidth}
        \centering
        \includegraphics[width=\linewidth]{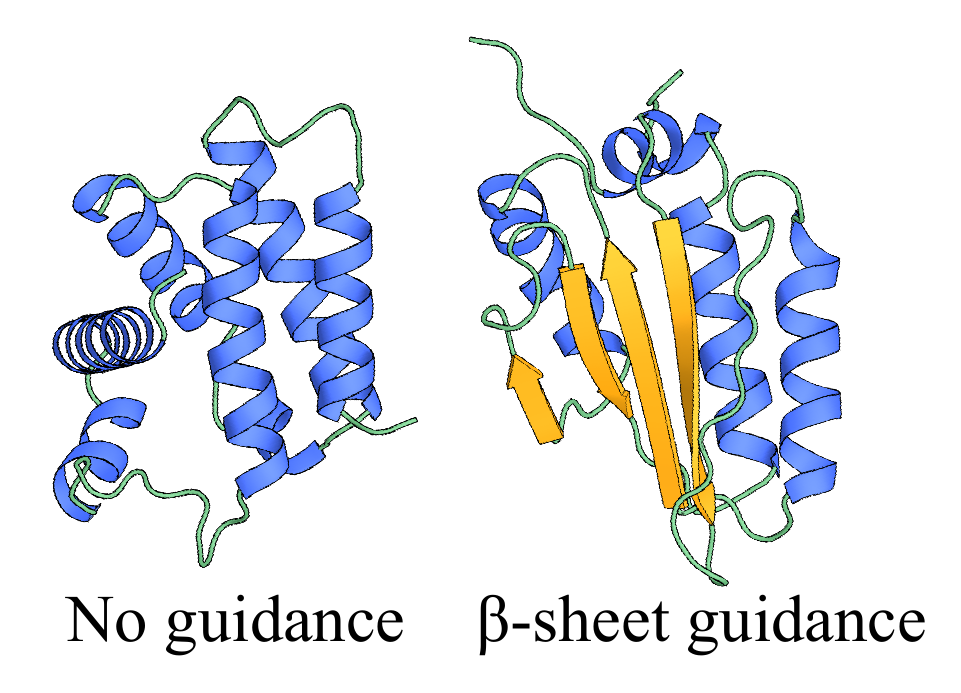}
        \caption{Secondary structure}
        \label{subfig:protein-reward}
    \end{subfigure}
    \hfill
    \begin{subfigure}{0.48\linewidth}
        \centering
        \includegraphics[width=\linewidth]{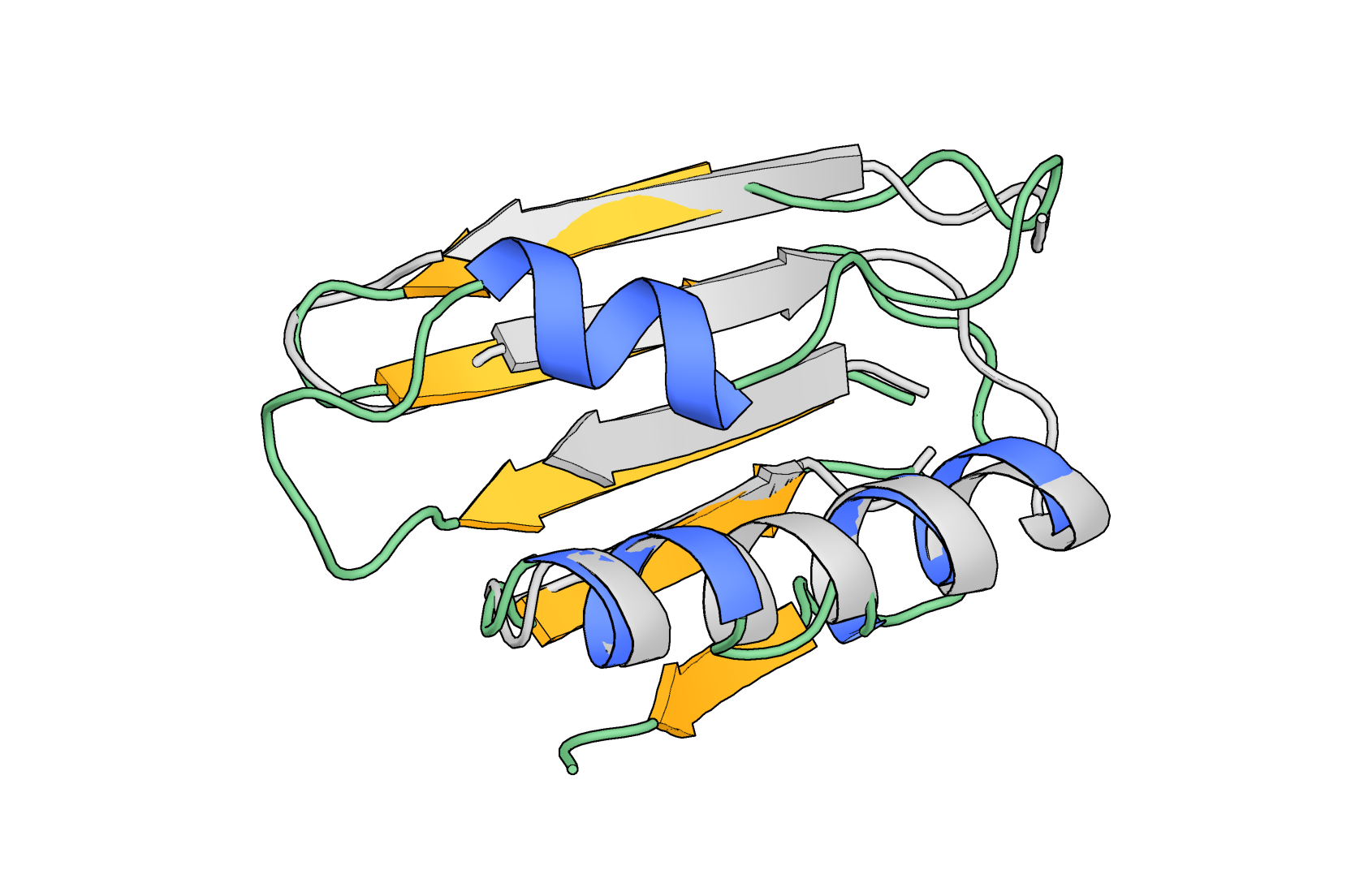}
        \caption{Scaffold generation}
        \label{subfig:protein-reward-motif}
    \end{subfigure}
    \hfill
    \caption{
        \textbf{(\subref{subfig:protein-reward})} Example of protein generation from the same initial state using the base model (left) and reward-guided inference (right) toward higher $\beta$-sheet content with $10$ NFE. $\beta$-sheets are highlighted in yellow, and $\alpha$-helices in blue.
        \textbf{(\subref{subfig:protein-reward-motif})} Example of a generated protein scaffold that preserves the target motif (overlayed in grey) using reward guidance. 
    }
    \vspace{-.2in}
\end{figure}

\looseness=-1
\textbf{Reward guidance results.}
In protein design settings, it may be important to control the composition of structural motifs, as these motifs can influence protein function. Furthermore, generative models tend to oversample $\alpha$-helices~\citep{faltings2025protein} and undersample other structural motifs~\citep{lu2025assessing}. We consider secondary structure optimization as an illustrative task for controlling protein design~\citep{hartman2025controllable}; we developed a differentiable secondary structure reward to guide towards higher compositions of $\beta$-sheets or $\alpha$-helices using $x_1$ look-ahead (see \cref{appx:protein_reward_guidance}), and evaluated final structures using DSSP~\citep{kabsch1983dictionary}. \cref{table:prot_reward_guidance} shows that reward guidance using $\nabla_{x_t} r(\Phi_{t,1}^\theta(x_t))$ increases composition of both target secondary structures, while using $\nabla_{x_t} r(x_t)$ is similar to not doing guidance. A visual example is shown in \cref{subfig:protein-reward}. %Further experimental details are in \cref{appx:protein_reward_guidance}.

Finally, we apply our framework to motif scaffolding, where the goal is to generate protein backbones that incorporate a fixed target motif, a common requirement in functional protein design. Existing methods typically rely on explicit motif conditioning during training~\citep{wang2022scaffolding} or on importance sampling via sequential Monte Carlo applied to unconditional models~\citep{trippe2023diffusion}. We explore reward-guided inference as a test-time alternative with unconditional models. The reward is defined as the RMSD between the target motif and the corresponding residues of the generated structure at fixed indices, an approach similar to~\citet{yim2024improved}. As a proof of concept, we apply this approach to the 2KL8 motif from Scaffold-Lab~\cite{zheng2024scaffoldlab} and show that $x_1$-based reward guidance yields successful motif-scaffolded generations (\cref{table:motif_reward_guidance} and \cref{subfig:protein-reward-motif}).

%\begin{figure}
%    \centering
%    \includegraphics[width=.98\linewidth]{resource/image/guided_beta_sheet_example.png}
%    \caption{}
%    \label{fig:protein-guided}
%\end{figure}
\section{Related Work}
\label{sec:related-works}
\textbf{Consistency models and flow-map learning.} Consistency models (CMs) reduce inference costs by mapping noise directly to data~\citep{pmlr-v202-song23a, song2023improved, lu2024simplifying, geng2024consistency}, but often lack explicit finite-time transport. Recent Euclidean flow map learning regresses \textit{average velocities} for integration-free generation between arbitrary time points~\citep{boffi2025build, geng2025mean, guo2025splitmeanflow, zhou2025terminal}, which enhances multi-step robustness~\citep{sabour2025align}. However, these formulations assume flat geometry and do not naturally extend to Riemannian manifolds.

\textbf{Generative modeling on manifolds.} Manifold-aware frameworks based on normalizing flows~\citep{lou2020neural, mathieu2020riemannian}, diffusion~\citep{huang2022riemannian, de2022riemannian}, and flow matching~\citep{chen2023flow} have been shown to work in scientific domains such as biological sequence generation~\citep{stark2024dirichlet, davis2024fisher,yim2023fast,pmlr-v202-yim23a,bose2023se}. However, these methods often require computationally expensive numerical ODE/SDE integration during inference, which can be prohibitive in high dimensions.

\textbf{Few-step generation on manifolds.} Riemannian Consistency Models~\citep{cheng2025riemannian} adapt CM objectives to manifolds but lack explicit flow map modeling. 
Most related to our work is Generalized Flow Map (GFM)~\citep{davis2025generalised}, which appeared during the preparation of this manuscript and independently derives analogous flow-map identities on Riemannian manifolds. Despite this conceptual similarity, the two methods differ in a critical design choice: GFM applies stop-gradient to only some of the JVP-related terms, necessitating higher-order derivative evaluations and incurring greater memory cost and potential training instability in high-dimensional settings. RMF avoids these issues by construction. We illustrate this with the Eulerian objectives in \cref{sec:learning}, detail the theoretical connection to GFM in \cref{appx:connection_to_GFM}, and provide empirical comparisons in \cref{appx:comparison_to_gfm}.

\section{Conclusion}
\label{sec:conclusion}

\looseness=-1
In this work, we introduce \method{}, a principled framework for one- and few-step generative modeling on manifolds, informed by a systematic study of intrinsic manifold identities, parameterizations, and stabilization techniques.
We find that aligning all these three components is important for scaling few-step generation to the high-dimensional geometries present in biological domains.
Empirically, \method{} matches the performance of strong multi-step baselines with up to $10\times$ fewer function evaluations, enabling efficient reward-guided exploration.
Overall, our work establishes the theoretical foundation and practical guidelines for fast sampling on manifolds, which we envision will facilitate AI-aided discoveries in natural sciences.

% \section*{Accessibility}

% Authors are kindly asked to make their submissions as accessible as possible
% for everyone including people with disabilities and sensory or neurological
% differences. Tips of how to achieve this and what to pay attention to will be
% provided on the conference website \url{http://icml.cc/}.

% \section*{Software and Data}

% If a paper is accepted, we strongly encourage the publication of software and
% data with the camera-ready version of the paper whenever appropriate. This can
% be done by including a URL in the camera-ready copy. However, \textbf{do not}
% include URLs that reveal your institution or identity in your submission for
% review. Instead, provide an anonymous URL or upload the material as
% ``Supplementary Material'' into the OpenReview reviewing system. Note that
% reviewers are not required to look at this material when writing their review.

\section*{Impact Statement}
This paper presents Riemannian MeanFlow, a principled framework for efficient generative modeling on manifolds. 
Our work contributes to the advancement of machine learning with specific applications in scientific domains such as protein backbone generation and DNA sequence design.
While the advancement of these generative capabilities offers significant benefits for medicine and biotechnology, we recognize that the increased efficiency of biological design tools underscores the importance of maintaining rigorous ethical oversight and bio-security screening protocols. This work establishes a principled foundation for fast sampling on manifolds, which may lead to broader implications in scientific design pipelines where generative models serve as  proposal mechanisms for property-guided optimization.

\section*{Acknowledgements}
\label{sec:acknowledgements}
\looseness=-1
We thank the authors of \citet{davis2025generalised} for valuable feedback that helped improve the presentation and comparison between our approaches.

This work was supported by several grants funded by the Korea government (MSIT): the National Research Foundation of Korea (NRF) (No. RS-2024-00436165 via the GRDC Cooperative Hub Program, No. RS-2025-02216257, and No. RS-2022-NR072184); and the Institute of Information \& Communications Technology Planning \& Evaluation (IITP) (No. RS-2025-02304967 under the AI Star Fellowship (KAIST), and No. RS-2019-II190075 under the Artificial Intelligence Graduate School Program (KAIST)).

The research was also enabled in part by computational resources provided by the Digital Research Alliance of Canada (\url{https://alliancecan.ca}) and Mila (\url{https://mila.quebec}), and was partially sponsored by Google through the Google \& Mila projects program. In addition, KN was supported by IVADO and Institut Courtois, and MS was supported by the IVADO 2025 Postdoctoral Research Funding Program. This project was undertaken thanks to funding from IVADO and the Canada First Research Excellence Fund.

\bibliography{ref}

@article{dauparas2022robust,
    title={Robust deep learning--based protein sequence design using ProteinMPNN},
    author={Dauparas, Justas and Anishchenko, Ivan and Bennett, Nathaniel and Bai, Hua and Ragotte, Robert J and Milles, Lukas F and Wicky, Basile IM and Courbet, Alexis and de Haas, Rob J and Bethel, Neville and others},
    journal={Science},
    volume={378},
    number={6615},
    pages={49--56},
    year={2022},
    publisher={American Association for the Advancement of Science}
}

@article{lin2023generating,
    title={Generating novel, designable, and diverse protein structures by equivariantly diffusing oriented residue clouds},
    author={Lin, Yeqing and AlQuraishi, Mohammed},
    journal={arXiv preprint arXiv:2301.12485},
    year={2023}
}

@InProceedings{pmlr-v202-yim23a,
    title = 	 {{SE}(3) diffusion model with application to protein backbone generation},
    author =       {Yim, Jason and Trippe, Brian L. and De Bortoli, Valentin and Mathieu, Emile and Doucet, Arnaud and Barzilay, Regina and Jaakkola, Tommi},
    booktitle = 	 {Proceedings of the 40th International Conference on Machine Learning},
    pages = 	 {40001--40039},
    year = 	 {2023},
    editor = 	 {Krause, Andreas and Brunskill, Emma and Cho, Kyunghyun and Engelhardt, Barbara and Sabato, Sivan and Scarlett, Jonathan},
    volume = 	 {202},
    series = 	 {Proceedings of Machine Learning Research},
    month = 	 {23--29 Jul},
    publisher =    {PMLR},
    url = 	 {https://proceedings.mlr.press/v202/yim23a.html}
}

@article{lin2023evolutionary,
    title={Evolutionary-scale prediction of atomic-level protein structure with a language model},
    author={Lin, Zeming and Akin, Halil and Rao, Roshan and Hie, Brian and Zhu, Zhongkai and Lu, Wenting and Smetanin, Nikita and Verkuil, Robert and Kabeli, Ori and Shmueli, Yaniv and others},
    journal={Science},
    volume={379},
    number={6637},
    pages={1123--1130},
    year={2023},
    publisher={American Association for the Advancement of Science}
}

@misc{herbert2008maxcluster,
    title={MaxCluster: a tool for protein structure comparison and clustering},
    author={Herbert, Alex and Sternberg, MJE},
    year={2008},
    publisher={ed}
}

@article{van2024fast,
    title={Fast and accurate protein structure search with Foldseek},
    author={Van Kempen, Michel and Kim, Stephanie S and Tumescheit, Charlotte and Mirdita, Milot and Lee, Jeongjae and Gilchrist, Cameron LM and S{\"o}ding, Johannes and Steinegger, Martin},
    journal={Nature biotechnology},
    volume={42},
    number={2},
    pages={243--246},
    year={2024},
    publisher={Nature Publishing Group US New York}
}

@article{zhang2005tm,
    title={TM-align: a protein structure alignment algorithm based on the TM-score},
    author={Zhang, Yang and Skolnick, Jeffrey},
    journal={Nucleic acids research},
    volume={33},
    number={7},
    pages={2302--2309},
    year={2005},
    publisher={Oxford University Press}
}

@article{geffner2025proteina,
    title={La-proteina: Atomistic protein generation via partially latent flow matching},
    author={Geffner, Tomas and Didi, Kieran and Cao, Zhonglin and Reidenbach, Danny and Zhang, Zuobai and Dallago, Christian and Kucukbenli, Emine and Kreis, Karsten and Vahdat, Arash},
    journal={arXiv preprint arXiv:2507.09466},
    year={2025}
}

@article{chandonia2022scope,
    title={SCOPe: improvements to the structural classification of proteins--extended database to facilitate variant interpretation and machine learning},
    author={Chandonia, John-Marc and Guan, Lindsey and Lin, Shiangyi and Yu, Changhua and Fox, Naomi K and Brenner, Steven E},
    journal={Nucleic acids research},
    volume={50},
    number={D1},
    pages={D553--D559},
    year={2022},
    publisher={Oxford University Press}
}

@article{yim2023fast,
    title={Fast protein backbone generation with se (3) flow matching},
    author={Yim, Jason and Campbell, Andrew and Foong, Andrew YK and Gastegger, Michael and Jim{\'e}nez-Luna, Jos{\'e} and Lewis, Sarah and Satorras, Victor Garcia and Veeling, Bastiaan S and Barzilay, Regina and Jaakkola, Tommi and others},
    journal={Proceedings of Neural Information Processing Systems Workshop: Machine Learning in Structural Biology Workshop},
    year={2023}
}

@article{berman2000protein,
    title={The protein data bank},
    author={Berman, Helen M and Westbrook, John and Feng, Zukang and Gilliland, Gary and Bhat, Talapady N and Weissig, Helge and Shindyalov, Ilya N and Bourne, Philip E},
    journal={Nucleic acids research},
    volume={28},
    number={1},
    pages={235--242},
    year={2000},
    publisher={Oxford University Press}
}

@article{burley2021rcsb,
    title={RCSB Protein Data Bank: powerful new tools for exploring 3D structures of biological macromolecules for basic and applied research and education in fundamental biology, biomedicine, biotechnology, bioengineering and energy sciences},
    author={Burley, Stephen K and Bhikadiya, Charmi and Bi, Chunxiao and Bittrich, Sebastian and Chen, Li and Crichlow, Gregg V and Christie, Cole H and Dalenberg, Kenneth and Di Costanzo, Luigi and Duarte, Jose M and others},
    journal={Nucleic acids research},
    volume={49},
    number={D1},
    pages={D437--D451},
    year={2021},
    publisher={Oxford University Press}
}

@article{li2025back,
  title={Back to basics: Let denoising generative models denoise},
  author={Li, Tianhong and He, Kaiming},
  journal={arXiv preprint arXiv:2511.13720},
  year={2025}
}

@article{geng2025mean,
  title={Mean flows for one-step generative modeling},
  author={Geng, Zhengyang and Deng, Mingyang and Bai, Xingjian and Kolter, J Zico and He, Kaiming},
  journal={arXiv preprint arXiv:2505.13447},
  year={2025}
}

@article{lu2024simplifying,
  title={Simplifying, stabilizing and scaling continuous-time consistency models},
  author={Lu, Cheng and Song, Yang},
  journal={arXiv preprint arXiv:2410.11081},
  year={2024}
}

@article{davis2025generalised,
  title={Generalised Flow Maps for Few-Step Generative Modelling on Riemannian Manifolds},
  author={Davis, Oscar and Albergo, Michael S and Boffi, Nicholas M and Bronstein, Michael M and Bose, Avishek Joey},
  journal={arXiv preprint arXiv:2510.21608},
  year={2025}
}

@article{stark2024dirichlet,
    title={Dirichlet flow matching with applications to dna sequence design},
    author={Stark, Hannes and Jing, Bowen and Wang, Chenyu and Corso, Gabriele and Berger, Bonnie and Barzilay, Regina and Jaakkola, Tommi},
    journal={arXiv preprint arXiv:2402.05841},
    year={2024}
}

@article{hon2017atlas,
    title={An atlas of human long non-coding RNAs with accurate $5^\prime$ ends},
    author={Hon, Chung-Chau and Ramilowski, Jordan A and Harshbarger, Jayson and Bertin, Nicolas and Rackham, Owen JL and Gough, Julian and Denisenko, Elena and Schmeier, Sebastian and Poulsen, Thomas M and Severin, Jessica and others},
    journal={Nature},
    volume={543},
    number={7644},
    pages={199--204},
    year={2017},
    publisher={Nature Publishing Group UK London}
}

@article{davis2024fisher,
    title={Fisher flow matching for generative modeling over discrete data},
    author={Davis, Oscar and Kessler, Samuel and Petrache, Mircea and Ceylan, {\.I}smail {\.I} and Bronstein, Michael and Bose, Avishek J},
    journal={Advances in Neural Information Processing Systems},
    volume={37},
    pages={139054--139084},
    year={2024}
}

@article{lipman2022flow,
  title={Flow matching for generative modeling},
  author={Lipman, Yaron and Chen, Ricky TQ and Ben-Hamu, Heli and Nickel, Maximilian and Le, Matt},
  journal={arXiv preprint arXiv:2210.02747},
  year={2022}
}

@article{chen2023flow,
  title={Flow matching on general geometries},
  author={Chen, Ricky TQ and Lipman, Yaron},
  journal={arXiv preprint arXiv:2302.03660},
  year={2023}
}

@article{cheng2025riemannian,
  title={Riemannian Consistency Model},
  author={Cheng, Chaoran and Wang, Yusong and Chen, Yuxin and Zhou, Xiangxin and Zheng, Nanning and Liu, Ge},
  journal={arXiv preprint arXiv:2510.00983},
  year={2025}
}

@article{huang2022riemannian,
  title={Riemannian diffusion models},
  author={Huang, Chin-Wei and Aghajohari, Milad and Bose, Joey and Panangaden, Prakash and Courville, Aaron C},
  journal={Advances in Neural Information Processing Systems},
  volume={35},
  pages={2750--2761},
  year={2022}
}

@article{zhou2025terminal,
  title={Terminal Velocity Matching},
  author={Zhou, Linqi and Parger, Mathias and Haque, Ayaan and Song, Jiaming},
  journal={arXiv preprint arXiv:2511.19797},
  year={2025}
}

@article{guo2025splitmeanflow,
  title={Splitmeanflow: Interval splitting consistency in few-step generative modeling},
  author={Guo, Yi and Wang, Wei and Yuan, Zhihang and Cao, Rong and Chen, Kuan and Chen, Zhengyang and Huo, Yuanyuan and Zhang, Yang and Wang, Yuping and Liu, Shouda and others},
  journal={arXiv preprint arXiv:2507.16884},
  year={2025}
}

@InProceedings{pmlr-v202-song23a,
    title = 	 {Consistency Models},
    author =       {Song, Yang and Dhariwal, Prafulla and Chen, Mark and Sutskever, Ilya},
    booktitle = 	 {Proceedings of the 40th International Conference on Machine Learning},
    pages = 	 {32211--32252},
    year = 	 {2023},
    editor = 	 {Krause, Andreas and Brunskill, Emma and Cho, Kyunghyun and Engelhardt, Barbara and Sabato, Sivan and Scarlett, Jonathan},
    volume = 	 {202},
    series = 	 {Proceedings of Machine Learning Research},
    month = 	 {23--29 Jul},
    publisher =    {PMLR},
    url = 	 {https://proceedings.mlr.press/v202/song23a.html}
}

@article{boffi2025build,
  title={How to build a consistency model: Learning flow maps via self-distillation},
  author={Boffi, Nicholas M and Albergo, Michael S and Vanden-Eijnden, Eric},
  journal={arXiv preprint arXiv:2505.18825},
  year={2025}
}

@article{rehman2025falcon,
  title={FALCON: Few-step Accurate Likelihoods for Continuous Flows},
  author={Rehman, Danyal and Akhound-Sadegh, Tara and Gazizov, Artem and Bengio, Yoshua and Tong, Alexander},
  journal={arXiv preprint arXiv:2512.09914},
  year={2025}
}

@article{mathieu2020riemannian,
  title={Riemannian continuous normalizing flows},
  author={Mathieu, Emile and Nickel, Maximilian},
  journal={Advances in neural information processing systems},
  volume={33},
  pages={2503--2515},
  year={2020}
}

@article{song2023improved,
    title={Improved techniques for training consistency models},
    author={Song, Yang and Dhariwal, Prafulla},
    journal={arXiv preprint arXiv:2310.14189},
    year={2023} 
}

@article{song2020score,
  title={Score-based generative modeling through stochastic differential equations},
  author={Song, Yang and Sohl-Dickstein, Jascha and Kingma, Diederik P and Kumar, Abhishek and Ermon, Stefano and Poole, Ben},
  journal={arXiv preprint arXiv:2011.13456},
  year={2020}
}

@article{albergo2023stochastic,
  title={Stochastic interpolants: A unifying framework for flows and diffusions},
  author={Albergo, Michael S and Boffi, Nicholas M and Vanden-Eijnden, Eric},
  journal={arXiv preprint arXiv:2303.08797},
  year={2023}
}

@article{cheng2024categorical,
  title={Categorical flow matching on statistical manifolds},
  author={Cheng, Chaoran and Li, Jiahan and Peng, Jian and Liu, Ge},
  journal={Advances in Neural Information Processing Systems},
  volume={37},
  pages={54787--54819},
  year={2024}
}

@article{de2022riemannian,
  title={Riemannian score-based generative modelling},
  author={De Bortoli, Valentin and Mathieu, Emile and Hutchinson, Michael and Thornton, James and Teh, Yee Whye and Doucet, Arnaud},
  journal={Advances in neural information processing systems},
  volume={35},
  pages={2406--2422},
  year={2022}
}

@article{bose2023se,
  title={Se (3)-stochastic flow matching for protein backbone generation},
  author={Bose, Avishek Joey and Akhound-Sadegh, Tara and Huguet, Guillaume and Fatras, Kilian and Rector-Brooks, Jarrid and Liu, Cheng-Hao and Nica, Andrei Cristian and Korablyov, Maksym and Bronstein, Michael and Tong, Alexander},
  journal={arXiv preprint arXiv:2310.02391},
  year={2023}
}

@article{sabour2025test,
  title={Test-time scaling of diffusions with flow maps},
  author={Sabour, Amirmojtaba and Albergo, Michael S and Domingo-Enrich, Carles and Boffi, Nicholas M and Fidler, Sanja and Kreis, Karsten and Vanden-Eijnden, Eric},
  journal={arXiv preprint arXiv:2511.22688},
  year={2025}
}

@article{watson2022broadly,
  title={Broadly applicable and accurate protein design by integrating structure prediction networks and diffusion generative models},
  author={Watson, Joseph L and Juergens, David and Bennett, Nathaniel R and Trippe, Brian L and Yim, Jason and Eisenach, Helen E and Ahern, Woody and Borst, Andrew J and Ragotte, Robert J and Milles, Lukas F and others},
  journal={BioRxiv},
  pages={2022--12},
  year={2022},
  publisher={Cold Spring Harbor Laboratory}
}

@article{jumper2021highly,
  title={Highly accurate protein structure prediction with AlphaFold},
  author={Jumper, John and Evans, Richard and Pritzel, Alexander and Green, Tim and Figurnov, Michael and Ronneberger, Olaf and Tunyasuvunakool, Kathryn and Bates, Russ and {\v{Z}}{\'\i}dek, Augustin and Potapenko, Anna and others},
  journal={nature},
  volume={596},
  number={7873},
  pages={583--589},
  year={2021},
  publisher={Nature Publishing Group UK London}
}

@article{jing2022torsional,
  title={Torsional diffusion for molecular conformer generation},
  author={Jing, Bowen and Corso, Gabriele and Chang, Jeffrey and Barzilay, Regina and Jaakkola, Tommi},
  journal={Advances in neural information processing systems},
  volume={35},
  pages={24240--24253},
  year={2022}
}

@article{pacesa2024bindcraft,
  title={BindCraft: one-shot design of functional protein binders},
  author={Pacesa, Martin and Nickel, Lennart and Schellhaas, Christian and Schmidt, Joseph and Pyatova, Ekaterina and Kissling, Lucas and Barendse, Patrick and Choudhury, Jagrity and Kapoor, Srajan and Alcaraz-Serna, Ana and others},
  journal={bioRxiv},
  pages={2024--09},
  year={2024},
  publisher={Cold Spring Harbor Laboratory}
}

@article{han2025invdesflow,
  title={InvDesFlow-AL: active learning-based workflow for inverse design of functional materials},
  author={Han, Xiao-Qi and Guo, Peng-Jie and Gao, Ze-Feng and Sun, Hao and Lu, Zhong-Yi},
  journal={npj Computational Materials},
  year={2025},
  publisher={Nature Publishing Group UK London}
}

@inproceedings{yang2020improving,
  title={Improving molecular design by stochastic iterative target augmentation},
  author={Yang, Kevin and Jin, Wengong and Swanson, Kyle and Barzilay, Regina and Jaakkola, Tommi},
  booktitle={International Conference on Machine Learning},
  pages={10716--10726},
  year={2020},
  organization={PMLR}
}

@article{haberle2018eukaryotic,
  title={Eukaryotic core promoters and the functional basis of transcription initiation},
  author={Haberle, Vanja and Stark, Alexander},
  journal={Nature reviews Molecular cell biology},
  volume={19},
  number={10},
  pages={621--637},
  year={2018},
  publisher={Nature Publishing Group UK London}
}

@article{chen2022sequence,
  title={A sequence-based global map of regulatory activity for deciphering human genetics},
  author={Chen, Kathleen M and Wong, Aaron K and Troyanskaya, Olga G and Zhou, Jian},
  journal={Nature genetics},
  volume={54},
  number={7},
  pages={940--949},
  year={2022},
  publisher={Nature Publishing Group US New York}
}

@inproceedings{
skreta2025feynmankac,
title={Feynman-Kac Correctors in Diffusion: Annealing, Guidance, and Product of Experts},
author={Marta Skreta and Tara Akhound-Sadegh and Viktor Ohanesian and Roberto Bondesan and Alan Aspuru-Guzik and Arnaud Doucet and Rob Brekelmans and Alexander Tong and Kirill Neklyudov},
booktitle={Forty-second International Conference on Machine Learning},
year={2025},
url={https://openreview.net/forum?id=Vhc0KrcqWu}
}

@article{hasan2026discrete,
  title={Discrete feynman-kac correctors},
  author={Hasan, Mohsin and Ohanesian, Viktor and Gazizov, Artem and Bengio, Yoshua and Aspuru-Guzik, Al{\'a}n and Bondesan, Roberto and Skreta, Marta and Neklyudov, Kirill},
  journal={arXiv preprint arXiv:2601.10403},
  year={2026}
}

@article{xie2025distilled,
  title={Distilled Protein Backbone Generation},
  author={Xie, Liyang and Zhang, Haoran and Wang, Zhendong and Tansey, Wesley and Zhou, Mingyuan},
  journal={arXiv preprint arXiv:2510.03095},
  year={2025}
}

@article{sabour2025align,
  title={Align Your Flow: Scaling Continuous-Time Flow Map Distillation},
  author={Sabour, Amirmojtaba and Fidler, Sanja and Kreis, Karsten},
  journal={arXiv preprint arXiv:2506.14603},
  year={2025}
}

@article{lou2020neural,
  title={Neural manifold ordinary differential equations},
  author={Lou, Aaron and Lim, Derek and Katsman, Isay and Huang, Leo and Jiang, Qingxuan and Lim, Ser Nam and De Sa, Christopher M},
  journal={Advances in Neural Information Processing Systems},
  volume={33},
  pages={17548--17558},
  year={2020}
}

@article{geng2024consistency,
  title={Consistency models made easy},
  author={Geng, Zhengyang and Pokle, Ashwini and Luo, William and Lin, Justin and Kolter, J Zico},
  journal={arXiv preprint arXiv:2406.14548},
  year={2024}
}

@article{faltings2025protein,
  title={Protein FID: Improved Evaluation of Protein Structure Generative Models},
  author={Faltings, Felix and Stark, Hannes and Jaakkola, Tommi and Barzilay, Regina},
  journal={arXiv preprint arXiv:2505.08041},
  year={2025}
}

@article{lu2025assessing,
  title={Assessing generative model coverage of protein structures with SHAPES},
  author={Lu, Tianyu and Liu, Melissa and Chen, Yilin and Kim, Jinho and Huang, Po-Ssu},
  journal={bioRxiv},
  year={2025}
}

@article{hartman2025controllable,
  title={Controllable protein design through Feynman-Kac steering},
  author={Hartman, Erik and Wallin, Jonas and Malmstr{\"o}m, Johan and Olsson, Jimmy},
  journal={arXiv preprint arXiv:2511.09216},
  year={2025}
}

@article{kabsch1983dictionary,
  title={Dictionary of protein secondary structure: pattern recognition of hydrogen-bonded and geometrical features},
  author={Kabsch, Wolfgang and Sander, Christian},
  journal={Biopolymers: Original Research on Biomolecules},
  volume={22},
  number={12},
  pages={2577--2637},
  year={1983},
  publisher={Wiley Online Library}
}

@article{geffner2025fullatomproteina,
  title={Proteina: Scaling flow-based protein structure generative models},
  author={Geffner, Tomas and Didi, Kieran and Zhang, Zuobai and Reidenbach, Danny and Cao, Zhonglin and Yim, Jason and Geiger, Mario and Dallago, Christian and Kucukbenli, Emine and Vahdat, Arash and others},
  journal={arXiv preprint arXiv:2503.00710},
  year={2025}
}

@article{wang2022scaffolding,
  title={Scaffolding protein functional sites using deep learning},
  author={Wang, Jue and Lisanza, Sidney and Juergens, David and Tischer, Doug and Watson, Joseph L and Castro, Karla M and Ragotte, Robert and Saragovi, Amijai and Milles, Lukas F and Baek, Minkyung and others},
  journal={Science},
  volume={377},
  number={6604},
  pages={387--394},
  year={2022},
  publisher={American Association for the Advancement of Science}
}

@inproceedings{
trippe2023diffusion,
title={Diffusion Probabilistic Modeling of Protein Backbones in 3D for the motif-scaffolding problem},
author={Brian L. Trippe and Jason Yim and Doug Tischer and David Baker and Tamara Broderick and Regina Barzilay and Tommi S. Jaakkola},
booktitle={The Eleventh International Conference on Learning Representations },
year={2023},
url={https://openreview.net/forum?id=6TxBxqNME1Y}
}

@article{
yim2024improved,
title={Improved motif-scaffolding with {SE}(3) flow matching},
author={Jason Yim and Andrew Campbell and Emile Mathieu and Andrew Y. K. Foong and Michael Gastegger and Jose Jimenez-Luna and Sarah Lewis and Victor Garcia Satorras and Bastiaan S. Veeling and Frank Noe and Regina Barzilay and Tommi Jaakkola},
journal={Transactions on Machine Learning Research},
issn={2835-8856},
year={2024},
url={https://openreview.net/forum?id=fa1ne8xDGn},
note={}
}

@article{zheng2024scaffoldlab,
title = {Scaffold-Lab: Critical Evaluation and Ranking of Protein Backbone Generation Methods in A Unified Framework},
author = {Zheng, Zhuoqi and Zhang, Bo and Zhong, Bozitao and Liu, Kexin and Li, Zhengxin and Zhu, Junjie and Yu, Jinyu and Wei Ting and Chen, Haifeng},
year = {2024},
journal = {bioRxiv},
url = {https://www.biorxiv.org/content/10.1101/2024.02.10.579743v3}
}

@article{boffi2024flow,
  title={Flow map matching with stochastic interpolants: A mathematical framework for consistency models},
  author={Boffi, Nicholas M and Albergo, Michael S and Vanden-Eijnden, Eric},
  journal={arXiv preprint arXiv:2406.07507},
  year={2024}
}
\bibliographystyle{icml2026}

\newpage
\appendix
\onecolumn

\crefalias{section}{appendix}
\crefalias{subsection}{appendix}
\newpage

\renewcommand{\thefigure}{A\arabic{figure}}  % For figures
\renewcommand{\thetable}{A\arabic{table}}    % For tables
\setcounter{figure}{0}  % Reset figure counter
\setcounter{table}{0}   % Reset table 

\section{Tutorial on Riemannian Manifolds}
\label{appx:riemannian_tutorial}

This appendix provides a self-contained introduction to Riemannian geometry, covering the essential concepts needed to understand flow-based generative models on manifolds. We aim to build intuition while maintaining mathematical rigor.

\subsection{Smooth Manifolds}

A \textbf{smooth manifold} $\mathcal{M}$ of dimension $d$ is a topological space that locally resembles Euclidean space $\mathbb{R}^d$. Formally, every point $x \in \mathcal{M}$ has a neighborhood that can be mapped smoothly and bijectively to an open subset of $\mathbb{R}^d$. These local maps are called \textit{charts}, and a collection of charts covering $\mathcal{M}$ is called an \textit{atlas}.

\begin{examplebox}
\begin{example}[The $d$-dimensional sphere]
The unit sphere $\mathbb{S}^d = \{x \in \mathbb{R}^{d+1} : \|x\|_2 = 1\}$ is a $d$-dimensional manifold embedded in $\mathbb{R}^{d+1}$. Although it lives in $(d+1)$-dimensional space, it has only $d$ degrees of freedom.
\end{example}
\end{examplebox}

\begin{examplebox}
\begin{example}[The probability simplex]
The probability simplex $\Delta^{d-1} = \{p \in \mathbb{R}^d : p_i \geq 0, \sum_i p_i = 1\}$ represents discrete probability distributions over $d$ categories. This is the natural space for DNA sequences (with $d=4$ for nucleotides A, C, G, T).
\end{example}
\end{examplebox}

\begin{examplebox}
\begin{example}[Special Euclidean group $\mathrm{SE}(3)$]
The group $\mathrm{SE}(3) = \mathrm{SO}(3) \ltimes \mathbb{R}^3$ describes rigid body transformations in 3D space, consisting of rotations and translations. For protein backbone generation, we work on the product manifold $\mathrm{SE}(3)^N$ representing $N$ residue frames.
\end{example}
\end{examplebox}

\subsection{Tangent Spaces and Tangent Bundles}

At each point $x \in \mathcal{M}$, the \textbf{tangent space} $T_x\mathcal{M}$
is a vector space of the same dimension as $\mathcal{M}$.
Intuitively, $T_x\mathcal{M}$ collects all possible infinitesimal directions
in which one can move on the manifold starting from $x$.

\begin{definition}[Tangent vector]
A tangent vector $v \in T_x\mathcal{M}$ can be defined as the velocity of a smooth curve
$\gamma : (-\epsilon, \epsilon) \to \mathcal{M}$ with $\gamma(0) = x$:
\begin{equation}
    v
    =
    \dot{\gamma}(0)
    =
    \left.\frac{d\gamma(t)}{dt}\right|_{t=0}.
\end{equation}
\end{definition}

Under this definition, a tangent vector captures only the \emph{first-order behavior}
of a curve at the point $x$; different curves may represent the same tangent vector
as long as they induce the same instantaneous change at $t=0$.

\paragraph{Tangent vectors and differentials.}
Tangent vectors describe infinitesimal changes at a point.
Differentials describe how such changes propagate through functions.
Concretely, let $f : \mathcal{M} \to \mathcal{N}$ be a smooth map, and let
$v \in T_x\mathcal{M}$ be represented by a curve $\gamma(t)$ with $\dot{\gamma}(0)=v$.
Composing the curve with $f$ yields a new curve $f \circ \gamma(t)$ in $\mathcal{N}$,
whose velocity at $t=0$ defines the induced change in the output:
\begin{equation}
    d f_x (v)
    :=
    \left.\frac{d}{dt} f(\gamma(t)) \right|_{t=0}
    \;\in\;
    T_{f(x)} \mathcal{N}.
\end{equation}
Thus, a tangent vector specifies \emph{what infinitesimal change is applied at the input},
while the differential specifies \emph{how the function responds to that change}.

For manifolds embedded in $\mathbb{R}^D$, the tangent space can often be realized
as a linear subspace of the ambient space.
For example, for the sphere $\mathbb{S}^{d} \subset \mathbb{R}^{d+1}$,
the tangent space at $x$ consists of all vectors orthogonal to $x$:
\begin{equation}
    T_x\mathbb{S}^{d}
    =
    \{ v \in \mathbb{R}^{d+1} : \langle v, x \rangle = 0 \}.
\end{equation}

The collection of all tangent spaces forms the \textbf{tangent bundle} as follows:
\begin{equation}
    T\mathcal{M}
    =
    \bigsqcup_{x \in \mathcal{M}} T_x\mathcal{M},
\end{equation}
which itself has the structure of a smooth manifold of dimension $2d$.
Points in $T\mathcal{M}$ are pairs $(x, v)$ consisting of a base point
and a tangent vector attached to it.

\begin{definition}[Projection onto tangent space]
For embedded manifolds, ambient vectors in $\mathbb{R}^D$ can be mapped
to the tangent space via orthogonal projection.
The projection $\mathrm{Proj}_x : \mathbb{R}^D \to T_x\mathcal{M}$
extracts the tangential component of a vector.
For the sphere, it is given by
\begin{equation}
    \mathrm{Proj}_x(v)
    =
    v - \langle v, x \rangle x.
\end{equation}
\end{definition}

% \subsection{Tangent Spaces and Tangent Bundles}

% At each point $x \in \mathcal{M}$, we can define a \textbf{tangent space} $T_x\mathcal{M}$, which is a vector space of the same dimension as $\mathcal{M}$. Intuitively, $T_x\mathcal{M}$ consists of all possible ``velocity vectors'' of curves passing through $x$.

% \begin{definition}[Tangent vector]
% A tangent vector $v \in T_x\mathcal{M}$ can be defined as the velocity of a smooth curve $\gamma: (-\epsilon, \epsilon) \to \mathcal{M}$ with $\gamma(0) = x$:
% \begin{equation}
%     v = \dot{\gamma}(0) = \left.\frac{d\gamma(t)}{dt}\right|_{t=0}.
% \end{equation}
% \end{definition}

% For a manifold embedded in $\mathbb{R}^D$, the tangent space at $x$ can often be characterized as a linear subspace of $\mathbb{R}^D$. For example, for the sphere $\mathbb{S}^{d} \subset \mathbb{R}^{d+1}$:
% \begin{equation}
%     T_x\mathbb{S}^{d} = \{v \in \mathbb{R}^{d+1} : \langle v, x \rangle = 0\}.
% \end{equation}

% The \textbf{tangent bundle} $T\mathcal{M} = \bigsqcup_{x \in \mathcal{M}} T_x\mathcal{M}$ is the disjoint union of all tangent spaces, forming a $2d$-dimensional manifold itself.

% \begin{definition}[Projection onto tangent space]
% For embedded manifolds, we often need to project ambient vectors onto the tangent space. The orthogonal projection $\mathrm{Proj}_x: \mathbb{R}^D \to T_x\mathcal{M}$ maps vectors to their tangential components. For the sphere:
% \begin{equation}
%     \mathrm{Proj}_x(v) = v - \langle v, x \rangle x.
% \end{equation}
% \end{definition}

\subsection{Riemannian Metrics}

A \textbf{Riemannian metric} $g$ assigns to each point $x \in \mathcal{M}$ an inner product $\langle \cdot, \cdot \rangle_x$ on the tangent space $T_x\mathcal{M}$, varying smoothly with $x$. This metric allows us to measure lengths, angles, and volumes on the manifold.

\begin{definition}[Riemannian manifold]
A Riemannian manifold $(\mathcal{M}, g)$ is a smooth manifold $\mathcal{M}$ equipped with a Riemannian metric $g$. The induced norm on $T_x\mathcal{M}$ is $\|v\|_x = \sqrt{\langle v, v \rangle_x}$.
\end{definition}

\begin{examplebox}
\begin{example}[Euclidean metric]
For $\mathbb{R}^d$, the standard Euclidean metric is $\langle u, v \rangle_x = u^\top v$, independent of $x$.
\end{example}
\end{examplebox}

\begin{examplebox}
\begin{example}[Induced metric on spheres]
For $\mathbb{S}^d \subset \mathbb{R}^{d+1}$, the induced metric is simply the restriction of the Euclidean inner product: $\langle u, v \rangle_x = u^\top v$ for $u, v \in T_x\mathbb{S}^d$.
\end{example}
\end{examplebox}

\begin{examplebox}
\begin{example}[Fisher-Rao metric on the simplex]
On the probability simplex $\Delta^{d-1}$, the Fisher-Rao metric is:
\begin{equation}
    \langle u, v \rangle_p = \sum_{i=1}^d \frac{u_i v_i}{p_i},
\end{equation}
where $u, v \in T_p\Delta^{d-1}$ satisfy $\sum_i u_i = \sum_i v_i = 0$. This metric is natural for statistical manifolds and is used in DNA sequence modeling.
\end{example}
\end{examplebox}

\subsection{Geodesics}

A \textbf{geodesic} is a curve that locally minimizes distance on a Riemannian manifold---the generalization of straight lines to curved spaces.

\begin{definition}[Geodesic]
A smooth curve $\gamma: [0,1] \to \mathcal{M}$ is a geodesic if it satisfies the geodesic equation:
\begin{equation}
    \nabla_{\dot{\gamma}} \dot{\gamma} = 0,
\end{equation}
where $\nabla$ is the Levi-Civita connection (defined below). Intuitively, this means the velocity vector undergoes parallel transport along the curve.
\end{definition}

\begin{example}[Geodesics on the sphere]
On the unit sphere $\mathbb{S}^d$, geodesics are great circles. The geodesic from $x$ to $y$ (assuming they are not antipodal) lies in the plane spanned by $x$, $y$, and the origin.
\end{example}

The \textbf{geodesic distance} $d_g(x, y)$ between two points is the length of the shortest geodesic connecting them:
\begin{equation}
    d_g(x, y) = \inf_{\gamma} \int_0^1 \|\dot{\gamma}(t)\|_{\gamma(t)} \, dt,
\end{equation}
where the infimum is over all smooth curves $\gamma$ with $\gamma(0) = x$ and $\gamma(1) = y$.

\subsection{Exponential and Logarithmic Maps}

The exponential and logarithmic maps provide a way to move between the tangent space and the manifold, which is essential for defining interpolations and flow maps.

\begin{definition}[Exponential map]
The exponential map $\exp_x: T_x\mathcal{M} \to \mathcal{M}$ maps a tangent vector $v$ to the endpoint of the geodesic starting at $x$ with initial velocity $v$, evaluated at time $t=1$:
\begin{equation}
    \exp_x(v) = \gamma(1), \quad \text{where } \gamma(0) = x, \; \dot{\gamma}(0) = v.
\end{equation}
More generally, $\exp_x(tv)$ traces out the geodesic for $t \in [0,1]$.
\end{definition}

\begin{definition}[Logarithmic map]
The logarithmic map $\log_x: \mathcal{M} \to T_x\mathcal{M}$ is the (local) inverse of the exponential map:
\begin{equation}
    \log_x(y) = v \quad \text{such that} \quad \exp_x(v) = y.
\end{equation}
The logarithmic map exists and is unique in a neighborhood of $x$ (within the \textit{injectivity radius}).
\end{definition}

\begin{proposition}[Properties]
The exponential and logarithmic maps satisfy:
\begin{enumerate}
    \item $\exp_x(\log_x(y)) = y$ for $y$ sufficiently close to $x$
    \item $\log_x(\exp_x(v)) = v$ for $\|v\|_x$ sufficiently small
    \item $\|\log_x(y)\|_x = d_g(x, y)$ (the norm of the log equals geodesic distance)
    \item $\exp_x(0) = x$ and $\log_x(x) = 0$
\end{enumerate}
\end{proposition}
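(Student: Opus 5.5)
The four properties all follow from the definition of the exponential map via geodesics, together with the local diffeomorphism property of $\exp_x$ near the origin of $T_x\mathcal{M}$. I will first establish (4), then use the inverse function theorem to obtain (1) and (2), and finally prove (3) via the Gauss lemma. Throughout, ``sufficiently close'' means within the injectivity radius $\mathrm{inj}(x)$.

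For (4), the constant curve $\gamma(t)\equiv x$ satisfies $\nabla_{\dot\gamma}\dot\gamma=0$ with $\gamma(0)=x$ and $\dot\gamma(0)=0$. By uniqueness of solutions to the geodesic ODE, it is the geodesic with zero initial velocity, so $\exp_x(0)=x$. The relation $\log_x(x)=0$ then follows once $\log_x$ is defined as the inverse of $\exp_x$ on a neighborhood of $0$.

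For (1) and (2), I compute the differential $d(\exp_x)_0$. The homogeneity property of geodesics, $\gamma_{tv}(1)=\gamma_v(t)$ (proved by rescaling time in the geodesic equation), gives
\[
\frac{d}{dt}\exp_x(tv)\big|_{t=0}=v ,
\]
so $d(\exp_x)_0=\mathrm{id}$. By the inverse function theorem, $\exp_x$ restricts to a diffeomorphism from a ball $B_\epsilon(0)\subset T_x\mathcal{M}$ onto a neighborhood $U$ of $x$. Defining $\log_x:=(\exp_x|_{B_\epsilon})^{-1}$ on $U$ immediately yields (1) for $y\in U$ and (2) for $\|v\|_x<\epsilon$.

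For (3), write $y=\exp_x(v)$ with $\|v\|_x<\epsilon$. The radial geodesic $t\mapsto\exp_x(tv)$ has constant speed, since $\tfrac{d}{dt}\|\dot\gamma\|^2=2\langle\nabla_{\dot\gamma}\dot\gamma,\dot\gamma\rangle=0$. Its length is therefore $\|v\|_x$, which gives $d_g(x,y)\le\|v\|_x$. The reverse inequality is the \textbf{main obstacle}. For it I will invoke the Gauss lemma: in geodesic normal coordinates, radial directions are orthogonal to geodesic spheres. Hence any piecewise smooth curve from $x$ to $y$ has length at least the change in the radial coordinate $r=\|\log_x(\cdot)\|_x$, obtained by decomposing $\dot c$ into radial and spherical parts and dropping the spherical part. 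A curve that leaves the normal ball must travel radial distance at least $\epsilon>\|v\|_x$, so it is even longer. Combining the two inequalities gives $\|\log_x(y)\|_x=d_g(x,y)$. Since this is a tutorial appendix, I would cite the Gauss lemma from a standard text (e.g., do Carmo) rather than re-derive it.
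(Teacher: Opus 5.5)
The paper gives no proof of this proposition. It sits in the tutorial appendix as standard background, with $\log_x$ simply declared to be the local inverse of $\exp_x$ within the injectivity radius. So there is no argument of the paper's to compare yours against. Your proposal is the standard textbook proof and it is correct: uniqueness of geodesics gives (4), $d(\exp_x)_0=\mathrm{id}$ with the inverse function theorem gives (1)--(2), and constant speed with the Gauss lemma gives (3).

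A few small points are worth tightening if you write it out.

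\begin{itemize}
\item You announce that ``sufficiently close'' means within $\mathrm{inj}(x)$, but the inverse function theorem only hands you some $\epsilon$. Your argument in fact works verbatim for any $\epsilon$ such that $\exp_x$ is a diffeomorphism from $B_\epsilon(0)$ onto its image, which by definition includes every $\epsilon\le\mathrm{inj}(x)$. Phrase the scope that way.
\item In the lower bound, the radial function $r=\|\log_x(\cdot)\|_x$ is not smooth at $x$ and is not defined on the boundary of the open normal ball. For a curve $c$ from $x$ to $y$, integrate $\bigl|\tfrac{d}{dt} r(c(t))\bigr|\le\|\dot c(t)\|_{c(t)}$ only after the last time $c$ passes through $x$.
\item For curves that leave the ball, use the first time they hit the geodesic sphere of radius $\epsilon'$, with $\|v\|_x<\epsilon'<\epsilon$, rather than the boundary of the ball itself.
\item Item (3) is established only on the domain of $\log_x$, namely the normal ball. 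That is the right reading of the unqualified statement, given the paper's local definition of $\log_x$.
\end{itemize}
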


\begin{examplebox}
\begin{example}[Exponential and logarithmic maps on $\mathbb{S}^d$]
For the unit sphere, let $x \in \mathbb{S}^d$ and $v \in T_x\mathbb{S}^d$ with $\|v\| \neq 0$:
\begin{align}
    \exp_x(v) &= \cos(\|v\|) x + \sin(\|v\|) \frac{v}{\|v\|}, \\
    \log_x(y) &= \frac{\theta}{\sin\theta}(y - \cos\theta \cdot x), \quad \theta = \arccos(\langle x, y \rangle).
\end{align}
When $v = 0$, we have $\exp_x(0) = x$. When $y = x$, we have $\log_x(x) = 0$.
\end{example}
\end{examplebox}

\subsection{Geodesic Interpolation}

Using the exponential and logarithmic maps, we can define geodesic interpolation between two points, which is fundamental for flow matching on manifolds.

\begin{definition}[Geodesic interpolant]
Given $x_0, x_1 \in \mathcal{M}$, the geodesic interpolant at time $t \in [0,1]$ is:
\begin{equation}
    x_t = \exp_{x_0}(t \log_{x_0}(x_1)).
\end{equation}
This traces out the geodesic from $x_0$ to $x_1$ with constant speed.
\end{definition}

The velocity of this geodesic is constant along the path:
\begin{equation}
    \dot{x}_t = \frac{d}{dt} x_t = P_{x_0 \to x_t}(\log_{x_0}(x_1)),
\end{equation}
where $P_{x_0 \to x_t}$ denotes parallel transport (defined below).

\subsection{Covariant Derivatives and Connections}
To rigorously formalize the geometric notions introduced above, we require the concept of a \emph{connection}.
In Euclidean space, geometric quantities such as velocity and acceleration along a curve are defined using ordinary derivatives:
acceleration is simply the derivative of velocity.
On a manifold, however, this notion breaks down.
The velocity $\dot\gamma(t)$ of a curve $\gamma(t)$ lies in the tangent space $T_{\gamma(t)}\mathcal M$, which varies with $t$,
so ordinary differentiation would attempt to compare vectors living in different tangent spaces.

A \textbf{connection} resolves this issue by prescribing a consistent way to differentiate vector fields on a manifold,
thereby enabling meaningful notions of acceleration and variation of tangent vectors.

\subsubsection{Vector Fields as Differential Operators}
Before introducing connections, it is useful to recall a fundamental viewpoint from differential geometry:
\emph{vector fields act as differential operators on functions}.
This operator perspective will be central to the definition of covariant derivatives.

\begin{definition}[Directional derivative along a vector field]
Let $X$ be a vector field on $\mathcal M$ and $f:\mathcal M\to\mathbb R$ a smooth function.
The \textbf{directional derivative} of $f$ along $X$ is the function $Xf:\mathcal M\to\mathbb R$ defined by
\begin{equation}
    (Xf)(x)=df_x(X(x)),
\end{equation}
where $df_x:T_x\mathcal M\to\mathbb R$ denotes the differential of $f$ at $x$.
\end{definition}

Equivalently, if $\gamma(t)$ is any curve satisfying $\gamma(0)=x$ and $\dot\gamma(0)=X(x)$, then
\begin{equation}
    (Xf)(x)=\left.\frac{d}{dt}\right|_{t=0}f(\gamma(t)),
\end{equation}
which represents the rate of change of $f$ in the direction $X(x)$.

\begin{proposition}[Properties of directional derivatives]
For vector fields $X,Y$, functions $f,g$, and constant $c$:
\begin{enumerate}
    \item \textbf{Linearity in $X$:} $(X+Y)f=Xf+Yf$ and $(cX)f=c(Xf)$.
    \item \textbf{Linearity in $f$:} $X(f+g)=Xf+Xg$ and $X(cf)=c(Xf)$.
    \item \textbf{Leibniz rule:} $X(fg)=(Xf)g+f(Xg)$.
    \item \textbf{Constants:} $X(c)=0$ for any constant function $c$.
\end{enumerate}
\end{proposition}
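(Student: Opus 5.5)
The plan is to reduce each property to the definition $(Xf)(x)=df_x(X(x))$ together with the linearity of the differential $df_x:T_x\mathcal M\to\mathbb R$ and elementary one-variable calculus along curves. All four claims are pointwise identities between functions on $\mathcal M$, so I will fix an arbitrary $x\in\mathcal M$ throughout and verify each identity at $x$.

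\textbf{Linearity in $X$.} This follows directly because $df_x$ is a linear map on $T_x\mathcal M$. The vector field $X+Y$ has value $X(x)+Y(x)$ at $x$, so
\[
((X+Y)f)(x)=df_x\bigl(X(x)+Y(x)\bigr)=df_x(X(x))+df_x(Y(x)).
\]
The scalar case $(cX)f=c(Xf)$ follows in the same way.

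\textbf{Linearity in $f$ and the Leibniz rule.} Here I will use the curve characterization. I choose a smooth curve $\gamma$ with $\gamma(0)=x$ and $\dot\gamma(0)=X(x)$; such a curve exists, for instance $\gamma(t)=\exp_x(tX(x))$, or a chart line. Then
\[
(X(fg))(x)=\left.\frac{d}{dt}\right|_{t=0}f(\gamma(t))\,g(\gamma(t)).
\]
The ordinary product rule for real functions of $t$ gives $(Xf)(x)\,g(x)+f(x)\,(Xg)(x)$. Additivity and homogeneity in $f$ follow in the same way from linearity of $\frac{d}{dt}$.

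\textbf{Constants.} For a constant function $c$, the composite $c\circ\gamma$ is constant in $t$, so its derivative vanishes. Equivalently, $dc_x=0$.

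\textbf{Main obstacle.} The only point needing care is that the curve-based formula is well defined, that is, independent of the chosen representative curve $\gamma$. This is exactly the statement that $\frac{d}{dt}f(\gamma(t))|_{t=0}=df_x(\dot\gamma(0))$, which is the chain rule in a chart. I will invoke this identity from the definition of the differential given in the tangent-space subsection, and not reprove it.
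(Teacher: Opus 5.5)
Your proof is correct and matches the paper, which states this proposition without proof as an immediate consequence of the definition $(Xf)(x)=df_x(X(x))$ and its curve characterization; reducing pointwise to linearity of $df_x$ and to the one-variable sum and product rules, as you do, is exactly that verification. The one fact you import, that $\left.\frac{d}{dt}\right|_{t=0}f(\gamma(t))$ depends only on $\dot\gamma(0)$ and is linear in it, comes from a single chain-rule computation in a chart, so citing it once covers both your linearity-in-$X$ step and the well-definedness of the curve formula.
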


These properties show that $X$ acts as a \emph{derivation} on the algebra of smooth functions.
In fact, one may equivalently define vector fields as derivations satisfying these properties.

\subsubsection{Affine Connections}
Directional derivatives allow us to differentiate scalar functions,
but do not provide a way to differentiate vector fields themselves.
An affine connection fills this gap.

\begin{definition}[Affine connection]
An \textbf{affine connection} $\nabla$ on a smooth manifold $\mathcal M$ assigns to each pair of vector fields $X,Y$
a new vector field $\nabla_X Y$, called the \emph{covariant derivative of $Y$ in the direction $X$},
such that for all vector fields $X,Y,Z$ and smooth functions $f,g$:
\begin{enumerate}
    \item \textbf{Linearity in $X$:} $\nabla_{fX+gZ}Y=f\nabla_XY+g\nabla_ZY$.
    \item \textbf{Linearity in $Y$:} $\nabla_X(Y+Z)=\nabla_XY+\nabla_XZ$.
    \item \textbf{Leibniz rule:} $\nabla_X(fY)=(Xf)Y+f\nabla_XY$.
\end{enumerate}
\end{definition}

The Leibniz rule reflects the operator nature of $\nabla_X$:
when differentiating $fY$, the derivative acts both on the scalar coefficient $f$
and on the vector field $Y$ itself.

Intuitively, $\nabla_XY$ measures how the vector field $Y$ changes as one moves in the direction $X$,
with the connection specifying how to compare vectors in neighboring tangent spaces.

\subsubsection{Metric Compatibility}
On a Riemannian manifold $(\mathcal M,g)$, it is natural to require the connection
to interact consistently with the metric structure.

\begin{definition}[Metric compatibility]
A connection $\nabla$ is \textbf{metric-compatible} if for all vector fields $X,Y,Z$,
\begin{equation}
    X\langle Y,Z\rangle
    =
    \langle \nabla_XY,Z\rangle
    +
    \langle Y,\nabla_XZ\rangle .
\end{equation}
\end{definition}

This condition is a product rule for the Riemannian inner product,
ensuring that differentiation commutes with taking inner products.
It is a local compatibility requirement between the connection and the metric,
and by itself does not uniquely determine the connection.

\subsubsection{The Levi--Civita Connection}
Metric compatibility alone does not specify a unique connection.
An additional natural requirement is the absence of torsion,
which enforces symmetry of differentiation and generalizes the commutativity
of partial derivatives in Euclidean space.

A classical result in Riemannian geometry shows that these two conditions
together uniquely determine the connection.

\begin{theorem}[Fundamental theorem of Riemannian geometry]
On any Riemannian manifold $(\mathcal M,g)$, there exists a unique affine connection $\nabla$,
called the \textbf{Levi--Civita connection}, satisfying:
\begin{enumerate}
    \item \textbf{Torsion-free:}
    \[
        \nabla_X Y - \nabla_Y X = [X,Y],
    \]
    \item \textbf{Metric-compatible:}
    \[
        X\langle Y,Z\rangle
        =
        \langle\nabla_XY,Z\rangle
        +
        \langle Y,\nabla_XZ\rangle .
    \]
\end{enumerate}
\end{theorem}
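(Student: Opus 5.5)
The plan is the classical proof: uniqueness via the Koszul formula, then existence by using that formula as a definition. For uniqueness, suppose $\nabla$ is torsion-free and metric-compatible, and fix vector fields $X,Y,Z$. Write metric compatibility three times, with the roles of $(X,Y,Z)$ cyclically permuted:
\begin{align}
X\langle Y,Z\rangle &= \langle\nabla_XY,Z\rangle+\langle Y,\nabla_XZ\rangle,\\
Y\langle Z,X\rangle &= \langle\nabla_YZ,X\rangle+\langle Z,\nabla_YX\rangle,\\
Z\langle X,Y\rangle &= \langle\nabla_ZX,Y\rangle+\langle X,\nabla_ZY\rangle.
\end{align}
Add the first two identities and subtract the third. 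Use torsion-freeness to replace each antisymmetric combination $\nabla_AB-\nabla_BA$ by $[A,B]$. This yields the Koszul formula
\begin{equation}
2\langle\nabla_XY,Z\rangle = X\langle Y,Z\rangle+Y\langle Z,X\rangle-Z\langle X,Y\rangle+\langle[X,Y],Z\rangle-\langle[Y,Z],X\rangle+\langle[Z,X],Y\rangle.
\end{equation}
The right-hand side involves only the metric and Lie brackets. Since $g$ is nondegenerate and $Z$ is arbitrary, $\nabla_XY$ is determined, which proves uniqueness.

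For existence, I reverse the argument. Fix $X,Y$ and let $K(Z)$ denote the right-hand side of the Koszul formula. The first step is to show that $Z\mapsto K(Z)$ is $C^\infty(\mathcal M)$-linear. Replacing $Z$ by $fZ$ produces derivative terms in $f$ from the directional derivatives and from the brackets, via $[X,fZ]=(Xf)Z+f[X,Z]$. Checking that these cancel is a routine computation. Tensoriality implies that $K(Z)(x)$ depends only on $Z(x)$. By nondegeneracy of $g$ there is therefore a unique vector field $\nabla_XY$ with $2\langle\nabla_XY,Z\rangle=K(Z)$ for all $Z$.

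It then remains to verify the axioms for this $\nabla$:
\begin{enumerate}
\item $C^\infty$-linearity in $X$, by substituting $fX$ and checking that the $df$ terms cancel.
\item Additivity in $Y$, which is immediate.
\item The Leibniz rule $\nabla_X(fY)=(Xf)Y+f\nabla_XY$, by substituting $fY$ and collecting the surviving terms $2(Xf)\langle Y,Z\rangle$.
\item Torsion-freeness, by computing $2\langle\nabla_XY-\nabla_YX,Z\rangle$ from the formula and obtaining $2\langle[X,Y],Z\rangle$.
\item Metric compatibility, by adding $2\langle\nabla_XY,Z\rangle$ and $2\langle\nabla_XZ,Y\rangle$: everything cancels except $2X\langle Y,Z\rangle$.
\end{enumerate}

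The main obstacle is bookkeeping rather than any conceptual step, specifically the tensoriality checks in the first step and in the verification of linearity in $X$. There one must track the signs of the bracket terms carefully, using antisymmetry $[A,B]=-[B,A]$ and the product rule for brackets.

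A standard remark addresses the local/global issue: the formula is local, so $\nabla_XY$ can be built on coordinate charts. In coordinates it gives the Christoffel symbols
\begin{equation}
\Gamma^k_{ij}=\tfrac12 g^{kl}(\partial_ig_{jl}+\partial_jg_{il}-\partial_lg_{ij}),
\end{equation}
and uniqueness guarantees that the local definitions agree on chart overlaps.
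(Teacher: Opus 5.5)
The paper gives no proof of this theorem; it is quoted in the tutorial appendix as a classical fact. Your argument is exactly the standard one being alluded to: uniqueness from the Koszul formula, then existence by taking that formula as the definition once $Z\mapsto K(Z)$ is shown to be $C^\infty(\mathcal{M})$-linear. It is correct, and the sign bookkeeping checks out: under $Z\mapsto fZ$ the $(Xf)$ and $(Yf)$ cross terms cancel, under $X\mapsto fX$ the $(Yf)$ and $(Zf)$ terms cancel, and $Y\mapsto fY$ leaves exactly $2(Xf)\langle Y,Z\rangle$. Your closing chart-by-chart remark is unnecessary, since the tensorial construction is already global, though the Christoffel formula you state is correct.
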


\textbf{Intuition.}
The Levi--Civita connection is the canonical choice of differentiation
that depends only on the Riemannian metric.
It generalizes ordinary derivatives in Euclidean space
and provides a consistent notion of differentiation intrinsic to the manifold geometry.

\subsubsection{Covariant Derivative Along a Curve}
While an affine connection defines differentiation between vector fields,
in dynamical and flow-based settings we primarily require differentiation
\emph{along a given trajectory} on the manifold.

\begin{definition}[Covariant derivative along a curve]
Let $\gamma:[0,1]\to\mathcal M$ be a smooth curve and
$V(t)\in T_{\gamma(t)}\mathcal M$ a vector field along $\gamma$.
The \textbf{covariant derivative} of $V$ along $\gamma$ is defined as
\begin{equation}
    D_t V := \nabla_{\dot\gamma(t)} V .
\end{equation}
\end{definition}

The operator $D_t$ provides an intrinsic notion of time differentiation
for vector-valued quantities whose ambient space varies along the curve.
In particular, $D_t V$ can be interpreted as the \emph{acceleration}
of $V(t)$ along $\gamma(t)$.

Importantly, $D_tV$ depends only on the values of $V$ along $\gamma$,
and not on how $V$ is extended to a neighborhood of the curve.

For embedded submanifolds $\mathcal M\subset\mathbb R^D$,
the covariant derivative admits a simple expression:
\begin{equation}
    D_tV
    =
    \mathrm{Proj}_{\gamma(t)}\!\left(\frac{dV}{dt}\right),
\end{equation}
where $\mathrm{Proj}_{\gamma(t)}$ denotes the orthogonal projection onto
the tangent space $T_{\gamma(t)}\mathcal M$.

\begin{examplebox}
\begin{example}[Covariant derivative on $\mathbb S^d$]
For the unit sphere $\mathbb S^d\subset\mathbb R^{d+1}$, let $V(t)$ be a vector field along a curve $\gamma(t)$.
Then
\begin{equation}
    D_tV
    =
    \frac{dV}{dt}
    -
    \left\langle
        \frac{dV}{dt},\gamma(t)
    \right\rangle
    \gamma(t),
\end{equation}
which subtracts the normal component of the Euclidean derivative to ensure tangency.
\end{example}
\end{examplebox}

\textbf{Remark.}
In our framework, $D_t$ will serve as the intrinsic analogue of time derivatives
appearing in flow-map dynamics, enabling consistent definitions of velocity
and acceleration fields on curved spaces.

\subsection{Parallel Transport}
The covariant derivative allows us to interpret $D_tV$ as the \emph{acceleration}
of a vector field transported along a curve. A vector field satisfying $D_tV=0$ is said to be \emph{parallel} along $\gamma$; such fields change as little as possible while remaining tangent to the manifold. This notion of parallel transport provides a canonical way to compare tangent vectors
at different points on $\mathcal M$ and will play a central role in defining
consistent dynamics and flow-based constructions on curved spaces.

\begin{definition}[Parallel vector field and parallel transport]
Let $\gamma:[0,1]\to\mathcal{M}$ be a smooth curve and let $V(t)\in T_{\gamma(t)}\mathcal{M}$ be a vector field along $\gamma$.
We say that $V$ is \emph{parallel} along $\gamma$ if it satisfies
\begin{equation}
    D_t V(t) = 0 \qquad \text{for all } t\in[0,1].
\end{equation}
Given an initial vector $v\in T_{\gamma(0)}\mathcal{M}$, there exists a unique parallel vector field $V(t)$ along $\gamma$
with $V(0)=v$. The \emph{parallel transport} along $\gamma$ is the linear map
\begin{equation}
    P_{\gamma,\,0\to t}:T_{\gamma(0)}\mathcal{M}\to T_{\gamma(t)}\mathcal{M},
    \qquad
    P_{\gamma,\,0\to t}(v):=V(t),
\end{equation}
where $V$ is the unique solution of $D_tV=0$ with $V(0)=v$.
\end{definition}

\subsection{Connection to Flow Map Learning}
In our Eulerian MeanFlow objective, we encounter the covariant derivative $D_s u^\theta_{s,t}(x_s)$, where $u^\theta_{s,t}$ is the learned average velocity and $x_s$ moves along an integral curve. This derivative measures how the predicted average velocity changes as we move along the flow.

For embedded manifolds, this can be computed as:
\begin{equation}
    D_s u^\theta_{s,t}(x_s) = \mathrm{Proj}_{x_s}\left(\frac{d}{ds} u^\theta_{s,t}(x_s)\right),
\end{equation}
where the total derivative $\frac{d}{ds}$ includes both explicit dependence on $s$ and implicit dependence through $x_s$:
\begin{equation}
    \frac{d}{ds} u^\theta_{s,t}(x_s) = \partial_s u^\theta_{s,t}(x_s) + du^\theta_{s,t}(x_s)[v_s(x_s)].
\end{equation}

In practice, this is computed efficiently using forward-mode automatic differentiation (Jacobian-vector products), followed by projection onto the tangent space.

\subsection{Differentials of the Exponential and Logarithmic Maps}

For flow map learning, we need to differentiate through the exponential and logarithmic maps. Let $f: \mathcal{M} \to \mathcal{M}$ be a smooth map. The \textbf{differential} $df_x: T_x\mathcal{M} \to T_{f(x)}\mathcal{M}$ is defined by:
\begin{equation}
    df_x(v) = \left.\frac{d}{dt}\right|_{t=0} f(\gamma(t)),
\end{equation}
where $\gamma$ is any curve with $\gamma(0) = x$ and $\dot{\gamma}(0) = v$.

\begin{definition}[Derivatives of the logarithmic map]
For the logarithmic map $\log: \mathcal{M} \times \mathcal{M} \to T\mathcal{M}$, we denote:
\begin{itemize}
    \item $\nabla^1_v \log_x(y)$: derivative with respect to the first argument $x$ in direction $v$
    \item $d(\log_x)_y[w]$: derivative with respect to the second argument $y$ in direction $w$
\end{itemize}
\end{definition}

These derivatives appear in our Eulerian and Lagrangian MeanFlow objectives. For many manifolds of interest (spheres, Lie groups, symmetric spaces), these derivatives have closed-form expressions.

\subsection{Product Manifolds}

Many applications involve product manifolds $\mathcal{M} = \mathcal{M}_1 \times \mathcal{M}_2 \times \cdots \times \mathcal{M}_N$.

\begin{proposition}[Geometry of product manifolds]
For a product manifold $\mathcal{M} = \prod_{i=1}^N \mathcal{M}_i$:
\begin{itemize}
    \item Tangent space: $T_x\mathcal{M} = \prod_{i=1}^N T_{x_i}\mathcal{M}_i$
    \item Metric: $\langle u, v \rangle_x = \sum_{i=1}^N \langle u_i, v_i \rangle_{x_i}$
    \item Exponential map: $\exp_x(v) = (\exp_{x_1}(v_1), \ldots, \exp_{x_N}(v_N))$
    \item Logarithmic map: $\log_x(y) = (\log_{x_1}(y_1), \ldots, \log_{x_N}(y_N))$
\end{itemize}
\end{proposition}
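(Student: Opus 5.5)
We verify the four items in the order listed, since each one uses the previous. Throughout, let $\pi_i:\mathcal{M}\to\mathcal{M}_i$ denote the canonical projections and $\iota_i$ the inclusions at a fixed base point. The product manifold $\mathcal{M}=\prod_i \mathcal{M}_i$ carries the product smooth structure, whose charts are products of charts $\varphi_1\times\cdots\times\varphi_N$.

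\textbf{Tangent space.} We use the curve definition of tangent vectors. A curve $\gamma(t)=(\gamma_1(t),\ldots,\gamma_N(t))$ through $x$ has velocity determined by the component velocities $\dot\gamma_i(0)\in T_{x_i}\mathcal{M}_i$. This is because, in a product chart, the coordinate expression of $\gamma$ is the concatenation of the coordinate expressions of the $\gamma_i$. Hence the map
\[
v\mapsto \bigl(d(\pi_1)_x v,\ldots,d(\pi_N)_x v\bigr)
\]
is a linear isomorphism $T_x\mathcal{M}\to\prod_i T_{x_i}\mathcal{M}_i$. Its inverse is $(v_1,\ldots,v_N)\mapsto\sum_i d(\iota_i)v_i$. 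Both statements follow by comparing dimensions and checking injectivity in coordinates.

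\textbf{Metric.} The product Riemannian metric is \emph{defined} by $\langle u,v\rangle_x=\sum_i\langle u_i,v_i\rangle_{x_i}$ under the identification above. It remains to note that this is smooth, symmetric and positive definite, each of which is inherited componentwise.

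\textbf{Exponential map.} Here the key step is that the Levi-Civita connection of the product acts componentwise, $(\nabla_X Y)_i=\nabla^{(i)}_{X_i}Y_i$, for vector fields that split as sums of fields each depending only on its own factor. We will verify that this componentwise operator is an affine connection, is torsion-free (brackets of split fields split), and is metric compatible (sum the compatibility identities of the factors). By uniqueness in the fundamental theorem of Riemannian geometry stated above, it is then the Levi-Civita connection. General fields are $C^\infty$-combinations of split fields, and the Leibniz rule extends the formula to them. Consequently the geodesic equation $\nabla_{\dot\gamma}\dot\gamma=0$ decouples into $\nabla^{(i)}_{\dot\gamma_i}\dot\gamma_i=0$ for each $i$. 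So $\gamma$ is a geodesic iff each $\gamma_i$ is one, and uniqueness of geodesics with given initial data yields $\exp_x(v)=(\exp_{x_1}(v_1),\ldots,\exp_{x_N}(v_N))$. I expect this connection-splitting step to be the main obstacle, mostly in handling non-split vector fields carefully; an alternative is to check the Christoffel symbols in a product chart, which are block diagonal with no mixed terms.

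\textbf{Logarithmic map.} Since $\exp_x$ is the product of the maps $\exp_{x_i}$, its local inverse on a product of neighborhoods $\prod_i U_i$ is the product of the local inverses. Here each $U_i$ lies within the injectivity radius of $x_i$, and this product set lies in the domain of $\log_x$. The only subtlety is that $\log$ is defined only locally, so we should state the identity on $\prod_i U_i$. Away from it, the minimizing-geodesic interpretation still works because $d_g(x,y)^2=\sum_i d_{g_i}(x_i,y_i)^2$, which follows from the geodesic splitting.
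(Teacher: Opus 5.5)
The paper gives no proof of this proposition. It appears in the tutorial appendix as standard background, so there is no argument of the authors' to compare yours with. Your outline is the standard proof, and it is correct in substance: you identify $T_x\mathcal{M}$ through $d\pi_i$ and $d\iota_i$, take the product metric as the definition, split the Levi-Civita connection so the geodesic equation decouples, and invert $\exp_x$ factorwise. Two steps should be stated more carefully.

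\emph{Connection.} The formula $(\nabla_XY)_i=\nabla^{(i)}_{X_i}Y_i$ is valid only for split fields. For a general $Y$, the $i$-th component of $\nabla_XY$ also contains derivatives of $Y_i$ along the directions $X_j$, $j\neq i$. So the Leibniz rule extends the \emph{connection}, not the formula. Turning the split-field prescription into a connection on all fields, so that uniqueness applies, also needs a well-definedness check you have not done.

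It is simpler to compute the Levi-Civita connection, which already exists, directly. In a product chart the metric is block diagonal, and each block depends only on its own factor's coordinates. Hence $\Gamma^k_{ab}=0$ unless $a,b,k$ lie in one block, where it equals that factor's symbol; this is your suggested alternative, or equivalently the Koszul formula on lifted coordinate fields. It follows that $(D_tV)_i=D^{(i)}_tV_i$ for \emph{every} field $V$ along a curve, because $D_tV$ involves only $\dot V$ and $\Gamma(\gamma(t))$. That is all the decoupling of $D_t\dot\gamma=0$ needs.

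\emph{Logarithm and distance.} The identity $d_g(x,y)^2=\sum_i d_{g_i}(x_i,y_i)^2$ does not follow from geodesic splitting alone. Splitting only gives ``$\le$'', since a product of minimizing geodesics is a geodesic of that length. For ``$\ge$'', write $d_g(x,y)^2=\inf_\gamma\int_0^1\|\dot\gamma\|_g^2\,dt$ over curves on $[0,1]$. The integrand equals $\sum_i\|\dot\gamma_i\|^2$ with the components free to be chosen independently, so the infimum splits. Minkowski's integral inequality also works.

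You need this to pin down the branch of $\log_x$, because a product of injectivity balls is in general \emph{not} contained in the injectivity ball of $x$. For example, on the flat torus $\mathbb{R}^2/2\pi\mathbb{Z}^2$, the point $(0.9\pi,0.9\pi)$ lies in the product of the factors' injectivity balls about the origin. Yet its distance from the origin is $0.9\sqrt{2}\,\pi>\pi=\mathrm{inj}$. So ``this product set lies in the domain of $\log_x$'' is true only if that domain is the complement of the cut locus, not the injectivity-radius ball used in the paper's definition.

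The energy splitting shows that a geodesic in $\mathcal{M}$ is minimizing iff all its components are. Hence the factorwise inverse coincides with the minimizing branch of $\log_x$ on the whole complement of the cut locus of $x$, which is $\prod_i(\mathcal{M}_i\setminus\mathrm{Cut}(x_i))$.
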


This decomposition is used for protein backbone generation on $\mathrm{SE}(3)^N$, where each factor represents a residue frame.

\newpage
\section{Derivation of Riemannian MeanFlow Identities and Objectives}
\label{appx:derivation_of_objectives_and_identities}

\subsection{Riemannian MeanFlow Identities}
\label{appx:proof_identities}

In this section, we derive equivalent characterizations of the average velocity field on a Riemannian manifold.
This appendix provides detailed proofs for Propositions~\cref{prop:eulerian,prop:lagrangian,prop:semigroup} stated in \cref{sec:identities}.

\begin{fullmybox}
\begin{proposition}[Riemannian MeanFlow identities]
A vector field $u_{s,t} : \mathcal{M} \to T\mathcal{M}$ is the average velocity associated with a time-dependent vector field $v_t$ if and only if one (and hence all) of the following conditions holds:
\begin{enumerate}
    \item \textbf{(Eulerian condition).}
    For any integral curve $(x_t)_{t\in[0,1]}$ of $v_t$ and any $s,t\in[0,1]$,
    \begin{equation}
    \label{eq:appx_avg_vel_eulerian}
        u_{s,t}(x_s)
        = (t-s)\, D_s u_{s,t}(x_s)
        - \nabla^1_{v_s}\log_{x_s}x_t .
    \end{equation}

    \item \textbf{(Lagrangian condition).}
    For any integral curve $(x_t)_{t\in[0,1]}$ and any $s,t\in[0,1]$,
    \begin{equation}
    \label{eq:appx_avg_vel_lagrangian}
        u_{s,t}(x_s)
        = \mathrm{d}(\log_{x_s})_{x_t}[v_t]
        - (t-s)\,\partial_t u_{s,t}(x_s).
    \end{equation}

    \item \textbf{(Semigroup condition).}
    For any $x_s \in \M$ and any $s,t\in[0,1]$, we have $u_{s,s}=v_s$ and
    \begin{equation}
    \label{eq:appx_avg_vel_semigroup}
        u_{s,t}(x_s)
        = \frac{1}{t-s}
        \log_{x_s}\!\left(
            \Phi_{r,t}\bigl(\Phi_{s,r}(x_s)\bigr)
        \right),
        \quad s\neq t,
    \end{equation}
    where $\Phi_{s,t}(x)\coloneqq \exp_x\!\bigl((t-s)u_{s,t}(x)\bigr)$ denotes the flow map induced by $u_{s,t}$.
\end{enumerate}
\end{proposition}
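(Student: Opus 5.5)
The plan is to prove each condition equivalent to the defining relation \cref{eq:characterization_of_avg_vel}, namely $(t-s)\,u_{s,t}(x_s)=\log_{x_s}x_t$ along every integral curve. Throughout I will work inside the injectivity radius, so that $\log$ is smooth and $\exp_x\circ\log_x=\mathrm{id}$ locally. I will also use that integral curves through any point exist and are unique, so that $x_t=\Phi^{v}_{s,t}(x_s)$ for the true flow $\Phi^v$ of $v$. The necessity direction is the easy half. For the Eulerian and Lagrangian conditions, necessity is exactly the differentiation in the sketches after \cref{prop:eulerian,prop:lagrangian}. For the semigroup condition, if $u$ is the average velocity, then \cref{eq:flow_from_avg} gives $\Phi_{s,t}=\Phi^v_{s,t}$. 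The semigroup property of $\Phi^v$ from \cref{def:flow_map} then turns the right-hand side of \cref{eq:appx_avg_vel_semigroup} into $\frac{1}{t-s}\log_{x_s}x_t$, and $u_{s,s}=v_s$ holds by \cref{def:avg_velocity}.

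For sufficiency of the Eulerian condition, fix an integral curve and $t$. Define the residual $R(s):=(t-s)u_{s,t}(x_s)-\log_{x_s}x_t\in T_{x_s}\M$, a vector field along the curve $s\mapsto x_s$. By the Leibniz rule for $D_s$ and the covariant chain rule, $D_sR=-u_{s,t}+(t-s)D_su_{s,t}-\nabla^1_{v_s}\log_{x_s}x_t$. The Eulerian identity says exactly that $D_sR=0$, so $R$ is parallel along the curve. At $s=t$ we have $R(t)=0-\log_{x_t}x_t=0$. Parallel transport is a linear isomorphism, so $R\equiv0$, and this gives \cref{eq:characterization_of_avg_vel} for $s\neq t$. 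For $s=t$ I still need $u_{t,t}=v_t$. I will obtain it by differentiating $R\equiv 0$ at $s=t$, or equivalently by evaluating the identity at $s=t$, where it reads $u_{t,t}(x_t)=-\nabla^1_{v_t}\log_{x}x|_{x=x_t}$. Differentiating $\log_x x=0$ gives $\nabla^1_w\log_xx+d(\log_x)_x[w]=0$, and since $d(\log_x)_x=\mathrm{id}$, this yields $\nabla^1_w\log_xx=-w$. Hence $u_{t,t}=v_t$.

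The Lagrangian case works the same way, with $s$ fixed and $t$ varying. Now $R(t):=(t-s)u_{s,t}(x_s)-\log_{x_s}x_t$ lies in the single fixed vector space $T_{x_s}\M$. Its ordinary derivative in $t$ equals the Lagrangian residual, so it vanishes, and together with $R(s)=0$ this gives $R\equiv0$. Setting $t=s$ in the identity gives $u_{s,s}=d(\log_{x_s})_{x_s}[v_s]=v_s$.

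For the semigroup condition, the key is to recover the ODE from the boundary condition and consistency. Consistency with $r$ arbitrary in $[s,t]$ is equivalent to $\Phi_{s,t}=\Phi_{r,t}\circ\Phi_{s,r}$ after applying $\exp_{x_s}$ and using $\exp\circ\log=\mathrm{id}$ locally. Fix $x_s$ and set $y_\tau:=\Phi_{s,\tau}(x_s)$. Then $y_{\tau+h}=\Phi_{\tau,\tau+h}(y_\tau)=\exp_{y_\tau}(h\,u_{\tau,\tau+h}(y_\tau))$. Differentiating at $h=0$, with $d(\exp_y)_0=\mathrm{id}$ and continuity of $u$, gives $\dot y_\tau=u_{\tau,\tau}(y_\tau)=v_\tau(y_\tau)$. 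Since $y_s=x_s$, uniqueness of integral curves gives $y_t=x_t$. Then $(t-s)u_{s,t}(x_s)=\log_{x_s}x_t$, which is the definition.

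The main obstacle is this semigroup step. There I must justify the one-sided derivative (taking $r=\tau$, $t=\tau+h$ with $h>0$, which the constraint $r\in[s,t]$ permits, and handling the backward direction by a symmetric argument). I must also state the smoothness of $u$ jointly in $(s,t,x)$ and the injectivity-radius assumptions under which $\log\circ\exp=\mathrm{id}$; I will make these standing regularity assumptions explicit.
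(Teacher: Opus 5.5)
Your proposal is correct and follows the paper's proof essentially step for step. The paper uses the same three arguments: a parallel residual along $s\mapsto x_s$ that vanishes at $s=t$ for the Eulerian converse, a residual in the fixed space $T_{x_s}\M$ that vanishes at $t=s$ for the Lagrangian converse, and a semigroup converse that combines the boundary condition with $\Phi_{s,t}=\Phi_{r,t}\circ\Phi_{s,r}$ to get $\frac{\mathrm{d}}{\mathrm{d}t}\Phi_{s,t}(x)=v_t(\Phi_{s,t}(x))$ and then invokes uniqueness of the flow. You are more explicit than the paper on several points it covers only through its blanket smoothness assumption: the diagonal case $u_{t,t}=v_t$ (via $\nabla^1_w\log_x x=-w$), the one-sided derivative forced by the constraint $r\in[s,t]$, and the injectivity-radius conditions needed for $\exp\circ\log$ and $\log\circ\exp$ to be the identity.
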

\end{fullmybox}

\begin{proof}
Throughout the proof, we assume that $u_{s,t}$ is smooth. Under this assumption, any vector field satisfying the defining relation
\begin{equation}
\label{eq:appx_characterization_of_avg_vel}
    (t-s)\,u_{s,t}(x_s) = \log_{x_s}x_t .
\end{equation}
coincides with the average velocity.

For each identity, we first derive the identity from the defining relation of the average velocity,
and then argue that each condition uniquely recovers the true average velocity field.

\paragraph{($\Rightarrow$) Eulerian identity.}
Both sides of \cref{eq:appx_characterization_of_avg_vel} define vector fields along the curve $s\mapsto x_s$, so we apply the covariant derivative $D_s$:
\begin{equation}
    -u_{s,t}(x_s)
    + (t-s)\,D_s u_{s,t}(x_s)
    = D_s(\log_{x_s}x_t).
\end{equation}

Define the vector field $f:\mathcal{M}\to T\mathcal{M}$ by $f(x)\coloneqq\log_x x_t$.
By definition of the covariant derivative along a curve,
\[
D_s(\log_{x_s} x_t)
= \nabla_{\dot x_s} f(x_s)
= \nabla_{v_s} f(x_s)
\eqqcolon \nabla^1_{v_s}\log_{x_s}x_t.
\]
Rearranging terms yields \cref{eq:appx_avg_vel_eulerian}.

\paragraph{($\Leftarrow$) Converse.}
For the proof of converse, assume that \cref{eq:appx_avg_vel_eulerian} holds along every integral curve.
Fix an integral curve $(x_s)_s$ and define a vector field along it by
\[
X(s)\coloneqq (t-s)u_{s,t}(x_s) - \log_{x_s}x_t \in T_{x_s}\mathcal M .
\]
Differentiating along the curve yields
\[
D_s X(s)
= -u_{s,t}(x_s) + (t-s)D_s u_{s,t}(x_s) - D_s(\log_{x_s}x_t)
= 0,
\]
where the last equality follows from \cref{eq:appx_avg_vel_eulerian}.
Hence $X$ is parallel along $s\mapsto x_s$.
Since
\[
X(t)=(t-t)u_{t,t}(x_t)-\log_{x_t}x_t=0,
\]
uniqueness of solutions to the parallel transport equation $D_s X=0$ implies $X(s)\equiv 0$ for all $s$.
Therefore,
\[
(t-s)u_{s,t}(x_s)=\log_{x_s}x_t,
\]
which is exactly the defining relation \cref{eq:appx_characterization_of_avg_vel}.

\paragraph{($\Rightarrow$) Lagrangian identity.}
Differentiating \cref{eq:appx_characterization_of_avg_vel} with respect to $t$,
both sides lie in the fixed vector space $T_{x_s}\mathcal{M}$, so ordinary differentiation applies:
\begin{equation}
    u_{s,t}(x_s)
    + (t-s)\,\partial_t u_{s,t}(x_s)
    = \frac{\mathrm{d}}{\mathrm{d}t}\bigl(\log_{x_s}x_t\bigr).
\end{equation}
By the chain rule,
\[
    \frac{\mathrm{d}}{\mathrm{d}t}\bigl(\log_{x_s}x_t\bigr)
    = \mathrm{d}(\log_{x_s})_{x_t}[v_t],
\]
which gives \cref{eq:appx_avg_vel_lagrangian}.

\paragraph{($\Leftarrow$) Converse.}
Assume that \cref{eq:appx_avg_vel_lagrangian} holds along every integral curve.
Fix $s\in[0,1]$ and an integral curve $(x_t)_t$.
Define a curve in the fixed tangent space $T_{x_s}\mathcal M$ by
\[
X(t)\coloneqq (t-s)u_{s,t}(x_s) - \log_{x_s}x_t .
\]
Differentiating with respect to $t$ yields
\[
\frac{\mathrm{d}}{\mathrm{d}t}X(t)
= u_{s,t}(x_s)
+ (t-s)\partial_tu_{s,t}(x_s)
- \mathrm{d}(\log_{x_s})_{x_t}[v_t]
= 0,
\]
where the last equality follows from \cref{eq:appx_avg_vel_lagrangian}.
Since $X(s)=0$, we conclude that $X(t)\equiv 0$ for all $t$.
Therefore,
\[
(t-s)u_{s,t}(x_s)=\log_{x_s}x_t,
\]
which is exactly the defining relation \cref{eq:appx_characterization_of_avg_vel}.

\paragraph{($\Rightarrow$) Semigroup identity.}
Let $\Phi_{s,t}$ denote the true flow map induced by $v_t$.
By the semigroup property of the flow,
\[
    \Phi_{r,t}\!\left(\Phi_{s,r}(x_s)\right)=x_t .
\]
Substituting this into the right-hand side of \cref{eq:appx_avg_vel_semigroup}
recovers $\log_{x_s}x_t$, which is equivalent to \cref{eq:appx_characterization_of_avg_vel}.
Hence the true average velocity satisfies the semigroup condition.

\paragraph{($\Leftarrow$) Converse.}
Assume that the semigroup condition \cref{eq:appx_avg_vel_semigroup}
holds for every $x\in\mathcal M$, and that $u_{s,s}=v_s$.
We show that the induced map
\[
\Phi_{s,t}(x)\coloneqq \exp_x\!\bigl((t-s)u_{s,t}(x)\bigr)
\]
coincides with the true flow map of the time-dependent vector field $v_t$.

By definition,
\[
\Phi_{s,s}(x)=\exp_x 0 = x,
\]
so $\Phi_{s,s}=\mathrm{Id}_{\mathcal M}$.
Moreover, by the chain rule,
\begin{align*}
\left.\frac{\mathrm{d}}{\mathrm{d}t}\Phi_{s,t}(x)\right|_{t=s}
&=
\left.
\mathrm{d}(\exp_x)_{(t-s)u_{s,t}(x)}
\bigl[u_{s,t}(x)+(t-s)\partial_t u_{s,t}(x)\bigr]
\right|_{t=s} \\
&= \mathrm{d}(\exp_x)_0\,[u_{s,s}(x)] \\
&= v_s(x),
\end{align*}
where we used the boundary condition $u_{s,s}=v_s$ and the identity
$\mathrm{d}(\exp_x)_0=\mathrm{Id}_{T_x\mathcal M}$.
Thus,
\begin{equation}
\label{eq:appx_semigroup_tangent_condition}
\left.\frac{\mathrm{d}}{\mathrm{d}t}\Phi_{s,t}(x)\right|_{t=s}
= v_s(x).
\end{equation}

Next, the semigroup condition implies that for any fixed $r$,
\[
\Phi_{s,t} = \Phi_{r,t}\circ \Phi_{s,r}.
\]
Differentiating both sides with respect to $t$ yields
\[
\frac{\mathrm{d}}{\mathrm{d}t}\Phi_{s,t}(x)
=
\frac{\mathrm{d}}{\mathrm{d}t}\Phi_{r,t}(\Phi_{s,r}(x)).
\]
Evaluating this identity at $r=t$ and using \cref{eq:appx_semigroup_tangent_condition}, we obtain
\[
\frac{\mathrm{d}}{\mathrm{d}t}\Phi_{s,t}(x)
=
v_t\bigl(\Phi_{s,t}(x)\bigr).
\]
Therefore, $\Phi_{s,t}$ satisfies the ODE associated with $v_t$ with initial condition
$\Phi_{s,s}=\mathrm{Id}_{\mathcal M}$, and hence coincides with the true flow map.
Consequently, the corresponding average velocity $u_{s,t}$ satisfies the defining relation
\cref{eq:appx_characterization_of_avg_vel} and is the true average velocity.
\end{proof}

\subsection{Riemannian MeanFlow Objectives}
\label{appx:proof_objectives}
Now, we give the proof of the validity of the training objective (i.e., the minimizer with each proposed training objective is the average velocity).

\begin{fullmybox}
\begin{proposition}[Riemannian MeanFlow objectives]
\label{prop:appx_meanflow_objectives}
Let $u_{s,t}^\theta:\mathcal{M}\to T\mathcal{M}$ be a parameterized average velocity, and define the induced flow map
\[
\Phi_{s,t}^\theta(x)\coloneqq \exp_x\!\bigl((t-s)\,u_{s,t}^\theta(x)\bigr).
\]
Consider objectives of the form
\begin{equation}
\label{eq:appx_obj_template}
    \mathcal{L}(\theta)
    = \mathbb{E}\Bigl[\bigl\|u_{s,t}^\theta(\hat{x}_s) - \sg(\hat{u}_{\mathrm{tgt}})\bigr\|_g^2\Bigr],
\end{equation}
where $\sg(\cdot)$ denotes the stop-gradient operator and the components are defined as follows:
\begin{enumerate}
    \item \textbf{Eulerian RMF:}
    $\mathbb{E}=\mathbb{E}_{x_s,s,t}$, $\hat{x}_s=x_s$, and
    \begin{equation*}
        \hat{u}_{\mathrm{tgt}}
        = (t-s)\,D_s u_{s,t}^\theta(x_s)
        - \nabla^1_{v_s(x_s)} \log_{x_s}\Phi_{s,t}^\theta(x_s).
    \end{equation*}

    \item \textbf{Lagrangian RMF:}
    $\mathbb{E}=\mathbb{E}_{x_t,s,t}$, $\hat{x}_s=\Phi_{t,s}^\theta(x_t)$, and
    \begin{equation*}
        \hat{u}_{\mathrm{tgt}}
        = \mathrm{d}(\log_{\hat{x}_s})_{x_t}[v_t(x_t)]
        - (t-s)\,\partial_t u_{s,t}^\theta(\hat{x}_s).
    \end{equation*}
    To promote invertibility of the induced flow maps, we additionally introduce
    a cycle-consistency regularizer
    \begin{equation*}
        \mathcal{L}_{\mathrm{cycle}}(\theta)
        \;=\;
        \mathbb{E}_{x_t,s,t}\!\left[
            d_g\!\left(
                \Phi_{s,t}^\theta\!\bigl(\Phi_{t,s}^\theta(x_t)\bigr),
                \, x_t
            \right)^2
        \right],
    \end{equation*}
    which encourages $\Phi_{t,s}^\theta$ to act as an approximate inverse of
    $\Phi_{s,t}^\theta$ on the data distribution.

    \item \textbf{Semigroup RMF:}
    $\mathbb{E}=\mathbb{E}_{x_s,s,r,t}$, $\hat{x}_s=x_s$, and
    \begin{equation*}
        \hat{u}_{\mathrm{tgt}}
        = \frac{1}{t-s}\log_{x_s}\Phi_{r,t}^\theta\!\bigl(\Phi_{s,r}^\theta(x_s)\bigr)
        \quad (t\neq s),
    \end{equation*}
    while $\hat{u}_{\mathrm{tgt}}=v_s(x_s)$ for $t=s$.
\end{enumerate}
Then any global minimizer of \cref{eq:appx_obj_template} satisfies the corresponding identity and hence recovers the average velocity.
\end{proposition}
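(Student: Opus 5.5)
The plan is to read "global minimizer" in the usual self-consistent, fixed-point sense, and then reduce each case to the identities already proved in \cref{appx:proof_identities}. Because $\sg(\hat u_{\mathrm{tgt}})$ is held fixed, the objective is a squared regression error. Its infimum is zero, and zero is attained when $u^\theta$ equals the true average velocity $u$. For that choice the identities of \cref{prop:eulerian,prop:lagrangian,prop:semigroup} make the residual vanish, since then $\Phi^\theta=\Phi$ and $\Phi^\theta_{s,t}(x_s)=x_t$. Hence, assuming the model class is rich enough to contain $u$, any global minimizer attains zero loss. The residual $u^\theta_{s,t}(\hat x_s)-\hat u_{\mathrm{tgt}}$ then vanishes $p$-almost everywhere, and by continuity and full support of the sampling distributions it vanishes everywhere.

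The second step is, for each objective, to turn a vanishing residual into the corresponding identity evaluated along the model's own trajectories.

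For the \emph{semigroup} case this is immediate. Zero residual gives $u^\theta_{s,s}=v_s$ and $u^\theta_{s,t}(x)=\frac{1}{t-s}\log_x\Phi^\theta_{r,t}(\Phi^\theta_{s,r}(x))$ for all $r\in[s,t]$. This is exactly condition (iii) of the identities proposition, so its converse direction gives $u^\theta=u$.

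For the \emph{Eulerian} case, the residual identity reads
\[
u^\theta_{s,t}(x_s)=(t-s)D_su^\theta_{s,t}(x_s)-\nabla^1_{v_s}\log_{x_s}\Phi^\theta_{s,t}(x_s).
\]
Here $x_t$ has been replaced by $\Phi^\theta_{s,t}(x_s)$, so the identity cannot be quoted verbatim. Instead I will rerun the parallel-transport argument directly. Along an integral curve $x_s$, I define
\[
X(s)=(t-s)u^\theta_{s,t}(x_s)-\log_{x_s}\Phi^\theta_{s,t}(x_s).
\]
By the definition of $\Phi^\theta$ this is identically zero, so the content lies in the derivative. Computing $D_sX$ produces the residual plus the extra term $d(\log_{x_s})_{\Phi^\theta}\bigl[\tfrac{d}{ds}\Phi^\theta_{s,t}(x_s)\bigr]$. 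Since $d\log$ is invertible within the injectivity radius, the vanishing residual forces $\tfrac{d}{ds}\Phi^\theta_{s,t}(x_s)=0$. So $\Phi^\theta_{s,t}(x_s)$ is constant in $s$, and its value at $s=t$ is $x_t$; that is, $\Phi^\theta_{s,t}(x_s)=x_t$, and therefore $u^\theta=u$.

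The \emph{Lagrangian} case follows the same pattern. Zero cycle loss makes $\Phi^\theta_{t,s}$ an inverse of $\Phi^\theta_{s,t}$ on the support. Setting $\hat x_s=\Phi^\theta_{t,s}(x_t)$, the curve $t\mapsto x_t$ is then recovered as $\Phi^\theta_{s,t}(\hat x_s)$. Differentiating in $t$ the relation $(t-s)u^\theta_{s,t}(\hat x_s)=\log_{\hat x_s}\Phi^\theta_{s,t}(\hat x_s)$ and comparing with the vanishing residual, I expect to get $d(\log_{\hat x_s})\bigl[\partial_t\Phi^\theta_{s,t}(\hat x_s)-v_t\bigr]=0$. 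This says $\Phi^\theta_{s,\cdot}(\hat x_s)$ solves the ODE with the correct initial point, so uniqueness identifies it with the true flow.

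Two further remarks complete the argument. First, replacing $v$ by a conditional velocity changes only constants in the loss, because $v$ enters linearly. Second, with the stop-gradient, first-order stationarity at a zero-residual point coincides with the identity, which settles the ``self-consistent'' notion of minimizer.

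The main obstacle I expect is the Lagrangian case. There the input $\hat x_s$ depends on $\theta$, and the identity is only enforced at model-generated points. Making the cycle-consistency argument rigorous requires care with the boundary $t=s$, with invertibility of $d\log$, and with the fact that the zero-loss condition holds only on the push-forward support rather than on all of $\mathcal M$.
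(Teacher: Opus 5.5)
Your proposal is correct and follows essentially the same route as the paper's proof: full support forces the residual to vanish everywhere, the semigroup case is read off directly, the Eulerian case uses the chain rule for $\log_{x_s}\Phi^\theta_{s,t}(x_s)$ and invertibility of $\mathrm{d}\log$ to make $\Phi^\theta_{s,t}(x_s)$ constant in $s$, and the Lagrangian case uses invertibility of the learned maps plus ODE uniqueness; you additionally make the realizability assumption explicit and derive invertibility from zero cycle loss, where the paper simply assumes it. One side remark needs correcting: replacing $v$ by a conditional velocity does not change the loss only by a constant, because the velocity enters the target through a $\theta$-dependent linear map such as $w\mapsto (t-s)\nabla_w u^\theta_{s,t}(x_s)-\nabla^1_w\log_{x_s}\Phi^\theta_{s,t}(x_s)$, so the extra variance term depends on $\theta$ and only the expected target (and, thanks to the stop-gradient, the expected gradient) is preserved; this does not affect your proof, since the proposition is stated with the marginal $v$.
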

\end{fullmybox}

\begin{proof}
Assume that the sampling distribution of $\hat{x}_s$ has full support on $\mathcal M$.
Then any global minimizer of \cref{eq:appx_obj_template} satisfies
\[
u_{s,t}^\theta(x_s)=\hat{u}_{\mathrm{tgt}}
\quad\text{for all }x_s\in\mathcal M,\; s,t\in[0,1].
\]
The stop-gradient operator does not affect the set of global minimizers and can therefore be omitted in the analysis.
For the semigroup RMF, the equality $u_{s,t}^\theta(x_s)=\hat{u}_{\mathrm{tgt}}$ directly yields the semigroup identity,
which implies that $u_{s,t}^\theta$ recovers the average velocity.
We therefore focus on the Eulerian and Lagrangian formulations.

\paragraph{Eulerian RMF.}
Assume that
\begin{equation}
\label{eq:appx_eulerian_sc_identity}
u_{s,t}^\theta(x_s)
= (t-s)\,D_s u_{s,t}^\theta(x_s)
- \nabla^1_{v_s(x_s)} \log_{x_s}\Phi_{s,t}^\theta(x_s)
\end{equation}
holds for all $x_s\in\mathcal M$ and $s,t\in[0,1]$.
We show that the induced flow map $\Phi_{s,t}^\theta(x_s)$ is independent of $s$
along any integral curve $(x_s)_s$ of $v_s$.
This is sufficient since $\Phi_{t,t}^\theta(x_t)=x_t$, implying
$\Phi_{s,t}^\theta(x_s)=x_t$ for all $s$.

Rearranging \cref{eq:appx_eulerian_sc_identity} yields
\begin{equation}
\label{eq:appx_eulerian_rearranged}
-u_{s,t}^\theta(x_s) + (t-s)\,D_s u_{s,t}^\theta(x_s)
= \nabla^1_{v_s(x_s)} \log_{x_s}\Phi_{s,t}^\theta(x_s).
\end{equation}
By the product rule, the left-hand side can be written as
\begin{equation}
\label{eq:appx_product_rule}
-u_{s,t}^\theta(x_s) + (t-s)\,D_s u_{s,t}^\theta(x_s)
= D_s\bigl((t-s)u_{s,t}^\theta(x_s)\bigr).
\end{equation}
Moreover, by definition of the induced flow map,
\begin{equation}
\label{eq:appx_log_flow_relation}
\log_{x_s}\Phi_{s,t}^\theta(x_s)=(t-s)u_{s,t}^\theta(x_s).
\end{equation}
Combining \cref{eq:appx_product_rule,eq:appx_log_flow_relation} with
\cref{eq:appx_eulerian_rearranged}, we obtain
\begin{equation}
\label{eq:appx_log_derivative_identity}
D_s\bigl(\log_{x_s}\Phi_{s,t}^\theta(x_s)\bigr)
= \nabla^1_{v_s(x_s)} \log_{x_s}\Phi_{s,t}^\theta(x_s).
\end{equation}

Applying the chain rule to the left-hand side of \cref{eq:appx_log_derivative_identity} yields
\begin{equation}
\label{eq:appx_chain_rule}
D_s(\log_{x_s}\Phi_{s,t}^\theta(x_s))
= \nabla^1_{v_s(x_s)}\log_{x_s}\Phi_{s,t}^\theta(x_s)
+ \mathrm{d}(\log_{x_s})_{\Phi_{s,t}^\theta(x_s)}
\!\left[\frac{\mathrm{d}}{\mathrm{d}s}\Phi_{s,t}^\theta(x_s)\right].
\end{equation}
Comparing \cref{eq:appx_log_derivative_identity,eq:appx_chain_rule},
the terms involving $\nabla^1\log$ cancel, leaving
\begin{equation}
\label{eq:appx_log_invertibility}
\mathrm{d}(\log_{x_s})_{\Phi_{s,t}^\theta(x_s)}
\!\left[\frac{\mathrm{d}}{\mathrm{d}s}\Phi_{s,t}^\theta(x_s)\right]=0.
\end{equation}
Since the logarithmic map is a local diffeomorphism, its differential is invertible.
Therefore,
\begin{equation}
\label{eq:appx_flow_invariance}
\frac{\mathrm{d}}{\mathrm{d}s}\Phi_{s,t}^\theta(x_s)
= 0 \in T_{\Phi_{s,t}^\theta(x_s)}\mathcal M,
\end{equation}
and $\Phi_{s,t}^\theta(x_s)$ is constant with respect to $s$.
This completes the proof.

\paragraph{Lagrangian RMF.}
Assume that the Lagrangian identity
\begin{equation}
\label{eq:lagrangian_identity}
u_{s,t}^\theta(\hat x_s)
=
\mathrm{d}(\log_{\hat x_s})_{x_t}[v_t(x_t)]
- (t-s)\,\partial_t u_{s,t}^\theta(\hat x_s),
\qquad
\hat x_s \coloneqq \Phi_{t,s}^\theta(x_t),
\end{equation}
holds for all $x_t\in\mathcal M$ and $s,t\in[0,1]$.
We additionally assume a (local) invertibility condition on the induced flow maps,
\begin{equation}
\label{eq:lagrangian_invertibility}
\Phi_{s,t}^\theta\!\bigl(\Phi_{t,s}^\theta(x_t)\bigr)=x_t,
\end{equation}
which is encouraged in practice by the cycle-consistency regularizer
$\mathcal{L}_{\mathrm{cycle}}(\theta)
=\mathbb{E}_{x_t,s,t}\!\big[d_g(\Phi_{s,t}^\theta(\Phi_{t,s}^\theta(x_t)),x_t)^2\big]$.

Rearranging \cref{eq:lagrangian_identity} yields
\[
u_{s,t}^\theta(\hat x_s) + (t-s)\,\partial_t u_{s,t}^\theta(\hat x_s)
=
\mathrm{d}(\log_{\hat x_s})_{x_t}[v_t(x_t)].
\]
Since the base point $\hat x_s$ is fixed when taking $\partial_t$,
the left-hand side can be written using the product rule as
\[
\left.\frac{\mathrm d}{\mathrm dt}\bigl((t-s)\,u_{s,t}^\theta(x)\bigr)\right|_{x=\hat x_s}.
\]

On the other hand, by definition of the induced flow map,
\[
\log_x \Phi_{s,t}^\theta(x) = (t-s)\,u_{s,t}^\theta(x).
\]
Differentiating this identity with respect to $t$ while holding $x$ fixed
and evaluating at $x=\hat x_s$ gives
\[
\left.\frac{\mathrm d}{\mathrm dt}\log_x \Phi_{s,t}^\theta(x)\right|_{x=\hat x_s}
=
\mathrm{d}(\log_{\hat x_s})_{\Phi_{s,t}^\theta(\hat x_s)}
\!\left[\frac{\mathrm d}{\mathrm dt}\Phi_{s,t}^\theta(\hat x_s)\right].
\]

Combining the two expressions above, we obtain
\[
\mathrm{d}(\log_{\hat x_s})_{\Phi_{s,t}^\theta(\hat x_s)}
\!\left[\frac{\mathrm d}{\mathrm dt}\Phi_{s,t}^\theta(\hat x_s)\right]
=
\mathrm{d}(\log_{\hat x_s})_{x_t}[v_t(x_t)].
\]
Using the invertibility assumption \cref{eq:lagrangian_invertibility},
we have $\Phi_{s,t}^\theta(\hat x_s)=x_t$, so both differentials of the logarithmic map
are evaluated at the same point. Since the logarithmic map is a local diffeomorphism
(away from the cut locus), its differential is invertible, which implies
\[
\frac{\mathrm d}{\mathrm dt}\Phi_{s,t}^\theta(\hat x_s)
=
v_t(x_t)
=
v_t\!\bigl(\Phi_{s,t}^\theta(\hat x_s)\bigr).
\]

Finally, since $\hat x_s$ ranges over $\mathcal M$ as $x_t$ does under the invertibility assumption,
we may relabel $\hat x_s$ as an arbitrary $x_s\in\mathcal M$ to conclude that
\[
\frac{\mathrm d}{\mathrm dt}\Phi_{s,t}^\theta(x_s)
=
v_t\!\bigl(\Phi_{s,t}^\theta(x_s)\bigr).
\]
This is precisely the defining ODE of the true flow map associated with $v_t$.
Together with $\Phi_{s,s}^\theta=\mathrm{Id}$, this implies that
$\Phi_{s,t}^\theta$ coincides with the true flow map, and hence
$u_{s,t}^\theta$ recovers the average velocity.

\paragraph{Practical remark.}
In practice, we find that the Lagrangian objective often trains stably even without explicitly enforcing
$\mathcal{L}_{\mathrm{cycle}}$; empirically, the learned maps can become approximately cycle-consistent
over the data distribution. We therefore treat $\mathcal{L}_{\mathrm{cycle}}$ as an optional regularizer
that can be enabled when stronger invertibility is desired.
\end{proof}

\textbf{Marginal velocity approximation with conditional velocity.}
Here we justify that, in the training objectives \cref{eq:appx_obj_template},
the marginal velocity field can be replaced by a conditional velocity
without affecting the solution characterized by the objective.

Here, $X_t$ denotes the random variable induced by the interpolant process
used to couple samples at different times.
Specifically, $X_t$ is obtained by sampling a data point and evolving it
according to the chosen interpolant between times $0$ and $1$,
as in Riemannian flow matching~\citep{chen2023flow}.
The marginal velocity field is defined as the conditional expectation
\begin{equation}
\label{eq:marginal_velocity_def}
v_t(x)\;\coloneqq\;\mathbb E\!\left[\dot X_t \,\middle|\, X_t=x\right],
\end{equation}
interpreted as a tangent vector at $x$.

Fix any smooth operator
\(
L_{x,y} : T_y\mathcal M \to T_x\mathcal M
\)
that is \emph{linear} in its input tangent vector,
such as $\mathrm d(\log_x)_y$ or, more generally,
the map $w \mapsto \nabla^1_w \log_x(y)$ for fixed $(x,y)$.
By linearity of $L_{x,y}$ and the tower property of conditional expectation, we have
\begin{align}
\mathbb E\!\left[
    L_{X_t,\,y}\!\bigl(\dot X_t\bigr)
    \,\middle|\,
    X_t
\right]
&=
L_{X_t,\,y}\!\left(
    \mathbb E\!\left[\dot X_t \,\middle|\, X_t\right]
\right)
=
L_{X_t,\,y}\!\bigl(v_t(X_t)\bigr),
\label{eq:linearity_commute}
\end{align}
where the first equality follows from linearity of $L_{x,y}$,
and the second from the definition \cref{eq:marginal_velocity_def}.
Taking expectations of both sides yields
\begin{equation}
\label{eq:unbiased_replace_v}
\mathbb E\!\left[
    L_{X_t,\,y}\!\bigl(\dot X_t\bigr)
\right]
=
\mathbb E\!\left[
    L_{X_t,\,y}\!\bigl(v_t(X_t)\bigr)
\right].
\end{equation}
Therefore, in objectives where the velocity field appears only through such linear operators—as is the case for the Eulerian and Lagrangian RMF objectives—the marginal velocity $v_t(X_t)$ can be replaced by the unbiased estimator of the conditional velocity sample
$\dot X_t$ without changing the expected regression target; the replacement only affects its variance.

\newpage
\section{Theoretical Connections to Existing Flow-map Learning Methods}
\label{appx:connection_to_prior_works}

\subsection{Eulerian RMF as a Riemannian Generalization of MeanFlow}
In this section, we show that the proposed Eulerian RMF reduces exactly to the Euclidean MeanFlow objective when the underlying manifold is $\mathbb{R}^d$.
This establishes Eulerian RMF as a direct Riemannian generalization of MeanFlow.

Recall that the Eulerian RMF regression target is given by
\begin{equation}
\label{eq:ermf_target}
\hat{u}_{\mathrm{tgt}}
=
(t-s)\,D_s u_{s,t}^\theta(x_s)
-
\nabla^1_{v_s(x_s)} \log_{x_s}\!\big(\Phi_{s,t}^\theta(x_s)\big),
\end{equation}
where $D_s$ denotes the covariant derivative along the integral curve $x_s$ and
$\nabla^1$ denotes the covariant derivative of the logarithmic map with respect to its first (base-point) argument.

We now specialize to the Euclidean setting $\mathcal{M}=\mathbb{R}^d$.
In this case, the Levi--Civita connection is flat, and the covariant derivative reduces to the ordinary derivative.
In particular, we have
\begin{equation}
D_s u_{s,t}^\theta(x_s)
=
\frac{d}{ds} u_{s,t}^\theta(x_s).
\end{equation}
Moreover, the logarithmic map in Euclidean space is given by
$\log_{x}(y)=y-x$, and therefore its derivative with respect to the base point satisfies
\begin{equation}
\nabla^1_{v_s(x_s)} \log_{x_s}\!\big(\Phi_{s,t}^\theta(x_s)\big)
=
-\,v_s(x_s).
\end{equation}

Substituting these identities into \eqref{eq:ermf_target}, the Eulerian RMF target reduces to
\begin{equation}
\hat{u}_{\mathrm{tgt}}
=
(t-s)\,\frac{d}{ds} u_{s,t}^\theta(x_s)
+
v_s(x_s),
\end{equation}
which is exactly the regression target used in Euclidean MeanFlow~\citep{geng2025mean}. This shows that Eulerian RMF recovers MeanFlow in the Euclidean case, while providing a principled extension to general Riemannian manifolds through intrinsic geometric operators.

\subsection{Connection to Generalized Flow Map}
\label{appx:connection_to_GFM}
In this section, we detail a theoretical connection between Riemannian MeanFlow and Generalized Flow Map (GFM) objectives. We first derive our objectives from GFM self-distillation objectives and show that our objective can be viewed as GFM objectives with a properly applied stop-gradient operation, thereby entirely avoiding backpropagation through Jacobian-vector products (JVPs). This suggests that our derivation reveals a further, practically important design space that is not focused on GFM objectives. 

\subsubsection{Brief derivation of GFM objectives}
We begin by briefly reviewing the objectives proposed in GFM.
Their derivation starts from the defining relation of the flow map introduced in \cref{def:flow_map}:
\begin{equation}
    \label{eq:appx_flowmap_characterization}
    \Phi_{s,t}(x_s) = x_t,
\end{equation}
which holds for any integral curve $(x_t)_{t\in[0,1]}$ and any $s,t \in [0,1]$.
To construct learning objectives, GFM differentiates this identity with respect to the time variables $s$ and $t$.
Differentiation with respect to the source time $s$ yields Eulerian-type objectives, while differentiation with respect to the target time $t$ yields Lagrangian-type objectives.

Specifically, differentiating \eqref{eq:appx_flowmap_characterization} with respect to $s$ gives the generalized Eulerian characterization
\begin{equation}
\frac{d}{ds}\,\Phi_{s,t}(x_s)
=
\partial_s \Phi_{s,t}(x_s)
+
d\Phi_{s,t}(x_s)\!\left[v_s\right]
=
0,
\end{equation}
where $v_s$ denotes the velocity along the integral curve.
GFM enforces this identity via the regression objective
\begin{equation}
\mathcal{L}_{\mathrm{G\text{-}ESD}}(\theta)
=
\mathbb{E}_{x_s,s,t}
\left[
\left\|
\partial_s \Phi^\theta_{s,t}(x_s)
+
d\Phi^\theta_{s,t}(x_s)\!\left[v_s^\theta(x_s)\right]
\right\|_g^2
\right],
\end{equation}
referred to as the \emph{G-ESD} objective.

Likewise, differentiating \eqref{eq:appx_flowmap_characterization} with respect to $t$ yields the Lagrangian characterization
\begin{equation}
\partial_t \Phi_{s,t}(x_s)
=
v_t(x_t)
=
v_t\!\left(\Phi_{s,t}(x_s)\right),
\end{equation}
which is precisely the defining property of an integral curve, i.e., $\Phi_{s,t}(x_s)$ solves the ODE $\frac{d}{dt}x_t = v_t(x_t)$.
Enforcing this identity via regression leads to the \emph{G-LSD} objective,
\begin{equation}
\mathcal{L}_{\mathrm{G\text{-}LSD}}(\theta)
=
\mathbb{E}_{x_s,s,t}
\left[
\left\|
\partial_t \Phi^\theta_{s,t}(x_s)
-
v^\theta_t\!\left(\Phi^\theta_{s,t}(x_s)\right)
\right\|_g^2
\right].
\end{equation}

Finally, GFM introduces a semigroup objective that enforces the semigroup property of the flow map,
$\Phi_{r,t}\!\left(\Phi_{s,r}(x_s)\right) = \Phi_{s,t}(x_s)$, via
\begin{equation}
\mathcal{L}_{\mathrm{G\text{-}PSD}}(\theta)
=
\mathbb{E}_{x_s,s,r,t}
\left[
d_g^2\!\left(
\Phi^\theta_{s,t}(x_s),
\Phi^\theta_{r,t}\!\left(\Phi^\theta_{s,r}(x_s)\right)
\right)
\right].
\end{equation}

For practical optimization, GFM applies the stop-gradient operator to obtain the following losses:
\begin{align}
\mathcal{L}_{\mathrm{G\text{-}ESD}}(\theta)
&=
\mathbb{E}_{x_s,s,t}
\left[
\left\|
\partial_s \Phi^\theta_{s,t}(x_s)
+
\sg\!\left(
d\Phi^\theta_{s,t}(x_s)\!\left[v_s^\theta(x_s)\right]
\right)
\right\|_g^2
\right], \\
\mathcal{L}_{\mathrm{G\text{-}LSD}}(\theta)
&=
\mathbb{E}_{x_s,s,t}
\left[
\left\|
\partial_t \Phi^\theta_{s,t}(x_s)
-
\sg\!\left(
v^\theta_t\!\left(\Phi^\theta_{s,t}(x_s)\right)
\right)
\right\|_g^2
\right], \\
\mathcal{L}_{\mathrm{G\text{-}PSD}}(\theta)
&=
\mathbb{E}_{x_s,s,r,t}
\left[
d_g^2\!\left(
\Phi^\theta_{s,t}(x_s),
\sg\!\left(
\Phi^\theta_{r,t}\!\left(\Phi^\theta_{s,r}(x_s)\right)
\right)
\right)
\right],
\end{align}
where $\sg$ denotes the stop-gradient operator.

Importantly, the objectives G-ESD, G-LSD, and G-PSD enforce only \emph{flow-map consistency}. 
This consistency alone does not guarantee that the velocity field $v_s^\theta$ corresponds to the true data-generating dynamics.
To recover the desired flow map, GFM therefore augments the above objectives with a flow-matching loss that explicitly trains $v_s^\theta$.

\subsubsection{Summary of Key Differences Between RMF and GFM}
In the following, we analyze these objectives in more detail and clarify the role of the stop-gradient operator in GFM.
In prior works on consistency models~\citep{pmlr-v202-song23a} and MeanFlow~\citep{geng2025mean}, the stop-gradient operator is primarily introduced to avoid computing expensive higher-order derivatives (e.g., gradients through Jacobian--vector products) and to improve computational efficiency and optimization stability.
In GFM, stop-gradient is likewise employed in the differential objectives; however, due to the structure of these objectives, it blocks higher-order derivatives only partially and therefore does not fully eliminate the associated computational overhead. From this perspective, our formulation yields differential objectives in which higher-order derivatives are avoided by construction, resulting in improved computational efficiency and empirical stability.

Finally, for the semigroup objective, we show that the G-PSD objective and our corresponding formulation can be heuristically related through loss weighting with respect to the length of the time interval. We empirically demonstrate that our formulation leads to superior performance in \cref{appx:comparison_to_gfm}.

\subsubsection{From G-ESD to Eulerian RMF}
We show that the Eulerian RMF objective arises as a first-order expansion of the
GFM Eulerian self-distillation (G-ESD) objective under an exponential-map
parameterization of the flow map.
Recall that the G-ESD objective is based on the Eulerian consistency residual
\begin{equation}
\label{eq:g_esd_error}
\Delta_{\mathrm{G\text{-}ESD}}(x)
:=
\partial_s \Phi^\theta_{s,t}(x)
+
d\Phi^\theta_{s,t}(x)\!\left[v_s(x)\right],
\end{equation}
where $\partial_s$ denotes the partial derivative with respect to the flow-map
parameter $s$.

We parameterize the flow map using the average velocity field as
\begin{equation}
\label{eq:avg_vel_param}
\Phi^\theta_{s,t}(x)
=
\exp_{x}\!\big((t-s)\,u^\theta_{s,t}(x)\big),
\end{equation}
and define $\xi(x):=(t-s)\,u^\theta_{s,t}(x)$.
In what follows, we evaluate all expressions along the interpolant $x=x_s$.

\paragraph{Expansion of the time derivative.}
Using the chain rule for the exponential map, we obtain
\begin{equation}
\label{eq:expand_time_derivative}
\partial_s \Phi^\theta_{s,t}(x_s)
=
d_2\exp_{x_s}(\xi_s)
\Big[
-\,u^\theta_{s,t}(x_s)
+
(t-s)\,\partial_s u^\theta_{s,t}(x_s)
\Big],
\end{equation}
where $d_2\exp$ denotes the differential of $\exp$ with respect to its
second (tangent-vector) argument.

\paragraph{Expansion of the pushforward term.}
Writing $\Phi^\theta_{s,t}(x)=\exp_x(\xi(x))$, the differential with respect to
the base point $x$ yields
\begin{equation}
\label{eq:expand_pushforward}
d\Phi^\theta_{s,t}(x_s)\!\left[v_s(x_s)\right]
=
d_1\exp_{x_s}(\xi_s)\!\left[v_s(x_s)\right]
+
d_2\exp_{x_s}(\xi_s)\!\left[\nabla_{v_s(x_s)} \xi(x_s)\right],
\end{equation}
where $d_1\exp$ denotes the differential with respect to the base point.

Combining \eqref{eq:expand_time_derivative} and \eqref{eq:expand_pushforward},
the G-ESD residual becomes
\begin{equation}
\label{eq:esd_error_expanded}
\Delta_{\mathrm{G\text{-}ESD}}(x_s)
=
d_1\exp_{x_s}(\xi_s)\!\left[v_s(x_s)\right]
+
d_2\exp_{x_s}(\xi_s)\!\left[
\nabla_{v_s(x_s)} \xi(x_s)
-
u^\theta_{s,t}(x_s)
+
(t-s)\,\partial_s u^\theta_{s,t}(x_s)
\right].
\end{equation}

\paragraph{Pull-back to the tangent space.}
To express the residual in $T_{x_s}\mathcal{M}$, we pull it back via the
differential of the log map at $x_s$.
Let $y_s := \Phi^\theta_{s,t}(x_s)=\exp_{x_s}(\xi_s)$ and define
\begin{equation}
\label{eq:esd_error_pullback_def}
\widehat{\Delta}_{\mathrm{G\text{-}ESD}}(x_s)
:=
d(\log_{x_s})_{y_s}\!\left[\Delta_{\mathrm{G\text{-}ESD}}(x_s)\right].
\end{equation}
Within a normal neighborhood, the identities
\begin{equation}
\label{eq:dlog_dexp_identity}
d(\log_{x_s})_{y_s}\circ d_2\exp_{x_s}(\xi_s)
=
\mathrm{Id}_{T_{x_s}\mathcal{M}},
\qquad
d(\log_{x_s})_{y_s}\!\left[d_1\exp_{x_s}(\xi_s)[v]\right]
=
-\,\nabla^{1}_{v}\log_{x_s}(y_s)
\end{equation}
hold, where $\nabla^{1}$ denotes the covariant derivative with respect to the
first argument of the log map.

Applying these identities to \eqref{eq:esd_error_expanded}, we obtain
\begin{align}
\label{eq:esd_error_pullback_simplified}
\widehat{\Delta}_{\mathrm{G\text{-}ESD}}(x_s)
&=
-\,\nabla^{1}_{v_s(x_s)}\log_{x_s}\!\big(\Phi^\theta_{s,t}(x_s)\big)
-\,u^\theta_{s,t}(x_s)
+
(t-s)\Big(
\partial_s u^\theta_{s,t}(x_s)
+
\nabla_{v_s(x_s)}u^\theta_{s,t}(x_s)
\Big).
\end{align}
Introducing the covariant derivative along the interpolant,
\begin{equation}
\label{eq:material_derivative_def}
D_s u^\theta_{s,t}(x_s)
:=
\partial_s u^\theta_{s,t}(x_s)
+
\nabla_{v_s(x_s)}u^\theta_{s,t}(x_s),
\end{equation}
the residual can be written compactly as
\begin{equation}
\label{eq:esd_error_pullback_final}
\widehat{\Delta}_{\mathrm{G\text{-}ESD}}(x_s)
=
-\,\nabla^{1}_{v_s(x_s)}\log_{x_s}\!\big(\Phi^\theta_{s,t}(x_s)\big)
-\,u^\theta_{s,t}(x_s)
+
(t-s)\,D_s u^\theta_{s,t}(x_s).
\end{equation}

Setting \eqref{eq:esd_error_pullback_final} to zero yields
\begin{equation}
\label{eq:eulerian_rmf_target_from_gesd}
u^\theta_{s,t}(x_s)
=
(t-s)\,D_s u^\theta_{s,t}(x_s)
-
\nabla^{1}_{v_s(x_s)}\log_{x_s}\!\big(\Phi^\theta_{s,t}(x_s)\big),
\end{equation}
which exactly recovers the Eulerian RMF regression target used in our method.

\paragraph{Relation between G-ESD and Eulerian RMF objectives.}
The above derivation shows that the G-ESD and Eulerian RMF objectives enforce the
same Eulerian consistency condition, differing primarily in the space in which
the residual is represented.
G-ESD minimizes the residual
$\Delta_{\mathrm{G\text{-}ESD}}(x_s)\in T_{\Phi^\theta_{s,t}(x_s)}\mathcal{M}$,
defined at the transported point $\Phi^\theta_{s,t}(x_s)$.
In contrast, Eulerian RMF applies an invertible change of coordinates given by
the log-map differential $d(\log_{x_s})_{\,\Phi^\theta_{s,t}(x_s)}$, which pulls
the residual back to the reference tangent space $T_{x_s}\mathcal{M}$.
The two residuals are related by
\[
\widehat{\Delta}_{\mathrm{G\text{-}ESD}}(x_s)
=
d(\log_{x_s})_{\,\Phi^\theta_{s,t}(x_s)}
\big[
\Delta_{\mathrm{G\text{-}ESD}}(x_s)
\big],
\]
so both objectives share the same zero set of the consistency constraint.

This perspective also clarifies the practical effect of stop-gradient.
In Eulerian RMF, stop-gradient is applied to the entire regression target.
In contrast, G-ESD applies stop-gradient at the level of the Eulerian residual;
under the pull-back above, this induces a \emph{partial} stop-gradient in the
Eulerian RMF form, acting only on the geometric transport terms
$\nabla^{1}_{v_s(x_s)}\log_{x_s}\!\big(\Phi^\theta_{s,t}(x_s)\big)$ and
$\nabla_{v_s(x_s)}u^\theta_{s,t}(x_s)$, while leaving the explicit time-derivative
term $\partial_s u^\theta_{s,t}(x_s)$ differentiable.

\subsubsection{From G-LSD to Lagrangian RMF.}

We now establish an analogous connection between the Lagrangian objectives of
GFM and our Lagrangian RMF.
Recall that the GFM Lagrangian self-distillation (G-LSD) objective is defined as
\begin{equation}
\label{eq:g_lsd_objective}
\mathcal{L}_{\mathrm{G\text{-}LSD}}(\theta)
=
\mathbb{E}_{x_s,s,t}
\left[
\left\|
\partial_t \Phi^\theta_{s,t}(x_s)
-
v^\theta_t\!\left(\Phi^\theta_{s,t}(x_s)\right)
\right\|_g^2
\right],
\end{equation}
which enforces the Lagrangian consistency condition
$\partial_t \Phi_{s,t}(x_s)=v_t(\Phi_{s,t}(x_s))$ along transported particles.

As in the Eulerian case, we adopt the average-velocity parameterization
\begin{equation}
\label{eq:avg_vel_param_lagrangian}
\Phi^\theta_{s,t}(x)
=
\exp_x\!\big((t-s)\,u^\theta_{s,t}(x)\big),
\end{equation}
and evaluate all quantities along the interpolant $x=x_s$.
Differentiating with respect to $t$ yields
\begin{equation}
\label{eq:lagrangian_time_derivative}
\partial_t \Phi^\theta_{s,t}(x_s)
=
d_2\exp_{x_s}(\xi_s)
\Big[
u^\theta_{s,t}(x_s)
+
(t-s)\,\partial_t u^\theta_{s,t}(x_s)
\Big],
\qquad
\xi_s := (t-s)u^\theta_{s,t}(x_s).
\end{equation}

To express the velocity term in a compatible form, we use the identity
$\exp_{x_s}(\log_{x_s} y)=y$, which implies
\begin{equation}
\label{eq:lagrangian_velocity_pullback}
v_t\!\left(\Phi^\theta_{s,t}(x_s)\right)
=
d_2\exp_{x_s}(\xi_s)
\Big[
d_2\log_{x_s}\!\big(\Phi^\theta_{s,t}(x_s)\big)
\big[v_t(\Phi^\theta_{s,t}(x_s))\big]
\Big].
\end{equation}
Substituting \eqref{eq:lagrangian_time_derivative} and
\eqref{eq:lagrangian_velocity_pullback} into
\eqref{eq:g_lsd_objective}, and using the invertibility of
$d_2\exp_{x_s}(\xi_s)$ within a normal neighborhood, the G-LSD objective reduces
to an equivalent regression in $T_{x_s}\mathcal{M}$:
\begin{equation}
\label{eq:lagrangian_pullback_residual}
\mathcal{L}_{\mathrm{G\text{-}LSD}}(\theta)
\;\equiv\;
\mathbb{E}_{x_s,s,t}
\left[
\left\|
u^\theta_{s,t}(x_s)
+
(t-s)\,\partial_t u^\theta_{s,t}(x_s)
-
d_2\log_{x_s}\!\big(\Phi^\theta_{s,t}(x_s)\big)
\big[v_t(\Phi^\theta_{s,t}(x_s))\big]
\right\|_g^2
\right].
\end{equation}

This expression coincides with the Lagrangian RMF objective up to the treatment
of stop-gradient.
In particular, while Lagrangian RMF applies stop-gradient to the entire
regression target, the stop-gradient G-LSD objective—defined at the flow-map
level—induces a \emph{partial} stop-gradient under the above pull-back,
affecting the transport and log-map terms while leaving the explicit
time-derivative $\partial_t u^\theta_{s,t}(x_s)$ differentiable.
As in the Eulerian case, this shows that Lagrangian RMF can be interpreted as a
geometrically equivalent reparameterization of G-LSD, expressed in the tangent
space of the current state.

\subsubsection{Generalised MeanFlow vs. Eulerian RMF.}
\citet{davis2025generalised} propose a generalization of the MeanFlow objective to Riemannian manifolds. However, as the authors acknowledge, this is not straightforward:
\begin{quote}
``Generalising Mean Flows directly is non-trivial as it requires defining the integral of a vector field on a manifold properly, which would involve parallel transport and therefore derivatives thereof. Also, Mean Flows operate on the vector field level as opposed to Flow Map Matching which operates on the level of the flow map. It is difficult to go from one level to another directly, as it will involve nontrivial curvature terms.''
\end{quote}

As a result, \citet{davis2025generalised} propose a heuristic objective by substituting Riemannian differential operators where appropriate:
\begin{quote}
``Instead, we propose to heuristically follow our derivations in Appendix A, in the `stopgradients' section. Indeed, we can see that our loss involves the instantaneous vector field of the modelled flow map, $v^\theta_{t,t}$, as opposed to the ideal flow $\partial_t I_t$; hence the use of the Levi-Civita connection along $v_s(I_s)$ instead of the differential evaluated at $v^\theta_{s,s}$, which indeed recovers the Euclidean case as a special case.''
\end{quote}
\begin{equation}
\label{eq:gfm_meanflow}
\widehat{\mathcal{L}}_{\mathrm{G\text{-}MF}}(\theta)
=
\mathbb{E}_{t,s,x_s}
\Big[
\big\|
u^\theta_{s,t}(x_s)
-
\mathrm{stopgrad}\!\left(
v_s
-
(t-s)\nabla_{v_s} u^\theta_{s,t}(x_s)
\right)
\big\|_g^2
\Big].
\end{equation}

In contrast, our Eulerian RMF is derived from a different starting point. Rather than relying on an integral formulation of the average velocity field, we base the derivation on identities satisfied by the average velocities along integral curves of the flow. This perspective entirely avoids the need to define integrals of vector fields on the manifold. As a result, the characterizing identity of the average velocity can be differentiated directly using covariant derivatives, yielding Eulerian learning objectives that are intrinsic and well-defined on Riemannian manifolds. By grounding the derivation in integral-curve-based identities, Eulerian RMF provides a direct and principled generalization of Euclidean MeanFlow \citep{geng2025mean}, effectively replacing the heuristically constructed G-MF objective.

\subsubsection{Semigroup RMF vs.\ G-PSD.}
We compare the Semigroup RMF and G-PSD objectives in the Euclidean setting and show that they differ only by a time-dependent loss weighting. In Euclidean space, the Semigroup RMF per-sample loss is given by
\begin{equation}
    L_{\mathrm{S\text{-}RMF}}(\theta)
    =
    \left\|
    u^\theta_{s,t}(x_s)
    -
    \sg\!\left(
    \frac{r-s}{t-s}\,u^\theta_{s,r}(x_s)
    +
    \frac{t-r}{t-s}\,u^\theta_{r,t}(\hat{x}_r)
    \right)
    \right\|_2^2,
\end{equation}
with $\hat{x}_r = \Phi^\theta_{s,r}(x_s) = x_s + (r-s)u^\theta_{s,r}(x_s)$.
In the same setting, the G-PSD objective reduces to
\begin{equation}
    L_{\mathrm{G\text{-}PSD}}(\theta)
    =
    \left\|
    (t-s)\,u^\theta_{s,t}(x_s)
    -
    \sg\!\left(
    (r-s)\,u^\theta_{s,r}(x_s)
    +
    (t-r)\,u^\theta_{r,t}(\hat{x}_r)
    \right)
    \right\|_2^2.
\end{equation}

The two objectives are related by
\begin{equation}
    L_{\mathrm{G\text{-}PSD}}(\theta)
    =
    (t-s)^2\,L_{\mathrm{S\text{-}RMF}}(\theta),
\end{equation}
indicating that their difference can be fully characterized by a time-dependent loss weighting $w(s,t) = (t-s)^2$.

On general Riemannian manifolds, this equivalence no longer holds exactly due to curvature-dependent effects. In practice, we observe that Semigroup RMF achieves slightly better performance than G-PSD.

% This is version if we put the weighting experiments to appendix.
% Nevertheless, we empirically find that applying the corresponding loss weighting to G-PSD leads to behavior similar to that of Semigroup RMF. We report these results in \cref{appx:comparison_to_gfm}. Across the evaluated tasks (Earth and DNA), Semigroup RMF consistently achieves slightly better performance than G-PSD.

\newpage
\section{Empirical Comparison to GFM}
\label{appx:comparison_to_gfm}
In this section, we empirically compare our approach to the concurrent Generalized Flow Matching (GFM) method~\citep{davis2025generalised} using the geospatial Earth dataset and a high-dimensional DNA promoter design task. Our results indicate that while GFM struggles to scale reliably in the DNA task, our method remains stable and performs better due to our proposed stabilization techniques. Furthermore, we analyze the computational overhead of both methods, specifically comparing training time, memory usage, and NFEs per iteration. These comparisons highlight the superior optimization behavior and scaling properties of our framework over GFM.

\subsection{Toy Earth Datasets}
To provide a quantitative comparison between our proposed objectives and GFM \citep{davis2025generalised}, we evaluate our method on the geospatial Earth events benchmark on $\mathbb{S}^2$ \citep{mathieu2020riemannian}, following the evaluation protocols established in \citet{chen2023flow} and \citet{davis2025generalised}. In these experiments, we fix the parameterization to $v$-prediction to isolate and investigate the impact of different training objectives.

\textbf{Metric.} We report the empirical Maximum Mean Discrepancy (MMD) between the test data and generated samples, consistent with \citet{davis2025generalised}. For the MMD computation, we employ a geodesic-based RBF kernel, $k(x, y) = \exp(-d_g(x, y)^2 / (2\kappa^2))$, with a bandwidth $\kappa=1$. We omit the test Negative Log-Likelihood (NLL) because exact NLL evaluation is intractable unless the flow map is strictly invertible or satisfies specific regularity conditions \citep{rehman2025falcon}.
\begin{figure}[!h]
    \centering
    \includegraphics[width=0.7\linewidth]{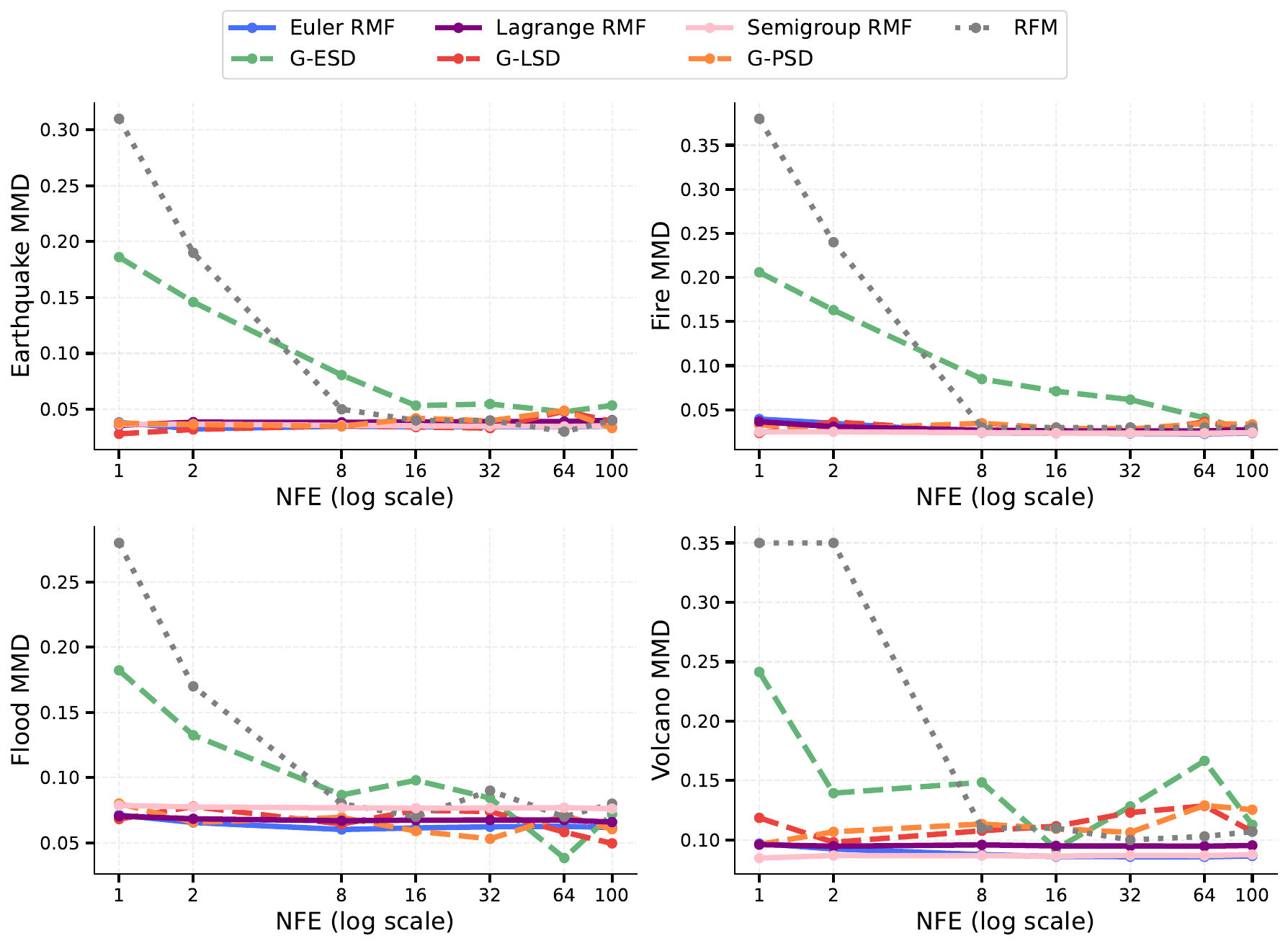}
    \caption{
        \textbf{Inference steps vs. MMD on the Earth dataset.}
        Empirical MMD (lower is better) as a function of the number of inference steps for four datasets (Volcano, Earthquake, Flood, and Fire).
    }
    \label{fig:earth_mmd}
\end{figure}

\textbf{Results.} As illustrated in \cref{fig:earth_mmd}, our Riemannian MeanFlow objectives achieve competitive or superior MMD values on the Earth dataset compared to the GFM baseline. Notably, all of our objective variants (Eulerian, Lagrangian, and Semigroup) yield comparable results with 100-step sampling. In contrast, GFM-Eulerian exhibits a significantly higher MMD in its 1-step performance. We hypothesize that this discrepancy arises because GFM's formulation does not entirely bypass backpropagation through the Jacobian-vector product (JVP), which potentially destabilizes the optimization process compared to our JVP-free objectives.

\subsection{Promoter DNA Datasets}
To demonstrate the scalability of our objectives compared to GFM \citep{davis2025generalised}, we evaluate both methods on a high-dimensional DNA promoter design task. For fair comparison, we fix the model architecture, optimizer, time sampling scheme, batch size, and the total number of training iterations. The experimental setup for these evaluations is consistent with the protocols described in \cref{sec:experiments}; further implementation details can be found in \cref{appx:eval-details,appx:exp-details}.

\begin{table}[t]
    \centering
    \footnotesize
    \caption{One-step performance on test-set of promoter DNA sequence.}
    \label{tab:appx_dna_vs_GFM}
    \begin{tabular}{lcccc}
        \toprule
        Method &
        & MSE ($\downarrow$)
        & $k$-mer corr. ($\uparrow$)
        \\ 
        \midrule
        G-ESD 
        & $v$-pred & $0.055$ & $0.13$ \\
        G-LSD 
        & $v$-pred & $0.046$ & $0.73$ \\
        G-PSD 
        & $v$-pred & $0.035$ & $0.82$ \\
        \midrule
        \multirow{2}{*}{Euler RMF} 
        & $x_1$-pred & $0.030_{\pm0.000}$ & $0.96_{\pm0.01}$ \\
        & $v$-pred & $0.031_{\pm0.001}$ & $0.96_{\pm0.00}$ \\
        \midrule
        \multirow{2}{*}{Lagrange RMF} & 
        $x_1$-pred & $0.027_{\pm0.001}$ & $0.88_{\pm0.00}$ \\
        & $v$-pred & $0.027_{\pm0.001}$ & $0.85_{\pm0.01}$ \\
        \midrule
        \multirow{2}{*}{Semigroup RMF} & 
        $x_1$-pred & $0.030_{\pm0.001}$ & $0.84_{\pm0.03}$ \\
        & $v$-pred & $0.030_{\pm0.001}$ & $0.93_{\pm0.02}$ \\
        \bottomrule
    \end{tabular}
\end{table}

% TODO: fill this table with G-LSD and Lagrangian MF.
\begin{table}[t]
    \centering
    \footnotesize
    \caption{Training cost of G-ESD and Eulerian MF. We measure the memory consumption and training time per iteration on the DNA task.}
    \label{tab:appx_training_cost_comparison}
    \begin{tabular}{lccc}
        \toprule
        & NFE/it & Memory (GB) $\downarrow$ & Training time (s/it) $\downarrow$ \\
        \midrule
        G-ESD & 3 & 17.7 & 0.40 \\
        E-RMF & 1 & \phantom{0}9.5  & 0.15  \\
        E-RMF self-distillation & 2 & \phantom{0}9.5 & 0.16 \\
        \bottomrule
    \end{tabular}
\end{table}

\textbf{Results.} As summarized in \cref{tab:appx_dna_vs_GFM}, we observe a significant performance gap between the two frameworks. Specifically, GFM's differential objectives—Eulerian (G-ESD) and Lagrangian (G-LSD)—fail to achieve meaningful convergence, resulting in high MSE. We hypothesize that this failure stems from optimization instabilities caused by the high variance of the network output's derivatives in high-dimensional spaces. In contrast, all variants of our Riemannian MeanFlow (Eulerian, Lagrangian, and Semigroup) maintain robust stability throughout training. Our methods consistently outperform GFM across all metrics.

\subsection{Computational Efficiency Analysis}
To further evaluate the practical utility of our proposed objectives, we conduct a comparative analysis of the computational costs between our Eulerian RMF (Eulerian RMF) and the Generalized Flow Map Eulerian (G-ESD) objective~\citep{davis2025generalised}. As summarized in \cref{tab:appx_training_cost_comparison}, our method demonstrates superior efficiency in terms of both memory consumption and training speed.

\textbf{Number of function evaluations (NFEs).} A significant advantage of our formulation is the reduction in the number of network evaluations per training iteration. G-ESD requires 3 NFEs per iteration: because the stop-gradient operator is applied only to the spatial derivative $\mathrm{d}\Phi^\theta_{s,t}$, the time-derivative $\partial_s\Phi^\theta_{s,t}$ and the spatial JVP term must be evaluated separately. In contrast, our Eulerian RMF evaluates both the average velocity $u^\theta_{s,t}$ and its time derivative $D_su^\theta_{s,t}$ within a single forward pass, requiring only 1 NFE. Even in our self-distillation variant, which utilizes a learned instantaneous velocity, the cost remains at 2 NFEs, still lower than that of the standard GFM Eulerian objective.

\textbf{Memory usage and optimization.} All memory and throughput numbers are measured on an NVIDIA RTX 3090 GPU. As shown in \cref{tab:appx_training_cost_comparison}, G-ESD incurs substantially higher memory usage (17.7 GB) compared to Eulerian RMF (9.5 GB). This disparity arises because G-ESD does not fully avoid backpropagation through the JVP output $\partial_s\Phi^\theta_{s,t}$. The resulting computational graph for G-ESD is more complex, requiring approximately twice the memory of our method. Our JVP-free formulation not only reduces the memory footprint but also contributes to the optimization stability observed in high-dimensional tasks like DNA promoter design, where G-ESD failed to converge (\cref{tab:appx_dna_vs_GFM}).

\textbf{Training throughput.} The combination of fewer NFEs and lower memory overhead leads to a marked improvement in training speed. Eulerian RMF achieves a training time of 0.15 s/it, which is approximately 2.6$\times$ faster than G-ESD's 0.40 s/it. The self-distillation variant of Eulerian RMF also maintains high throughput (0.16 s/it), demonstrating that the efficiency gains are robust across different variants of our objective.

\newpage
\section{Evaluation details}
\label{appx:eval-details}

\subsection{DNA Promoter Design}
\textbf{Task description.} Promoters are critical DNA sequences that dictate the initiation and magnitude of gene transcription \citep{haberle2018eukaryotic}. The objective of this task is to generate promoter sequences conditioned on a desired transcription signal profile. Successful generation of these sequences enables precise control over the expression levels of synthetic genes, which is essential for applications such as controlled gene expression and synthetic biology. For DNA promoter design, we use MSE for evaluation following \citet{davis2024fisher,stark2024dirichlet}, and additionally report $k$-mer correlation to assess whether local sequence patterns are preserved.

\textbf{MSE.} Following prior work, we evaluate promoter activity using mean squared error (MSE) between transcription signal profiles predicted by a pretrained Sei model from generated sequences and those predicted from the corresponding test-set reference sequences under the same condition. Specifically, for each conditioning signal in the test set, we compute Sei-predicted profiles for both the generated and reference sequences, measure their MSE, and report the average over the test set. This metric quantifies how closely the generated sequences match the regulatory activity of the reference sequences.

\textbf{$k$-mer correlation.}
In addition to MSE, we measure the $k$-mer correlation between the generated sequences and the empirical test distribution. This metric evaluates whether the generated promoter sequences preserve local sequence patterns, capturing compositional similarity beyond global activity prediction. Concretely, we aggregate the generated sequences into a single $k$-mer frequency vector, aggregate the test-set sequences into another $k$-mer frequency vector, and report the Pearson correlation between the two vectors.

\subsection{DNA Promoter Reward Guidance}
\label{appx:dna_reward_guidance}
\textbf{Reward function.} For optimizing the regulatory footprint of a DNA sequence based on a target profile, we use the following reward function:

\begin{equation}
r(x)
= -\frac{1}{|\mathcal{N}|}\sum_{n\in \mathcal{N}}
\bigl(\mathrm{Sei}(x)[n] - \mathrm{Sei}(x_{\mathrm{target}})[n]\bigr)^2 \, ,
\end{equation}
where $\mathcal{N}$ corresponds to the promoter-related features from Sei. Note that lower MSE is better, so we negate the MSE metric to maximize it.

\textbf{Ablations on reward guidance scale.}
We perform a grid search over the guidance scale ($\lambda$) in \cref{eq:reward_guidance}, evaluating values $\lambda \in \{1, 10, 100, 1000\}$ and show the performance in \cref{table:dna_reward_ablation}. For each type of reward guidance at each NFE, we report the best performance in \cref{table:dna_reward_guidance}.

\begin{table}[h]
    \centering
    \small
    \caption{Ablation over the guidance scale $\lambda$ in \cref{eq:reward_guidance} for reward-guided promoter DNA generation. We report mean squared error (MSE) $\pm$ standard deviation across 60 batches of 128 samples for two reward evaluations: the naive approach based on the current state ($\nabla r(x_t)$) and using $x_1$ look-ahead ($\nabla r(\Phi_{t,1}^\theta(x_t))$).}
    \label{table:dna_reward_ablation}
    \resizebox{0.4\columnwidth}{!}{%
    \begin{tabular}{lccc}
        \toprule 
        NFE 
        & $\lambda$ & $\nabla r(x_t)$ & $\nabla r(\Phi_{t,1}^\theta(x_t))$\\
        \midrule
        \multirow{4}{*}{$1$}   & $1$ & $0.033_{\pm 0.015}$ & $0.026_{\pm 0.011}$ \\
                  & $10$ & $0.033_{\pm 0.015}$ & $0.025_{\pm 0.011}$ \\
                  & $100$ & $0.033_{\pm 0.015}$ & $0.049_{\pm 0.033}$ \\
                  & $1000$ & $0.033_{\pm 0.015}$ & $0.068_{\pm 0.053}$ \\

            \midrule
        \multirow{4}{*}{$5$}   & $1$ & $0.031_{\pm 0.014}$ & $0.021_{\pm 0.008}$ \\
                  & $10$ & $0.031_{\pm 0.014}$ & $0.013_{\pm 0.005}$ \\
                  & $100$ & $0.026_{\pm 0.011}$ & $0.024_{\pm 0.013}$ \\
                  & $1000$ & $0.017_{\pm 0.009}$ & $0.048_{\pm 0.039}$ \\

                \midrule
        \multirow{4}{*}{$10$}   & $1$ & $0.031_{\pm 0.013}$ & $0.017_{\pm 0.007}$ \\
                  & $10$ & $0.031_{\pm 0.013}$ & $0.008_{\pm 0.003}$\\
                  & $100$ & $0.025_{\pm 0.010}$ & $0.016_{\pm 0.007}$ \\
                  & $1000$ & $0.008_{\pm 0.002}$ & $0.036_{\pm 0.029}$ \\

        \bottomrule
    \end{tabular}%
    }
\end{table}

\subsection{Protein Backbone Design}

We mainly follow the evaluation pipeline and metric definition of FrameFlow \citep{yim2023fast}, FrameDiff \citep{pmlr-v202-yim23a}, and La Proteina \citep{geffner2025proteina}. We sample 10 backbone structures for every length between 60 and 128, and measure three metrics for the generated samples: designability, diversity, and novelty. For \cref{tab:protein_backbone}, we have reproduced FrameDiff and FrameFlow, and have report metrics below.

\textbf{Designability.} We assess the designability with the \textit{self-consistency} evaluation from \citet{trippe2023diffusion}, measuring how closely a generated backbone can be recovered by sequence design and refolding. To be specific, we use ProteinMPNN \citep{dauparas2022robust} to achieve 8 sequences for each backbone structure and re-fold, i.e., predict their backbone structures using ESMfold \citep{lin2023evolutionary}. Afterwards, we compute the root-mean-square-distance ($\mathtt{scRMSD}$) between the backbone structures and the generated backbone structure. We also report the designable fraction with a threshold of $\mathtt{scRMSD}$ $<$ 2.0 \AA.

\textbf{Diversity.} Diversity measures how many distinct structural conformations the model generates. For each length, we cluster all the generated backbones using MaxCluster \citep{herbert2008maxcluster}, and report the number of clusters divided by the total number of designable samples. Additionally, we also report the pairwise $\mathtt{scTM}$, which quantifies the structural similarity between the generated backbones. A lower pairwise $\mathtt{scTM}$ indicate higher diversity, as they reflect larger structural deviation between the generated samples . The MaxCluster command used to compute this metric is given by
\begin{tcolorbox}[
        colback=gray!5!white,
        colframe=gray!60!black,
    ]
    \texttt{maxcluster -l <pdb file list> -C 3 -Tm 0.8 -noalign}
\end{tcolorbox}
where \texttt{<pdb file list>} is the path for a text file containing the list of paths to PDB files.

\textbf{Novelty.} Novelty evaluates how different the generated backbones are from known protein structures. For each designable sample, we use FoldSeek \citep{van2024fast} to search the PDB database and compute the highest TM-score to any matching chain \citep[pdbTM]{zhang2005tm}. Afterwards, we report the average pdbTM across all samples. The FoldSeek command used to compute this metric is given by
\begin{tcolorbox}[
        colback=gray!5!white,
        colframe=gray!60!black,
    ]
    \texttt{foldseek easy-search <path sample> <reference database path> <result file> \\ <tmp path> --format-output query,target,alntmscore}
\end{tcolorbox}

where \texttt{<path sample>} is the path for PDB files containing the generated structure, and \texttt{<reference database path>} is the path of the dataset used as reference, for which we use the Protein Data Bank (PDB).

\subsection{Protein Reward Guidance}
\label{appx:protein_reward_guidance}
\textbf{Reward function.} We design a differentiable reward function as based on PyDSSP\footnote{\url{https://github.com/ShintaroMinami/PyDSSP}}, an open-source implementation of the Define Secondary Structure of Proteins (DSSP) algorithm~\cite{kabsch1983dictionary}, which is the standard method secondary structure assignment to protein residues. PyDSSP implements DSSP in PyTorch. 

PyDSSP first computes a hydrogen-bond energy map using the DSSP electrostatic model, where hydrogen bond energies are calculated from interatomic distances between backbone atoms (O–N, C–H, O–H, and C–N), and then uses a smooth, differentiable approximation to determine hydrogen bond presence. Secondary structure elements are then identified from this map following DSSP-style rules: turns are detected from diagonal hydrogen bonds between residues separated by three to five positions, helices are defined by consecutive turns, and $\beta$-bridges are identified via parallel and antiparallel hydrogen-bond patterns using a local unfolding window. However, in order to identify these structural elements, non-differentiable boolean tensors are produced and gradients are broken. 

Instead, we construct a soft score for each each structural class independently, which we call DiffDSSP. First, we compute pairwise electrostatic H-bond energies between all residue pairs using the DSSP energy formula:

\begin{equation}
    e = 0.084 \cdot (\frac{1}{\text{d}_\text{ON}} + \frac{1}{\text{d}_\text{CH}} - \frac{1}{\text{d}_\text{OH}} - \frac{1}{\text{d}_\text{CN}}) \cdot 332
\end{equation}

Then, we pass energies through a sigmoid:

\begin{equation}
    \text{hydrogen\_bond\_map} = \texttt{sigmoid}(-(e - 0.5 + \text{margin}))
\end{equation}

where $-0.5$ is the cutoff value for hydrogen-bond presence. We extract differentiable pattern scores from the hydrogen-bond map corresponding to $\alpha$-helices and $\beta$-sheets. These scores quantify the extent to which each residue participates in helix- or strand-like hydrogen-bond patterns, without requiring hard assignments or binary decisions as in the original DSSP algorithm. We compute the helix and strand fractions by averaging the per-residue helix and strand scores across residues to get scalar fractions in [0,1], denoted as $\text{DiffDSSP}_\alpha(x)$ and $\text{DiffDSSP}_\beta(x)$. Finally, we compute the mean-squared error between the predicted fractions and the target fractions ($w_\alpha$ or $w_\beta$), and return the negative value so the reward can be maximized:

\begin{equation}
r(x)
= - \left( (w_\alpha - \text{DiffDSSP}_\alpha(x))^2 + (w_\beta - \text{DiffDSSP}_\beta(x))^2 \right) \, ,
\end{equation}

When steering toward increased $\beta$-sheet content, we set $w_\beta = 1$ and $w_\alpha = -1$, while for $\alpha$-helix guidance, we set $w_\beta = -1$ and $w_\alpha = 1$.

\textbf{Final evaluation.} To evaluate the final generated sequences, we used the original DSSP algorithm~\citep{kabsch1983dictionary}, which is the standard way for assigning structural classes to a protein. We used \texttt{dssp-2.0.4-linux-amd64}\footnote{\url{https://swift.cmbi.umcn.nl/gv/dssp/}} with the following command:
\begin{tcolorbox}[
        colback=gray!5!white,
        colframe=gray!60!black,
    ]
    \texttt{dssp -i <pdb file list>}
\end{tcolorbox}

\textbf{Ablations on reward guidance scale.} 
In \cref{table:prot_ss_alpha_reward_guidance_ablation} and \cref{table:prot_ss_beta_reward_guidance_ablation}, we compute the $\alpha$-helix and $\beta$-sheet content, respectively, of generated proteins at different guidance scales ($\lambda$) for each reward state ($\zeta_t$) and each NFE. In \cref{table:prot_ss_beta_reward_guidance_ablation}, we ablate scaling the reward by $t$ (i.e., $r_t(x) = t \cdot r(x)$), similar to ~\citep{sabour2025test}, although we find this generally does not help in our ad-hoc setting. We keep the settings with the best top-10 mean for each $\zeta_t$ and NFE, and report these values in \cref{table:prot_reward_guidance}. We also evaluated the performance at 1 NFE, although there was no significant difference compared to no guidance.
% For each reward guidance method, NFE, and secondary structure class ($\beta$-sheet, $\alpha$-helix), we computed the performance over $\lambda \in [1, 10, 100, 1000, 1000]$ on 100 sequences of length 128 and report the setting with the highest mean structure composition in \cref{table:prot_reward_guidance}.

\newpage
\begin{table}[h]
    \centering
    \caption{Ablation over the \hlpink{guidance scale $\lambda$} in \cref{eq:reward_guidance} for guiding \hlyellow{$\alpha$-helix} generation in proteins. We report mean squared error (MSE) $\pm$ standard deviation across 100 samples of length 128 for two reward evaluations, the naive approach based on the current state (\hlcyan{$\nabla r(x_t)$}) and using $x_1$ look-ahead (\hlgreen{$\nabla r(\Phi_{t,1}^\theta(x_t))$}), as well as no guidance (\hlgrey{---}).  
    \looseness=-1
    \small 
    }
    \resizebox{0.8\columnwidth}{!}{
    \begin{tabular}{clccccc}
        \toprule 
        NFE 
        & \makecell{\centering$\zeta_t$} 
        & Guidance scale $(\lambda)$
        & Mean $(\uparrow)$
        & Top-10 mean $(\uparrow)$ 
        & Max $(\uparrow)$
        & \makecell{Frac.\\improved $(\uparrow)$} \\
            \midrule
             $5$ & \cellcolor{highlightgrey}  --- & --- &  $0.29_{\pm 0.20}$ & $0.68_{\pm 0.08}$ & $0.80$ & --- \\
            
            \midrule
$5$ & \cellcolor{highlightcyan} $\nabla r(x_t)$ & \cellcolor{highlightpink} $1$  & $0.29_{\pm 0.20}$ & $0.69_{\pm 0.08}$ & $0.80$ & $0.02$ \\
$5$ & \cellcolor{highlightcyan} $\nabla r(x_t)$ & \cellcolor{highlightpink} $10$  & $0.29_{\pm 0.20}$ & $0.68_{\pm 0.08}$ & $0.80$ & $0.05$ \\
$5$ & \cellcolor{highlightcyan} $\nabla r(x_t)$ & \cellcolor{highlightpink} $100$  & $0.29_{\pm 0.20}$ & $0.68_{\pm 0.08}$ & $0.80$ & $0.17$ \\
$5$ & \cellcolor{highlightcyan} $\nabla r(x_t)$ & \cellcolor{highlightpink} $1000$  & $0.27_{\pm 0.19}$ & $0.66_{\pm 0.07}$ & $0.77$ & $0.24$ \\
$5$ & \cellcolor{highlightcyan} $\nabla r(x_t)$ & \cellcolor{highlightpink} $10000$  & $0.14_{\pm 0.12}$ & $0.40_{\pm 0.08}$ & $0.62$ & $0.03$ \\
            
            \midrule
$5$ & \cellcolor{highlightcyan} $\nabla r(\Phi_{t,1}^\theta(x_t))$ & \cellcolor{highlightpink} $1$ & $0.30_{\pm 0.20}$ & $0.69_{\pm 0.09}$ & $0.85$ & $0.32$ \\
$5$ & \cellcolor{highlightcyan} $\nabla r(\Phi_{t,1}^\theta(x_t))$ & \cellcolor{highlightpink} $10$  & $0.33_{\pm 0.21}$ & $0.71_{\pm 0.06}$ & $0.81$ & $0.65$ \\
$5$ & \cellcolor{highlightcyan} $\nabla r(\Phi_{t,1}^\theta(x_t))$ & \cellcolor{highlightpink} $100$  & $0.39_{\pm 0.22}$ & $0.76_{\pm 0.04}$ & $0.83$ & $0.75$ \\
$5$ & \cellcolor{highlightcyan} $\nabla r(\Phi_{t,1}^\theta(x_t))$ & \cellcolor{highlightpink} $1000$  & $0.24_{\pm 0.17}$ & $0.59_{\pm 0.06}$ & $0.70$ & $0.39$ \\
$5$ & \cellcolor{highlightcyan} $\nabla r(\Phi_{t,1}^\theta(x_t))$ & \cellcolor{highlightpink} $10000$ & $0.03_{\pm 0.03}$ & $0.09_{\pm 0.01}$ & $0.11$ & $0.08$ \\

            \midrule
            $10$ & \cellcolor{highlightgrey} --- & --- &  $0.30_{\pm 0.20}$ & $0.70_{\pm 0.07}$ & $0.80$ & --- \\
            
            \midrule
$10$ & \cellcolor{highlightcyan} $\nabla r(x_t)$ & \cellcolor{highlightpink} $1$  & $0.30_{\pm 0.20}$ & $0.70_{\pm 0.07}$ & $0.80$ & $0.05$ \\
$10$ & \cellcolor{highlightcyan} $\nabla r(x_t)$ & \cellcolor{highlightpink} $10$  & $0.30_{\pm 0.20}$ & $0.70_{\pm 0.07}$ & $0.80$ & $0.06$ \\
$10$ & \cellcolor{highlightcyan} $\nabla r(x_t)$ & \cellcolor{highlightpink} $100$ & $0.29_{\pm 0.20}$ & $0.70_{\pm 0.07}$ & $0.80$ & $0.19$ \\
$10$ & \cellcolor{highlightcyan} $\nabla r(x_t)$ & \cellcolor{highlightpink} $1000$  & $0.27_{\pm 0.19}$ & $0.65_{\pm 0.08}$ & $0.79$ & $0.23$ \\
$10$ & \cellcolor{highlightcyan} $\nabla r(x_t)$ & \cellcolor{highlightpink} $10000$  & $0.16_{\pm 0.13}$ & $0.41_{\pm 0.07}$ & $0.59$ & $0.06$ \\

            \midrule
  
$10$ & \cellcolor{highlightcyan} $\nabla r(\Phi_{t,1}^\theta(x_t))$ & \cellcolor{highlightpink} $1$ & $0.30_{\pm 0.20}$ & $0.70_{\pm 0.07}$ & $0.80$ & $0.36$ \\
$10$ & \cellcolor{highlightcyan} $\nabla r(\Phi_{t,1}^\theta(x_t))$ & \cellcolor{highlightpink} $10$  & $0.35_{\pm 0.21}$ & $0.74_{\pm 0.05}$ & $0.82$ & $0.68$ \\
$10$ & \cellcolor{highlightcyan} $\nabla r(\Phi_{t,1}^\theta(x_t))$ & \cellcolor{highlightpink} $100$ & $0.45_{\pm 0.22}$ & $0.80_{\pm 0.03}$ & $0.84$ & $0.82$ \\
$10$ & \cellcolor{highlightcyan} $\nabla r(\Phi_{t,1}^\theta(x_t))$ & \cellcolor{highlightpink} $1000$  & $0.45_{\pm 0.23}$ & $0.81_{\pm 0.04}$ & $0.86$ & $0.75$ \\
$10$ & \cellcolor{highlightcyan} $\nabla r(\Phi_{t,1}^\theta(x_t))$ & \cellcolor{highlightpink} $10000$ & $0.03_{\pm 0.03}$ & $0.08_{\pm 0.01}$ & $0.09$ & $0.05$ \\            
            
        \bottomrule 
    \end{tabular} 
    }
    \label{table:prot_ss_alpha_reward_guidance_ablation} 
    \vspace{-.15in}
\end{table}

\newpage
\begin{table}[H]
    \centering
    \caption{Ablation over the \hlpink{guidance scale $\lambda$} in \cref{eq:reward_guidance} for guiding \hlblue{$\beta$-sheet} generation in proteins. We report mean squared error (MSE) $\pm$ standard deviation across 100 samples of length 128 for two reward evaluations, the naive approach based on the current state (\hlcyan{$\nabla r(x_t)$}) and using $x_1$ look-ahead (\hlgreen{$\nabla r(\Phi_{t,1}^\theta(x_t))$}), as well as no guidance (\hlgrey{---}). We also ablate over \hlyellow{time-dependent} reward guidance, where  $r_t(x) = t\cdot r(x)$ as in ~\cite{sabour2025test}.  
    \looseness=-1
    \small 
    }
    \resizebox{\columnwidth}{!}{
    \begin{tabular}{clcccccc}
        \toprule 
        
        NFE 
        & \makecell{\centering$\zeta_t$} 
        & Guidance scale $(\lambda)$
        & Time-dependent reward?
        & Mean $(\uparrow)$
        & Top-10 mean $(\uparrow)$ 
        & Max $(\uparrow)$
        & \makecell{Frac.\\improved $(\uparrow)$} \\
            \midrule
             $5$ & \cellcolor{highlightgrey}  --- & --- & --- & $0.18_{\pm 0.12}$ & $0.41_{\pm 0.06}$ & $0.51$ & --- \\
            
            \midrule
              $5$ & \cellcolor{highlightcyan} $\nabla r(x_t)$ &  \cellcolor{highlightpink} $1$ & No & $0.18_{\pm 0.12}$ & $0.41_{\pm 0.05}$ & $0.51$ & $0.01$ \\
              $5$ & \cellcolor{highlightcyan} $\nabla r(x_t)$ &  \cellcolor{highlightpink} $10$ & No & $0.18_{\pm 0.12}$ & $0.41_{\pm 0.05}$ & $0.51$ & $0.01$ \\
              $5$ &  \cellcolor{highlightcyan} $\nabla r(x_t)$ &  \cellcolor{highlightpink} $100$ & No & $0.18_{\pm 0.12}$ & $0.41_{\pm 0.05}$ & $0.51$ & $0.04$ \\
              $5$ &  \cellcolor{highlightcyan} $\nabla r(x_t)$ &  \cellcolor{highlightpink} $1000$ & No &  $0.18_{\pm 0.12}$ & $0.41_{\pm 0.04}$ & $0.48$ & $0.28$ \\
              $5$ &  \cellcolor{highlightcyan} $\nabla r(x_t)$ &  \cellcolor{highlightpink} $10000$ & No &  $0.16_{\pm 0.11}$ & $0.37_{\pm 0.04}$ & $0.44$ & $0.31$ \\
            
            \midrule
             $5$ & \cellcolor{highlightcyan}  $\nabla r(x_t)$ & $10$ & \cellcolor{highlightyellow}  No & $0.18_{\pm 0.12}$ & $0.41_{\pm 0.05}$ & $0.51$ & $0.01$ \\
              $5$ & \cellcolor{highlightcyan} $\nabla r_t(x_t)$ & $10$ & \cellcolor{highlightyellow} Yes & $0.18_{\pm 0.12}$ & $0.41_{\pm 0.06}$ & $0.51$ & $0.01$ \\
            
            \midrule
            $5$ & \cellcolor{highlightcyan} $\nabla r(x_t)$ & $100$  & \cellcolor{highlightyellow} No& $0.18_{\pm 0.12}$ & $0.41_{\pm 0.05}$ & $0.51$ & $0.04$ \\
            $5$ & \cellcolor{highlightcyan} $\nabla r_t(x_t)$ & $100$ & \cellcolor{highlightyellow} Yes & $0.18_{\pm 0.12}$ & $0.41_{\pm 0.06}$ & $0.51$ & $0.01$ \\
            
            \midrule
            $5$ & \cellcolor{highlightcyan} $\nabla r(x_t)$ & $1000$ & \cellcolor{highlightyellow} No & $0.18_{\pm 0.12}$ & $0.41_{\pm 0.04}$ & $0.48$ & $0.28$ \\
            $5$ & \cellcolor{highlightcyan} $\nabla r_t(x_t)$ & $1000$ & \cellcolor{highlightyellow} Yes & $0.18_{\pm 0.12}$ & $0.41_{\pm 0.06}$ & $0.51$ & $0.15$ \\
            
            \midrule
            $5$ & \cellcolor{highlightgreen} $\nabla r(\Phi_{t,1}^\theta(x_t))$ &\cellcolor{highlightpink} $1$ & No & $0.18_{\pm 0.12}$ & $0.41_{\pm 0.05}$ & $0.48$ & $0.26$ \\
            $5$ & \cellcolor{highlightgreen} $\nabla r(\Phi_{t,1}^\theta(x_t))$ &\cellcolor{highlightpink} $10$ & No & $0.19_{\pm 0.12}$ & $0.44_{\pm 0.04}$ & $0.52$ & $0.42$ \\
            $5$ & \cellcolor{highlightgreen} $\nabla r(\Phi_{t,1}^\theta(x_t))$ &\cellcolor{highlightpink} $100$ & No & $0.23_{\pm 0.12}$ & $0.46_{\pm 0.04}$ & $0.53$ & $0.60$ \\
            $5$ & \cellcolor{highlightgreen} $\nabla r(\Phi_{t,1}^\theta(x_t))$ &\cellcolor{highlightpink} $1000$ & No & $0.24_{\pm 0.14}$ & $0.49_{\pm 0.03}$ & $0.55$ & $0.63$ \\
            $5$ & \cellcolor{highlightgreen} $\nabla r(\Phi_{t,1}^\theta(x_t))$ &\cellcolor{highlightpink} $10000$ & No & $0.12_{\pm 0.12}$ & $0.40_{\pm 0.05}$ & $0.49$ & $0.29$ \\
            
            \midrule
            $5$ & \cellcolor{highlightgreen} $\nabla r(\Phi_{t,1}^\theta(x_t))$ & $10$ & \cellcolor{highlightyellow}  No & $0.19_{\pm 0.12}$ & $0.44_{\pm 0.04}$ & $0.52$ & $0.42$ \\
            $5$ & \cellcolor{highlightgreen} $\nabla r_t(\Phi_{t,1}^\theta(x_t))$ & $10$ & \cellcolor{highlightyellow}  Yes & $0.18_{\pm 0.12}$ & $0.41_{\pm 0.06}$ & $0.51$ & $0.11$ \\
            
            \midrule
            $5$ & \cellcolor{highlightgreen} $\nabla r(\Phi_{t,1}^\theta(x_t))$ & $100$ & \cellcolor{highlightyellow} No & $0.23_{\pm 0.12}$ & $0.46_{\pm 0.04}$ & $0.53$ & $0.60$ \\
            $5$ & \cellcolor{highlightgreen} $\nabla r_t(\Phi_{t,1}^\theta(x_t))$ & $100$ & \cellcolor{highlightyellow} Yes & $0.18_{\pm 0.12}$ & $0.42_{\pm 0.06}$ & $0.53$ & $0.36$ \\
            
            \midrule
            $5$ & \cellcolor{highlightgreen} $\nabla r(\Phi_{t,1}^\theta(x_t))$ & $1000$ & \cellcolor{highlightyellow} No & $0.24_{\pm 0.14}$ & $0.49_{\pm 0.03}$ & $0.55$ & $0.63$ \\
            $5$ & \cellcolor{highlightgreen} $\nabla r_t(\Phi_{t,1}^\theta(x_t))$ & $1000$ & \cellcolor{highlightyellow} Yes & $0.20_{\pm 0.13}$ & $0.44_{\pm 0.05}$ & $0.54$ & $0.48$ \\
            \midrule
            $10$ & \cellcolor{highlightgrey} --- & --- & --- & $0.20_{\pm 0.13}$ & $0.45_{\pm 0.04}$ & $0.52$ & --- \\
            
            \midrule
            $10$ & \cellcolor{highlightcyan} $\nabla r(x_t)$ & \cellcolor{highlightpink} $1$ & No & $0.20_{\pm 0.13}$ & $0.45_{\pm 0.04}$ & $0.52$ & $0.05$ \\
            $10$ & \cellcolor{highlightcyan} $\nabla r(x_t)$ & \cellcolor{highlightpink} $10$ & No & $0.20_{\pm 0.13}$ & $0.45_{\pm 0.04}$ & $0.52$ & $0.04$ \\
            $10$ & \cellcolor{highlightcyan} $\nabla r(x_t)$ & \cellcolor{highlightpink} $100$ & No & $0.20_{\pm 0.13}$ & $0.44_{\pm 0.05}$ & $0.52$ & $0.15$ \\
            $10$ & \cellcolor{highlightcyan} $\nabla r(x_t)$ & \cellcolor{highlightpink} $1000$ & No & $0.20_{\pm 0.13}$ & $0.44_{\pm 0.05}$ & $0.56$ & $0.30$ \\
            $10$ & \cellcolor{highlightcyan} $\nabla r(x_t)$ & \cellcolor{highlightpink} $10000$ & No & $0.20_{\pm 0.13}$ & $0.45_{\pm 0.04}$ & $0.52$ & $0.37$ \\
            
            \midrule
            $10$ & \cellcolor{highlightcyan} $\nabla r(x_t)$ & $10$ & \cellcolor{highlightyellow} No & $0.20_{\pm 0.13}$ & $0.45_{\pm 0.04}$ & $0.52$ & $0.04$ \\
            $10$ & \cellcolor{highlightcyan} $\nabla r_t(x_t)$ & $10$ & \cellcolor{highlightyellow} Yes & $0.20_{\pm 0.13}$ & $0.45_{\pm 0.04}$ & $0.52$ & $0.01$ \\
            
            \midrule
            $10$ & \cellcolor{highlightcyan} $\nabla r(x_t)$ & $100$ & \cellcolor{highlightyellow} No & $0.20_{\pm 0.13}$ & $0.44_{\pm 0.05}$ & $0.52$ & $0.15$ \\
            $10$ & \cellcolor{highlightcyan} $\nabla r_t(x_t)$ & $100$ & \cellcolor{highlightyellow} Yes & $0.20_{\pm 0.13}$ & $0.45_{\pm 0.04}$ & $0.52$ & $0.03$ \\
            
            \midrule
            $10$ & \cellcolor{highlightcyan} $\nabla r(x_t)$ & $1000$ & \cellcolor{highlightyellow} No & $0.20_{\pm 0.13}$ & $0.44_{\pm 0.05}$ & $0.56$ & $0.30$ \\
            $10$ & \cellcolor{highlightcyan} $\nabla r_t(x_t)$ & $1000$ & \cellcolor{highlightyellow} Yes & $0.20_{\pm 0.13}$ & $0.45_{\pm 0.04}$ & $0.52$ & $0.22$ \\
            
            \midrule
            $10$ & \cellcolor{highlightgreen} $\nabla r(\Phi_{t,1}^\theta(x_t))$ & \cellcolor{highlightpink} $1$ & No & $0.20_{\pm 0.13}$ & $0.45_{\pm 0.04}$ & $0.52$ & $0.24$ \\
            $10$ & \cellcolor{highlightgreen} $\nabla r(\Phi_{t,1}^\theta(x_t))$ & \cellcolor{highlightpink} $10$ & No & $0.22_{\pm 0.13}$ & $0.45_{\pm 0.05}$ & $0.55$ & $0.43$ \\
            $10$ & \cellcolor{highlightgreen} $\nabla r(\Phi_{t,1}^\theta(x_t))$ & \cellcolor{highlightpink} $100$ & No & $0.25_{\pm 0.13}$ & $0.47_{\pm 0.03}$ & $0.52$ & $0.65$ \\
            $10$ & \cellcolor{highlightgreen} $\nabla r(\Phi_{t,1}^\theta(x_t))$ & \cellcolor{highlightpink} $1000$ & No & $0.26_{\pm 0.14}$ & $0.52_{\pm 0.05}$ & $0.61$ & $0.61$ \\
            $10$ & \cellcolor{highlightgreen} $\nabla r(\Phi_{t,1}^\theta(x_t))$ & \cellcolor{highlightpink} $10000$ & No & $0.27_{\pm 0.16}$ & $0.51_{\pm 0.02}$ & $0.55$ & $0.59$ \\
            
            \midrule
            $10$ & \cellcolor{highlightgreen} $\nabla r(\Phi_{t,1}^\theta(x_t))$ & $10$ & \cellcolor{highlightyellow} No & $0.22_{\pm 0.13}$ & $0.45_{\pm 0.05}$ & $0.55$ & $0.43$ \\
            $10$ & \cellcolor{highlightgreen} $\nabla r_t(\Phi_{t,1}^\theta(x_t))$ & $10$ & \cellcolor{highlightyellow} Yes & $0.20_{\pm 0.13}$ & $0.44_{\pm 0.04}$ & $0.52$ & $0.19$ \\
            
            \midrule
            $10$ & \cellcolor{highlightgreen} $\nabla r(\Phi_{t,1}^\theta(x_t))$ & $100$ & \cellcolor{highlightyellow} No & $0.25_{\pm 0.13}$ & $0.47_{\pm 0.03}$ & $0.52$ & $0.65$ \\ 
            $10$ & \cellcolor{highlightgreen} $\nabla r_t(\Phi_{t,1}^\theta(x_t))$ & $100$ & \cellcolor{highlightyellow} Yes & $0.21_{\pm 0.12}$ & $0.44_{\pm 0.04}$ & $0.52$ & $0.36$ \\ 
             
            \midrule 
            $10$ & \cellcolor{highlightgreen} $\nabla r(\Phi_{t,1}^\theta(x_t))$ & $1000$ & \cellcolor{highlightyellow} No & $0.26_{\pm 0.14}$ & $0.52_{\pm 0.05}$ & $0.61$ & $0.61$ \\ 
            $10$ & \cellcolor{highlightgreen} $\nabla r_t(\Phi_{t,1}^\theta(x_t))$ & $1000$ & \cellcolor{highlightyellow} Yes & $0.23_{\pm 0.13}$ & $0.46_{\pm 0.03}$ & $0.53$ & $0.58$ \\ 
 
        \bottomrule 
    \end{tabular} 
    }
    \label{table:prot_ss_beta_reward_guidance_ablation} 
    \vspace{-.15in}
\end{table}

\newpage

\newpage
\section{Experiment Details}
\label{appx:exp-details}
\subsection{Promoter DNA Design}

We model promoter DNA sequences as continuous, relaxed representations of length $1024$, i.e., arrays in $\mathbb{R}^{1024 \times 4}$ with support restricted to the positive orthant, and interpret them as points on a product of spheres. We learn a time-dependent velocity field on this manifold using the average-velocity parameterization of Riemannian MeanFlow. For $n$-step evaluation, we discretize the time horizon $[0,1]$ into $n$ uniform sub-intervals and apply flow-map inference sequentially over this grid. All experiments are conducted on a single NVIDIA RTX 3090 GPU. Code is available at \url{https://github.com/dongyeop3813/Riemannian-MeanFlow}.

\textbf{Dataset.}
We use the FANTOM5~\citep{hon2017atlas} dataset consisting of $100{,}000$ transcription start sites (TSSs),
following the same preprocessing and train/validation/test splits as prior work~\citep{stark2024dirichlet}
($88{,}470 / 3{,}933 / 7{,}497$).
During training, we apply a random offset of up to $\pm 10$ bp around each TSS, while validation and test
splits use fixed windows.

\textbf{Architecture.}
We adopt the same 1D CNN backbone as in prior promoter models~\citep{stark2024dirichlet,davis2024fisher},
consisting of an initial embedding layer followed by $20$ residual convolutional blocks
(kernel size $9$) with progressively increasing dilation.
The model conditions on two time variables by embedding both the absolute time $s$ and the time gap $(t-s)$
using Gaussian Fourier features, which are concatenated and injected into all time-conditioned layers.
This doubles the time-embedding dimension and results in a modest parameter increase
(13.27M $\rightarrow$ 14.65M).
Depending on the parameterization, the output is either projected onto the tangent space
($v$-prediction) or mapped back to the manifold ($x_1$-prediction).

\textbf{Training and objectives.}
Models are trained for $200$ epochs with batch size $128$ (138{,}400 steps total) using AdamW with learning rate
$10^{-3}$, zero weight decay, and global-norm gradient clipping at $1.0$.
For evaluation, we maintain an exponential moving average (EMA) of the model parameters with decay $0.9999$ to reduce
the variance induced by differential objectives.
For the Eulerian and Lagrangian RMF objectives, we apply adaptive loss weighting with exponent $p=0.5$ and clip
neural-network derivatives with threshold $100.0$; for this task, Lagrangian RMF is trained without the cyclic
consistency loss.
For the semigroup RMF objective, we set $w_{\mathrm{semigroup}}=5.0$ and use adaptive loss weighting ($p=0.5$),
except for the $s=r=t$ (flow-matching) case where adaptive weighting is disabled.
For $x_1$-prediction, we down-weight losses near $s=1$ using a time clipping threshold $\epsilon=0.1$.
The best checkpoint is selected based on validation MSE between the true promoter signal and the signal predicted by
the Sei model conditioned on generated sequences.

\textbf{Time sampling and interpolant.}
We sample time points from a log-normal distribution with parameters $\mu=-0.4$ and $\sigma=1.0$.
For objectives requiring two time points, samples are ordered accordingly, with $75\%$ boundary
samples following~\citet{geng2025mean}.
For the semigroup objective, we sample $(s,t)$ as above and draw the intermediate time $r$ uniformly
from $[s,t]$.
We use a linear geodesic interpolant throughout, following prior work~\citep{davis2024fisher}.

\textbf{Training cost and resources.}
\Cref{tab:training_resources_dna} summarizes the training resources used for the promoter DNA experiments. Across objectives, RMF uses the same single-GPU setup and peak memory as the reproduced Fisher FM baseline, with additional wall-clock time mainly coming from the extra derivative or consistency terms in the RMF objectives.

\begin{table}[!ht]
    \centering
    \caption{
        Training resource summary for promoter DNA design. RMF uses the same single-GPU setup as Fisher FM, with comparable memory usage and modest additional wall-clock time depending on the objective.
    }
    \label{tab:training_resources_dna}
    \begin{tabular}{lccccc}
    \toprule
    Model & Wall-clock time & Optimizer steps & Peak memory (GB) & \# GPUs & GPU \\
    \midrule
    Fisher FM       & 20 hours & 140K & 20 & 1 & RTX 3090 \\
    Eulerian RMF    & 22 hours & 140K & 20 & 1 & RTX 3090 \\
    Lagrangian RMF  & 27 hours & 140K & 20 & 1 & RTX 3090 \\
    Semigroup RMF   & 44 hours & 140K & 20 & 1 & RTX 3090 \\
    \bottomrule
    \end{tabular}
\end{table}

\subsection{Protein Backbone Design}
We provide code for the protein backbone experiments at \url{https://github.com/dongyeop3813/Protein-RMF}.

\label{appx:exp-details-protein}
\textbf{SCOPe dataset.}
The Structural Classification of Proteins---extended (SCOPe)~\citep{chandonia2022scope} organizes protein structures from the Protein Data Bank (PDB)~\citep{berman2000protein,burley2021rcsb} into domains according to structural similarity and evolutionary relationships, providing curated coordinate files and hierarchical labels (e.g., class, fold, superfamily, and family). Following \citet{yim2023fast}, we use experimentally determined single-chain backbones with sequence lengths between $60$ and $128$ residues (3,938 examples) and evaluate on the protein monomer generation setting.

\textbf{Baselines.}
We compare against three prior methods for backbone generation: GENIE~\citep{lin2023generating}, FrameDiff~\citep{pmlr-v202-yim23a}, and FrameFlow~\citep{yim2023fast}.

\textbf{Model architecture.}
We adopt the FrameDiff architecture used in FrameFlow~\citep{yim2023fast}, an SE(3)-equivariant model built around IPA blocks. We report the architecture variants (S/M/L) in \cref{tab:experiment_details_protein}.
    \begin{table}[!ht]
    \centering
    \caption{Architectural differences across IPA models.}
    \label{tab:experiment_details_protein}
    \begin{tabular}{lccccc}
        \toprule
        Model &
        Total \# params &
        \# of blocks &
        Node emb. dim &
        Edge emb. dim &
        Attn. heads \\
        \midrule
        RMF/S  & \phantom{0}16.3 M & 6 & 256 & 128 & 4  \\
        RMF/M  & \phantom{0}91.7 M & 12 & 512 & 128 & 8  \\
        RMF/L & 437.4 M & 16 & 768 & 384 & 12 \\
        \bottomrule
    \end{tabular}
\end{table}
To adapt FrameDiff into a flow-map parameterization, we condition each IPA normalization layer via adaptive layer-normalization (AdaLN) scaling applied to the node embeddings.

\textbf{Training setup.}
For both translations and rotations, we use a linear noise schedule with a geodesic interpolant. Training minimizes a weighted combination of (i) flow-matching losses on translations and rotations, (ii) a semigroup consistency loss, and (iii) auxiliary geometric losses on backbone atom coordinates and local C$\alpha$--C$\alpha$ distances, as commonly used in protein backbone generation~\citep{yim2023fast,pmlr-v202-yim23a, bose2023se}. Auxiliary losses are applied only at late times ($t>0.75$). The final objective uses adaptive loss reweighting with exponent $p=0.5$.
Unless otherwise stated, we set the semigroup loss weight to $1.0$, the translation loss weight to $2.0$, and use $x_1$-prediction down-weighting with $\epsilon=0.1$. We maintain an exponential moving average (EMA) of model parameters with decay $0.9999$ and use the EMA weights for evaluation. We train with AdamW (learning rate $10^{-4}$, no weight decay) for up to $1000$ epochs (up to 1000K optimization steps), with gradient clipping and no learning-rate scheduling.

For flow-map training, we sample time points from the beta--uniform mixture proposed by \citet{geffner2025fullatomproteina}:
$p(t)=0.02\,\mathcal{U}[0,1]+0.98\,\mathcal{B}(1.9,1.0)$.
To form a time interval, we draw two time points i.i.d.\ from $p(t)$ and sort them to obtain $s<t$.
For the intermediate time $r$, we use the midpoint $r=(s+t)/2$.

\textbf{Batching.}
To accommodate variable-length proteins, we construct mini-batches dynamically by enforcing both a maximum batch size of $80$ sequences and a global complexity constraint $\sum_i L_i^2 \le 4 \times 10^5$, where $L_i$ is the length of sequence $i$. This yields stable memory usage across batches.

\textbf{Training cost and resources.}
\Cref{tab:training_resources_protein} summarizes the wall-clock time, optimizer steps, peak memory, number of GPUs, and GPU type used for the protein backbone experiments. RMF/S is trained on NVIDIA RTX 3090 GPUs, while RMF/M and RMF/L are trained on NVIDIA B200 GPUs. Compared with FrameFlow, RMF uses more training compute, but this cost is offset at inference time by achieving strong sample quality with substantially fewer NFEs.

\begin{table}[!ht]
    \centering
    \caption{
        Training resource summary for protein backbone generation. RMF requires additional training compute compared with FrameFlow, but reduces inference cost by supporting high-quality generation with up to $10\times$ fewer NFEs.
    }
    \label{tab:training_resources_protein}
    \begin{tabular}{lccccc}
    \toprule
    Model & Wall-clock time & Optimizer steps & Peak memory (GB) & \# GPUs & GPU \\
    \midrule
    FrameFlow       & 4 days  & 500K  & 21 & 4 & RTX 3090 \\
    RMF/S           & 8 days  & 1.1M  & 21 & 4 & RTX 3090 \\
    RMF/M           & 7 days  & 1.1M  & 70 & 2 & B200 \\
    RMF/L           & 8 days  & 600K  & 56 & 8 & B200 \\
    \bottomrule
    \end{tabular}
\end{table}

\textbf{Low-noise inference techniques with the flow map.}
In protein backbone generation with flow matching or diffusion models, inference-time heuristics are commonly used to improve designability. For example, FrameFlow and FoldFlow~\citep{yim2023fast, bose2023se} apply velocity scaling to the rotational components during sampling. Similarly, Proteina~\citep{geffner2025fullatomproteina} reports that reducing the magnitude of injected noise at each diffusion step can be beneficial. Motivated by these findings, we adopt the low-noise inference technique of \citet{xie2025distilled}. Instead of strictly following the flow map $\Phi^\theta_{s,t}$, we first recover an estimate of the data point and then re-introduce a controlled amount of noise:
\begin{equation}
x_{t} = t\hat{x}_1 + \eta(1-t)\epsilon, \quad
\text{where } \hat{x}_1 = \Phi^\theta_{s,1}(x_s) \text{ and } \epsilon \sim \mathcal{N}(0, I).
\end{equation}
When $\eta=1$, this update resembles DDIM-style inference, whereas $\eta=0$ yields a deterministic path with no added noise. In our experiments, we set $\eta=0.45$ for rotations and $\eta=1.0$ for translations; following \citet{geffner2025fullatomproteina, xie2025distilled}, $\eta=0.45$ provided robust performance. Overall, this heuristic improves designability, with the largest gains in few-step sampling and for smaller model architectures.

\subsection{Reward Guidance Implementation}
For reward-guided inference, we compute the Riemannian gradient by projecting the ambient Euclidean gradient onto the tangent space:
\begin{equation}
    \nabla r(x_t)=\Proj_{x_t}\!\bigl(\bar{\nabla} r(x_t)\bigr), 
    \qquad
    \nabla r\bigl(\Phi^\theta_{t,1}(x_t)\bigr)
    =
    \Proj_{x_t}\Bigl(\bar{\nabla} r\bigl(\Phi^\theta_{t,1}(x_t)\bigr)\Bigr).
\end{equation}
where $\bar{\nabla}$ denotes the Euclidean gradient in the ambient space. We perform a grid search over the number of function evaluations (NFE) and the guidance scale.

\newpage
\section{Additional Results}
\label{appx:additional-results}
\subsection{Empirical Evidence on Training Stabilization Techniques}
\label{appx:empirical_evidence_on_stabilization_techniques}
In this section, we provide empirical evidence for the stabilization techniques introduced in \cref{subsec:stabilization_techniques}. While the effectiveness of adaptive loss weighting was discussed in \cref{subsec:design_choices}, here we focus on two other critical factors: time-sampling distributions and time-derivative control.

\begin{figure}[ht]
    \centering
    % ===== Left column =====
    \begin{minipage}[t]{0.49\linewidth}
        \centering
        \includegraphics[width=\linewidth]{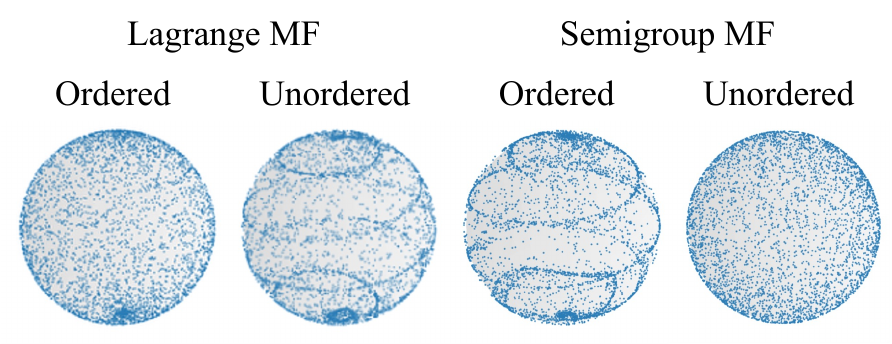}
        \captionof{figure}{
        Effect of time sampling order. Comparison of Lagrangian (left) and Semigroup (right) objectives. Lagrangian RMF requires unordered intervals for convergence, whereas Semigroup RMF becomes unstable when trained with unordered intervals.
        }
        \label{fig:effect_of_time_order}
    \end{minipage}\hfill
    % ===== Right column =====
    \begin{minipage}[t]{0.49\linewidth}
        \centering
        \begin{minipage}[t]{0.50\linewidth}
            \centering
            \includegraphics[width=\linewidth]{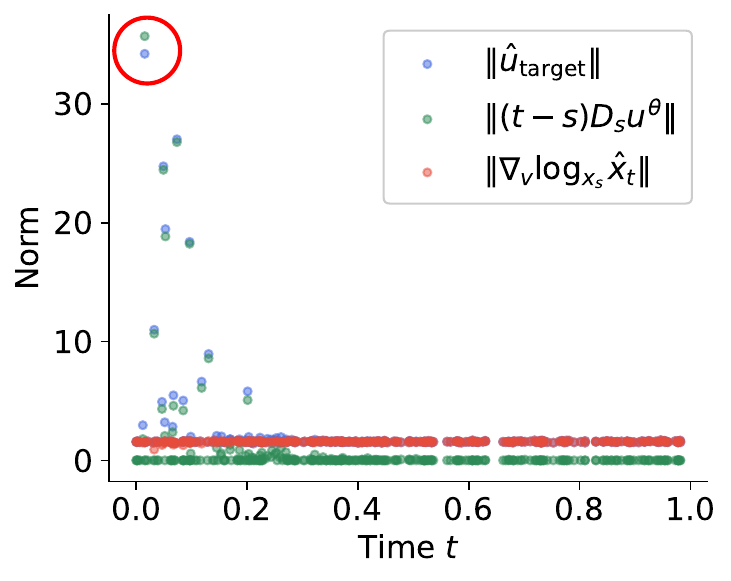}
        \end{minipage}\hfill
        \begin{minipage}[t]{0.50\linewidth}
            \centering
            \includegraphics[width=\linewidth]{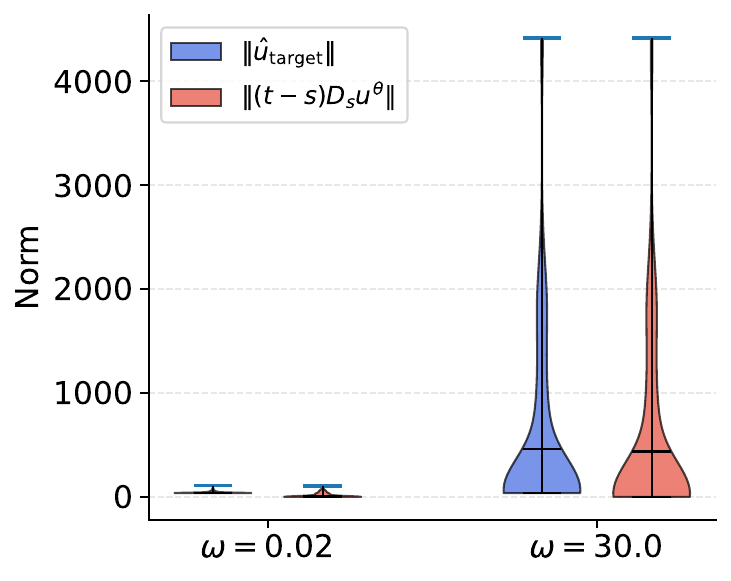}
        \end{minipage}

        \captionof{figure}{
            Training instability of Eulerian and Lagrangian RMF:
            (Left) Variance of the Eulerian RMF regression target.
            (Right) Reducing the Fourier frequency to $\omega=0.02$
            lowers the regression-target variance.
        }
        \label{fig:training_stability_of_EMF_LMF}
    \end{minipage}
\end{figure}

\textbf{Effect of time sampling distributions.}
As illustrated in \cref{fig:effect_of_time_order}, the choice of time-sampling distribution is critical for the the stability and convergence of the learned flow map. While Eulerian MF remains largely insensitive to the temporal ordering of samples—typically using ordered pairs $s \le t$, which cover half of the unit square—the other two objectives exhibit strict and contrasting requirements:

\begin{enumerate}[leftmargin=*]
    \item \textbf{Lagrangian MF} necessitates unordered time intervals. Training exclusively with ordered pairs ($s < t$) fails to capture the full dynamics, resulting in failure to converge to a valid flow map. This suggests that the Lagrangian formulation relies on the bidirectional information provided by sampling both $s < t$ and $t < s$ to regularize the path.

    \item \textbf{Semigroup MF} shows the opposite behavior, where stability is tied to the sequential structure of the triplets. When trained with unordered intervals, the objective becomes highly unstable. The inclusion of an intermediate time $r$ ($s < r < t$) reinforces the compositionality of the flow, and departing from this ordered structure leads to significant performance degradation observed in our sphere visualizations.
\end{enumerate}

\textbf{Time-derivative control.} 
Differential objectives, such as the Eulerian and Lagrangian formulations, are particularly susceptible to instability arising from time-derivative terms. \Cref{fig:training_stability_of_EMF_LMF} (left) elucidates this phenomenon: the norm of the regression target explodes at certain time steps (highlighted by the red circle). This instability stems from the uncontrolled magnitude of the neural network's time derivative, $D_s u^\theta_{s,t}$, which significantly destabilizes the optimization process.

To mitigate this, we bound the derivative magnitude by adjusting the Fourier time embeddings. Since the derivative of a periodic embedding $\frac{d}{dt}\sin(\omega t) = \omega \cos(\omega t)$ scales linearly with the frequency $\omega$, using high frequencies (e.g., $\omega=30$) leads to high-variance regression targets. By adopting a lower frequency (e.g., $\omega=0.02$), we effectively stabilize the training process. As shown in \cref{fig:training_stability_of_EMF_LMF} (right), this modification drastically reduces the variance of the regression target, a finding that mirrors observations in consistency model training~\citep{lu2024simplifying} and proves essential for robust flow map learning.

\textbf{Ablation on training stabilization techniques.}
We ablate the stabilization techniques on the promoter DNA task using the Eulerian RMF objective with $x_1$-prediction and one-step sampling. \Cref{tab:stabilization_ablation} removes the proposed components cumulatively from the full training recipe, complementing the identity and parameterization ablations in \cref{fig:objective-choices,fig:parameterization-choices}. Removing the additional weighting near $s=1$ causes only a small degradation, increasing MSE from $0.030$ to $0.034$ while preserving the $k$-mer correlation at $0.90$. In contrast, removing adaptive loss weighting has a larger effect, worsening MSE to $0.041$ and reducing the $k$-mer correlation to $0.84$. The largest degradation occurs when time-embedding frequency control is removed: MSE increases to $0.058$ and the $k$-mer correlation collapses to $-0.12$. These results are consistent with the analysis above: differential objectives produce high-variance regression targets, and both adaptive loss weighting and frequency control are needed to control this variance in high-dimensional sequence generation.

\textbf{Hyperparameter sensitivity analysis on training stabilization.}
We further evaluate whether the method is sensitive to the precise hyperparameter values used by these stabilization mechanisms. As shown in \cref{tab:hyperparam_sensitivity}, all tested settings converge successfully across a range of adaptive weighting powers $p$, time-embedding frequencies $\omega$, and clipping thresholds $\epsilon$, as long as the corresponding stabilization mechanism is present. Varying $p$ from the default $0.5$ to $0.25$ or $0.75$ yields similar MSE values ($0.032$--$0.033$) and maintains high $k$-mer correlation. Likewise, low-frequency time embeddings with $\omega \in \{0.01,0.1,0.2\}$ all match or slightly improve the default setting. The clipping ablation shows a mild but consistent benefit from explicit clipping: removing clipping keeps the MSE competitive but reduces the $k$-mer correlation from $0.90$ to $0.74$. Importantly, we use the same defaults, $p=0.5$, $\omega=0.02$, and $\epsilon=0.1$, across the spherical helix, DNA, and protein experiments without task-specific tuning. Overall, RMF training is more sensitive to the presence of the stabilization principles than to fine tuning their exact hyperparameters.

\begin{table}[ht]
    \centering
    \caption{
        Ablation study of stabilization techniques for Eulerian RMF on the DNA promoter dataset (NFE=1). Each row additionally removes one component from the row above. Both adaptive loss weighting and frequency control are essential for stable training.
    }
    \label{tab:stabilization_ablation}
    \begin{tabular}{lcc}
    \toprule
    Configuration & MSE ($\downarrow$) & $k$-mer corr.\ ($\uparrow$) \\
    \midrule
    \rowcolor{gray!12}
    Full & \textbf{0.030} & \textbf{0.90} \\
    \quad $-$ weighting near $s{=}1$              & 0.034 & 0.90 \\
    \quad $-$ adaptive loss weighting              & 0.041 & 0.84 \\
    \quad $-$ time-emb.\ frequency control         & 0.058 & $-$0.12 \\
    \bottomrule
    \end{tabular}
\end{table}

\begin{table}[ht]
    \centering
    \caption{
        Sensitivity analysis of training stabilization hyperparameters for Eulerian RMF on the DNA promoter dataset.
        We vary $p \in \{0.25, 0.5, 0.75\}$, $\omega \in \{0.01, 0.02, 0.1, 0.2\}$, and $\epsilon \in \{0.05, 0.1, \text{none}\}$.
        Given that each stabilization technique is enabled, performance is robust to the specific hyperparameter values.
    }
    \label{tab:hyperparam_sensitivity}
    \begin{tabular}{lcc}
    \toprule
    Configuration & MSE ($\downarrow$) & $k$-mer corr.\ ($\uparrow$) \\
    \midrule
    \rowcolor{gray!18}
    Default ($p{=}0.5$, $\omega{=}0.02$, $\epsilon{=}0.1$) & \textbf{0.030} & 0.90 \\
    \midrule
    \rowcolor{gray!12}
    \multicolumn{3}{l}{Varying $p$ (Adaptive loss weight)} \\
    \quad $p{=}0.25$          & 0.032 & 0.87 \\
    \quad $p{=}0.75$          & 0.033 & 0.85 \\
    \midrule
    \rowcolor{gray!12}
    \multicolumn{3}{l}{Varying $\omega$ (Time-embedding frequency)} \\
    \quad $\omega{=}0.01$     & 0.029 & 0.92 \\
    \quad $\omega{=}0.1$      & 0.031 & 0.92 \\
    \quad $\omega{=}0.2$      & 0.031 & 0.93 \\
    \midrule
    \rowcolor{gray!12}
    \multicolumn{3}{l}{Varying $\epsilon$ (Clipping threshold)} \\
    \quad $\epsilon{=}0.05$   & 0.032 & 0.92 \\
    \quad no-clipping          & 0.034 & 0.74 \\
    \bottomrule
    \end{tabular}
\end{table}

\subsection{Effect of Cycle-Consistency Regularizer on Lagrangian Objective}
In this section, we study when the cycle-consistency regularizer is useful for the Lagrangian RMF objective. As discussed in \cref{subsec:objectives}, the Lagrangian objective evaluates the regression loss at a model-predicted input $\hat{x}_s = \Phi_{t,s}^\theta(x_t)$. Therefore, if the learned forward and backward flow maps are not approximately inverse to each other, errors in the predicted input can propagate directly into the regression target. The cycle-consistency regularizer is designed to mitigate this issue by encouraging weak invertibility of the learned flow map:
\begin{equation}
    \mathcal{L}_{\mathrm{cyc}}(\theta)
    = \mathbb{E}_{x_t,s,t}\bigl[d_g(\Phi_{s,t}^\theta(\Phi_{t,s}^\theta(x_t)), x_t)^2\bigr].
\end{equation}

We compare Lagrangian RMF trained \emph{with} and \emph{without} this regularizer while keeping all other training configurations fixed. We first consider the spherical helix benchmark introduced in \cref{subsec:design_choices}, where the data lie on a low-dimensional manifold embedded in a high-dimensional ambient space. As shown in \cref{fig:effect_of_cyclic_regularization}, cycle consistency improves the learned transport in this setting: without the regularizer, accumulated invertibility errors lead to visibly poorer samples, whereas the regularized model better preserves the target helical structure. This suggests that cycle consistency can be important when small flow-map errors directly affect the geometry of the generated samples.

We then repeat the same ablation on the promoter DNA generation task. In contrast to the spherical helix result, \cref{tab:cycle_consistency_ablation} shows that removing the cycle-consistency regularizer does not degrade 1-NFE generation quality on DNA, yielding comparable MSE and $k$-mer correlation to the regularized model. We attribute this robustness to the final $\arg\max$ discretization step, which maps continuous model outputs to discrete nucleotide sequences and can absorb small continuous-space invertibility errors before evaluation.

Taken together, these results indicate that the cycle-consistency regularizer is a task-dependent stabilization mechanism rather than a universally required component. It is beneficial when generation quality is sensitive to geometric errors in the continuous flow map, as in the high-dimensional spherical helix benchmark. However, for tasks such as DNA sequence generation, where the output is ultimately discretized and the evaluation metrics are less sensitive to small continuous deviations, the regularizer may be unnecessary.

\begin{table}[!ht]
    \centering
    \caption{
        1-NFE performance of Lagrangian RMF on promoter DNA generation with and without cycle-consistency regularization. Removing cycle-consistency does not degrade performance, which we attribute to the argmax discretization applied after generation absorbing minor invertibility errors.
    }
    \label{tab:cycle_consistency_ablation}
    \begin{tabular}{lcc}
    \toprule
    Method & MSE ($\downarrow$) & $k$-mer corr.\ ($\uparrow$) \\
    \midrule
    With cycle-consistency    & 0.028 & 0.85 \\
    Without cycle-consistency & 0.027 & 0.88 \\
    \bottomrule
    \end{tabular}
\end{table}

\begin{figure}[ht]
\centering
\begin{minipage}[t]{0.6\linewidth}
  \centering
  \captionsetup{skip=2pt}
  \captionsetup[subfigure]{skip=1pt}

  \begin{subfigure}[t]{0.48\linewidth}
    \centering
    \includegraphics[width=\linewidth]{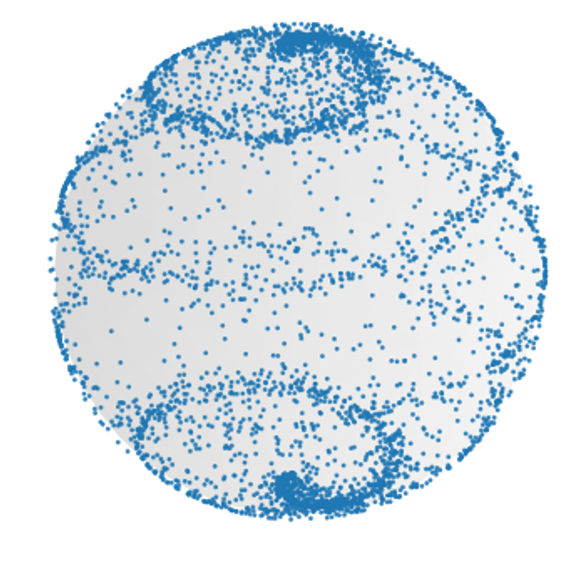}
    \caption{With cyclic consistency}
  \end{subfigure}\hfill
  \begin{subfigure}[t]{0.48\linewidth}
    \centering
    \includegraphics[width=\linewidth]{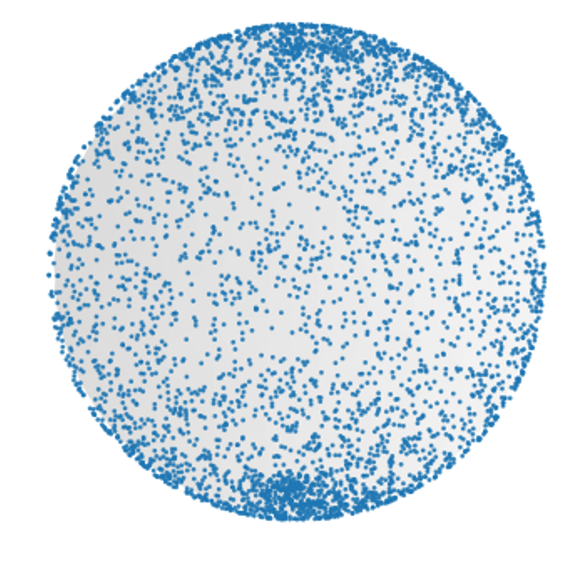}
    \caption{Without consistency}
  \end{subfigure}

  \parbox[t]{\linewidth}{
    \captionof{figure}{
    Effect of cyclic consistency regularization.
    Samples generated by Lagrangian RMF with and without the proposed
    cycle-consistency regularizer on a high-dimensional spherical helix dataset ($D=512$).
    }
    \label{fig:effect_of_cyclic_regularization}
  }
\end{minipage}\hfill
\begin{minipage}[t]{0.38\linewidth}
  \centering
  \captionsetup{skip=2pt}
  \includegraphics[width=\linewidth]{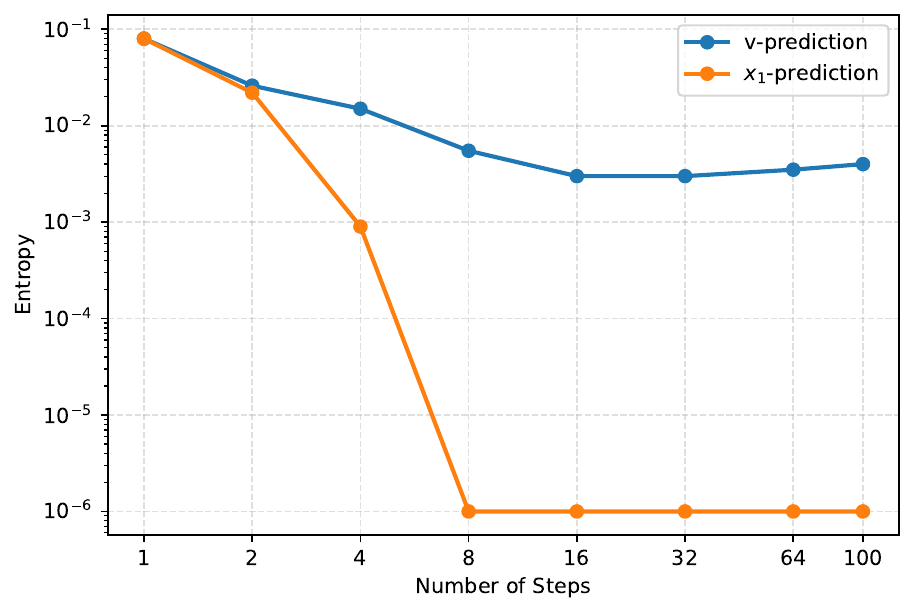}
  \parbox[t]{\linewidth}{
    % \vspace{0.1\baselineskip}
    \captionof{figure}{
    Inference steps vs. entropy.
    $x_1$-prediction achieves a significantly lower entropy compared to $v$-prediction.
    }
    \label{fig:DNA_step_vs_entropy}
  }
\end{minipage}
\end{figure}

\subsection{Parametrization Comparison in Promoter DNA Design}
\label{appx:dna_parametrization_comparison_with_entropy}
While $x_1$- and $v$-prediction achieve comparable standard metrics (e.g., MSE and $k$-mer correlation), they induce markedly different distributional sharpness. As shown in \cref{fig:DNA_step_vs_entropy}, $x_1$-prediction attains near-zero entropy (on the order of $10^{-6}$), whereas $v$-prediction saturates at a substantially higher level (approximately $3\times 10^{-3}$), indicating that $x_1$-prediction produces outputs closer to one-hot distributions.

In discrete sequence design, this difference is largely collapsed by the final $\arg\max$ discretization step and can therefore be hidden by downstream metrics. However, such increased sharpness may be beneficial in settings where small distributional differences directly affect geometry or structure, for example, protein backbone modeling.

\newpage
\subsection{Additional FoldFlow Comparison on Protein Backbone Generation}
\label{appx:protein_foldflow}
We additionally compare against FoldFlow~\citep{bose2023se}.
FoldFlow was not included in the main comparison because its released checkpoint was trained on PDB, whereas our protein backbone experiments use the SCOPe setting.
To make the comparison more complete, \cref{tab:appx_rebuttal_foldflow} reports results for both the released FoldFlow PDB checkpoint and a FoldFlow model retrained on SCOPe, evaluated using the same self-consistency pipeline as the main protein backbone experiments.

The results show a clear distinction between many-step quality and few-step efficiency.
At 100 NFE, FoldFlow achieves high designability: the PDB checkpoint attains the highest designable fraction and lowest $\mathtt{scRMSD}$ in this expanded comparison, while the SCOPe-retrained model is comparable to FrameFlow and RMF in designability.
However, this performance does not transfer to the few-step regime.
At 10 NFE and 5 NFE, both FoldFlow checkpoints produce no designable samples under the $\mathtt{scRMSD}<2\text{\AA}$ criterion, whereas RMF remains designable for 87\% of samples at 10 NFE and 82\% at 5 NFE.
RMF also retains nontrivial one-step generation quality, with 35\% designability at 1 NFE.

We further examine the secondary-structure distribution of the generated backbones in \cref{tab:secondary_structure}.
Although FoldFlow attains strong self-consistency metrics at 100 NFE, both checkpoints exhibit a pronounced secondary-structure bias, generating approximately 90\% $\alpha$-helices and essentially no $\beta$-sheets.
This is reflected in the low MaxCluster diversity scores in \cref{tab:appx_rebuttal_foldflow}.
By contrast, RMF and FrameFlow produce secondary-structure compositions much closer to the PDB reference distribution.
\begin{table}[!ht]
    \centering
    \caption{
        Protein backbone generation results. Rows are grouped by inference regime (NFE). We highlight in \textbf{bold} the best designability ($<$2\AA) within each regime, our primary metric. We mark not applicable (N/A) when no designable samples are generated or when metrics are not reported in prior work.
    }
    \label{tab:appx_rebuttal_foldflow}
    \resizebox{.55\linewidth}{!}{
        \begin{tabular}{lcccccc}
        \toprule
        \multirow{3}{*}{Model} & \multirow{3}{*}{NFE} &
        \multicolumn{2}{c}{Designability} &
        \multicolumn{2}{c}{Diversity} & Novelty \\
        \cmidrule(lr){3-4}\cmidrule(lr){5-6}\cmidrule(lr){7-7}
        & & $<$2\AA & $\mathtt{scRMSD}$ & Max. & Pairwise & Max. \\
        & & ($\uparrow$) & ($\downarrow$) & Cluster ($\uparrow$) & $\mathtt{scTM}$ ($\downarrow$) & $\mathtt{scTM}$ ($\downarrow$) \\
        \midrule

        % =========================
        % Many-step
        % =========================
        \rowcolor{gray!12}
        \multicolumn{7}{l}{Many-step regime ($\text{NFE}\ge 100$)} \\
        \midrule

        % FrameDiff (grouped; NFE high -> low)
        FrameDiff
        & 100  & 0.74 & 1.78 & 1.74 & 0.34 & 0.51 \\

        % FrameFlow (single row)
        FrameFlow
        & 100  & 0.93 & 1.16 & 0.41 & 0.30 & 0.77 \\

        FoldFlow (PDB)
        & 100 & \textbf{0.98} & \textbf{0.64} & 0.16 & 0.41 & 0.39 \\

        FoldFlow (SCOPE)
        & 100 & 0.94 & 1.04 & 0.19 & 0.43 & 0.64 \\
        
        % RMF (single row; best in regime is bolded)
        \textbf{RMF (Ours)}
        & 100  & 0.94 & 1.01 & 0.55 & 0.27 & 0.89 \\

        \midrule

        % =========================
        % Moderate
        % =========================
        \rowcolor{gray!12}
        \multicolumn{7}{l}{Moderate regime ($10 \le \text{NFE} < 100$)} \\
        \midrule
        FrameDiff
        & 10 & 0.47 & 3.32 & 0.42 & 0.28 & 0.52 \\
        FrameFlow
        & 10 & 0.61 & 2.34 & 0.54 & 0.26 & 0.67 \\

        FoldFlow (PDB)
        & 10 & 0.00 & 8.19 & N/A & N/A & N/A \\

        FoldFlow (SCOPE)
        & 10 & 0.00 & 10.45 & N/A & N/A & N/A \\

        \textbf{RMF (Ours)}
        & 10 & \textbf{0.87} & \textbf{1.25} & 0.55 & 0.27 & 0.87 \\

        \midrule

        % =========================
        % Few-step
        % =========================
        \rowcolor{gray!12}
        \multicolumn{7}{l}{Few-step regime ($\text{NFE}<10$)} \\
        \midrule

        % FrameFlow (grouped; NFE high -> low)
        FrameFlow & 5 & 0.04 & 6.53 & 0.68 & 0.22 & 0.74 \\
    
        % FrameDiff (single row)
        FrameDiff
        & 5 & 0.09 & 6.19 & 0.54 & 0.24 & 0.96 \\

        FoldFlow (PDB)
        & 5 & 0.00 & 57.33 & N/A & N/A & N/A \\

        FoldFlow (SCOPE)
        & 5 & 0.00 & N/A & N/A & N/A & N/A \\

        % RMF (grouped; NFE high -> low; best in regime is bolded)
        \textbf{RMF (Ours)}
        & 5 & \textbf{0.82} & \textbf{1.54} & 0.54 & 0.27 & 0.85 \\
        \midrule
        FrameFlow & 2 & 0.00 & N/A & N/A & N/A & N/A \\
        \textbf{RMF (Ours)} & 1 & \textbf{0.35} & {3.33} & {0.60} & 0.24 & 0.76 \\
        \bottomrule
        \end{tabular}
    }
    \vspace{-.05in}
\end{table}

\begin{table}[!ht]
    \centering
    \caption{
        Secondary structure composition of generated protein backbones compared to the PDB reference distribution. FoldFlow (both PDB and SCOPE checkpoints) generates approximately 90\% $\alpha$-helices and 0\% $\beta$-sheets, indicating severe mode collapse. FrameFlow and RMF produce secondary structure distributions close to the PDB reference.
    }
    \label{tab:secondary_structure}
    \resizebox{.65\linewidth}{!}{
    \begin{tabular}{l*{5}{c}}
    \toprule
    & \multicolumn{2}{c}{FoldFlow} & & & \\
    \cmidrule(lr){2-3}
    Secondary structure & PDB & SCOPE & FrameFlow & RMF & PDB ref. \\
    \midrule
    $\alpha$-helix \%  & \cellcolor{red!20}{90} & \cellcolor{red!20}{89} & \cellcolor{green!20}{33} & \cellcolor{green!20}{32} & 34 \\
    $\beta$-sheet \%   & \cellcolor{red!20}{0}  & \cellcolor{red!20}{0}  & \cellcolor{green!20}{27} & \cellcolor{green!20}{28} & 26 \\
    Coil \%            & \cellcolor{red!20}{10} & \cellcolor{red!20}{10} & \cellcolor{green!20}{40} & \cellcolor{green!20}{40} & 40 \\
    \bottomrule
    \end{tabular}
    }
\end{table}

\newpage
\subsection{Ablations on Protein Backbone Experiments}

\textbf{Adaptive loss weighting.}
We study the effect of adaptive loss weighting in the protein backbone task, keeping all model architectures, training schedules, and optimization hyperparameters fixed.
All results in \cref{fig:loss_weighting} are reported for the L model without inference-time techniques.
In \cref{fig:loss_weighting}, we compare $p=0.0$ (no weighting) and $p=0.5$.
Across inference step counts, $p=0.5$ consistently yields lower $\mathtt{scRMSD}$, indicating that adaptive weighting improves generation quality.

\textbf{Model scaling without inference tricks.}
To isolate the contribution of inference-time techniques and examine the effect of model scaling, we repeat the protein backbone evaluation without applying any inference-time heuristics and report the results in \cref{fig:protein_method_without_inference_technique}.

\begin{figure*}[ht]
    \centering
    \begin{subfigure}[t]{0.45\linewidth}
        \centering
        \includegraphics[width=\linewidth]{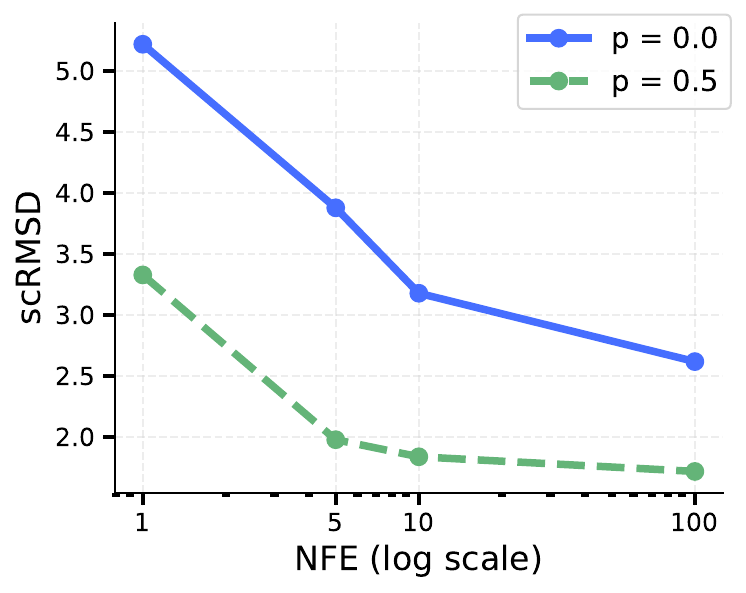}
        \caption{Effect of adaptive loss weighting on generation quality.}
        \label{fig:loss_weighting}
    \end{subfigure}
    \hfill
    \begin{subfigure}[t]{0.45\linewidth}
        \centering
        \includegraphics[width=\linewidth]{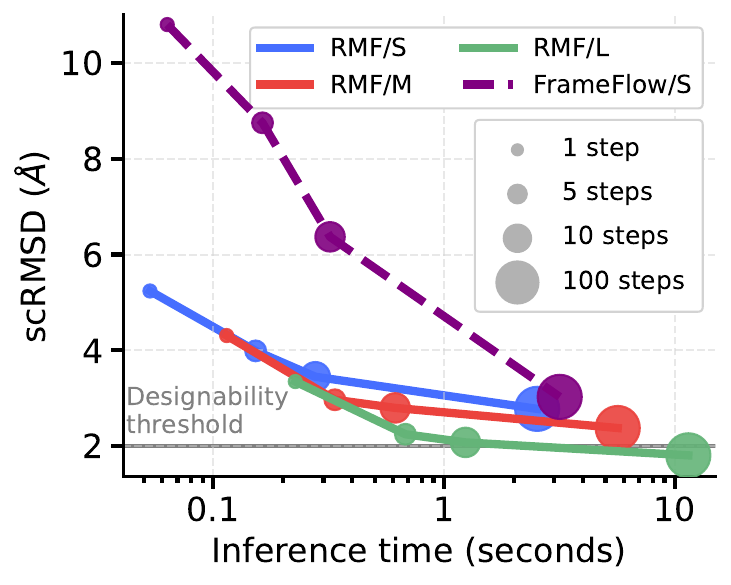}
        \caption{Generation quality vs. inference time without inference-time techniques.}
        \label{fig:protein_method_without_inference_technique}
    \end{subfigure}
    \caption{
        \textbf{Ablation on protein backbone generation.}
        (a) Adaptive loss weighting improves  $\mathtt{scRMSD}$ across inference steps.
        (b) Removing inference-time heuristics reveals Pareto improvements from model scaling.
    }
    \label{fig:adaptive_and_inference}
\end{figure*}

Without inference tricks (e.g., inference-time rotation velocity scaling), the FrameFlow baseline exhibits a substantial performance drop, achieving only 40\% designability with an scRMSD of 3.03 even at 100 inference steps.
Under the same setting and model size, our flow-map-based model shows clear improvements at few-step regimes (1, 5, and 10 steps), but its multi-step performance remains limited without scaling.

This observation suggests that strong multi-step performance is crucial for effective few-step generation.
Motivated by this, we scale the model to the L variant, which significantly improves multi-step sampling, achieving an $\mathtt{scRMSD}$ of 1.59 and 77\% designability at 100 steps.
As the multi-step performance improves, few-step generation quality also improves accordingly, reaching $\mathtt{scRMSD}$ values of 1.98 and 1.84 at 5 and 10 steps, respectively.
Overall, evaluating all methods without inference-time tricks makes the impact of model scaling more apparent. As shown in \cref{fig:protein_method_without_inference_technique}, increasing model capacity yields consistent improvements in both generation quality and designability across inference budgets, yielding a clear Pareto improvement. A promising direction for future work is to incorporate the effect of inference-time heuristics directly into the learned dynamics (or training objective), so that their benefits do not rely on sampling-time adjustments. Notably, in the Euclidean setting, MeanFlow~\citep{geng2025mean} has explored integrating classifier guidance into the model dynamics; analogously, one could aim to internalize common inference-time tricks and potentially accelerate the resulting guided dynamics.

\newpage
\subsection{Qualitative Results on Protein Backbone Generation}
In \cref{fig:protein_visualization_all}, we visualize the protein backbones generated by our model across different inference steps and protein lengths. All samples were generated using the inference trick with a eta value of 0.45.

\begin{figure*}[h]
    \centering
    \includegraphics[width=.88\linewidth]{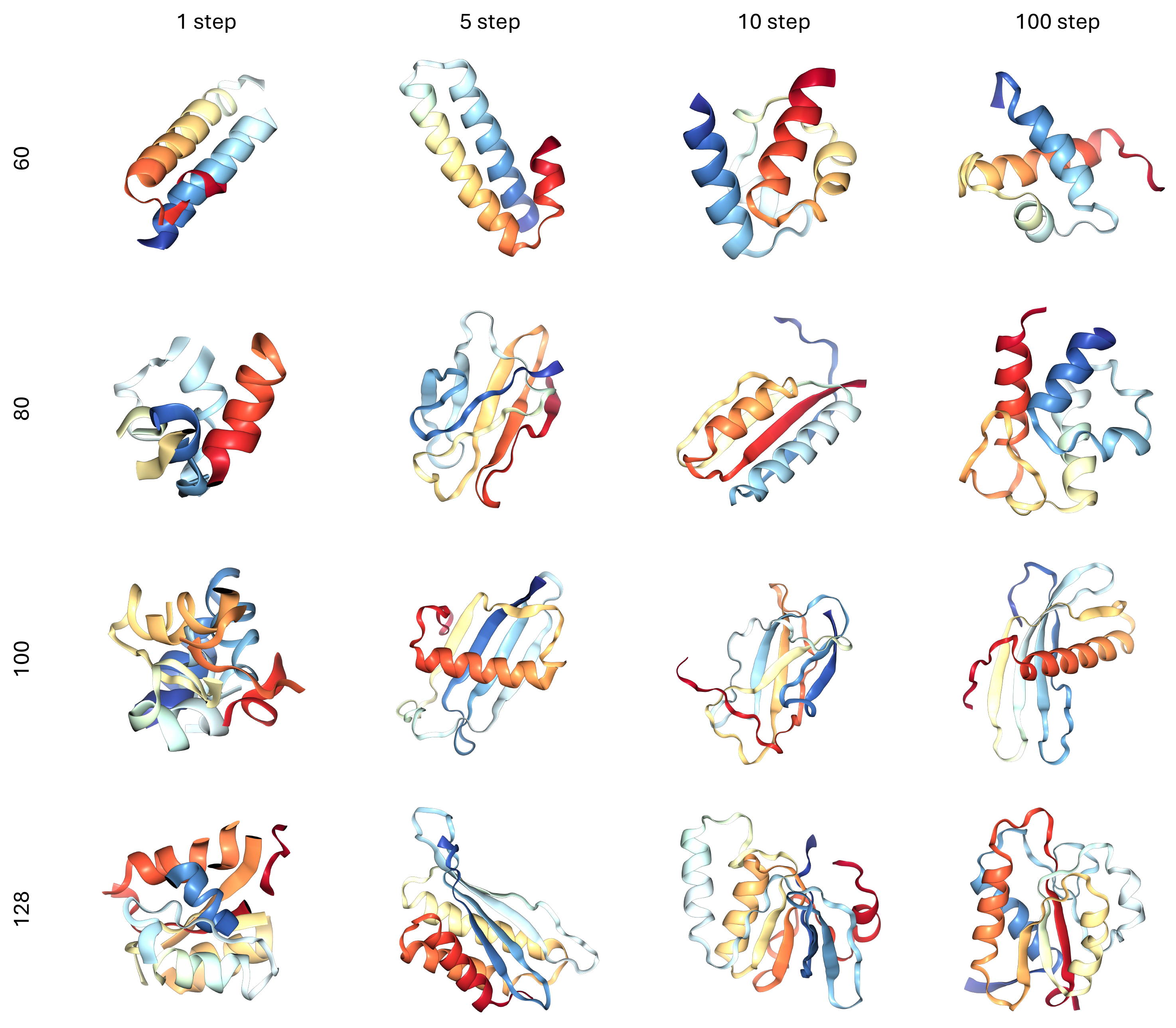}
    \caption{
        Visualization of generated protein backbones. We visualize the generated protein backbone from our model for steps of \{1, 5, 10, 100\} and sequence length \{60, 80, 100, 128\}.
    }
    \label{fig:protein_visualization_all}
\end{figure*}

% \section{You \emph{can} have an appendix here.}

% You can have as much text here as you want. The main body must be at most $8$
% pages long. For the final version, one more page can be added. If you want, you
% can use an appendix like this one.

% The $\mathtt{\backslash onecolumn}$ command above can be kept in place if you
% prefer a one-column appendix, or can be removed if you prefer a two-column
% appendix.  Apart from this possible change, the style (font size, spacing,
% margins, page numbering, etc.) should be kept the same as the main body.
%%%%%%%%%%%%%%%%%%%%%%%%%%%%%%%%%%%%%%%%%%%%%%%%%%%%%%%%%%%%%%%%%%%%%%%%%%%%%%%
%%%%%%%%%%%%%%%%%%%%%%%%%%%%%%%%%%%%%%%%%%%%%%%%%%%%%%%%%%%%%%%%%%%%%%%%%%%%%%%

\end{document}